\def\eqref#1{equation~\ref{#1}}
\def\1{\bm{1}}
\DeclareMathAlphabet{\mathsfit}{\encodingdefault}{\sfdefault}{m}{sl}
\SetMathAlphabet{\mathsfit}{bold}{\encodingdefault}{\sfdefault}{bx}{n}
\DeclareMathOperator*{\argmax}{arg\,max}
\renewcommand{\emph}[1]{{\it #1}}
\renewcommand{\ss}{\mathbf{s}}
\newcommand{\ff}{f}
\newcommand{\ssp}{\mathbf{s'}}
\newtheorem{thm}{Theorem}
\newtheorem{lem}{Lemma}
\title{Catalytic Role Of Noise And Necessity Of Inductive Biases In The Emergence Of Compositional Communication}
\author{
  Łukasz Kuciński\thanks{Corresponding author.} \\
  Polish Academy of Sciences \\
  \texttt{lkucinski@impan.pl} \\
  \And
  Tomasz Korbak\\
  University of Sussex\\
   \texttt{tomasz.korbak@gmail.com} \\
  \AND
  Paweł Kołodziej\\
  Polish Academy of Sciences\thanks{Now at Google.} \\
 \texttt{p.kolodziej@gmail.com} \\
  \And
  Piotr Miłoś \\
  Polish Academy of Sciences, \\
  University of Oxford,
  \\
  deepsense.ai\\
  \texttt{pmilos@impan.pl} \\
}
\begin{document}
\maketitle

\begin{abstract}
Communication is compositional if complex signals can be represented as a combination of simpler subparts. In this paper, we theoretically show that inductive biases on both the training framework and the data are needed to develop a compositional communication.  Moreover, we prove that compositionality spontaneously arises in the signaling games, where agents communicate over a \emph{noisy channel}.  We experimentally confirm that a range of noise levels, which depends on the model and the data, indeed promotes compositionality. Finally, we provide a comprehensive study of this dependence and report results in terms of recently studied compositionality metrics: topographical similarity, conflict count, and context independence.
\end{abstract}

\section{Introduction}
In emergent communication studies, one often considers agents who can share information about a set of objects described by the common features. Such a situation is common in multi-agent systems with partial observation (\cite{foerster_learning_2016}, \cite{lazaridou_multi-agent_2016}, \citet{jaques_social_2018}, \citet{raczaszek-leonardi_language_2018}) and it is the major theme in signaling games (\citet{fudenberg1991game}, \citet{lewis_convention:_1969}, \citet{skyrms_signals:_2010}, \citet{lazaridou_emergence_2018}).  In a signaling game, one agent (a sender) conveys information about an object to another agent (a receiver), which then has to infer the object's features. Typically, agents are rewarded if some of the features are correctly identified. During this process, the agents develop a communication protocol. A recent line of work has studied conditions under which compositionality emerges (\citet{batali1998computational,kottur_natural_2017,choi_compositional_2018,korbak2019developmentally,li_ease--teaching_2019,slowik2020exploring,slowik2020towards,guo2020inductive}).

Compositionality is a crucial feature of natural languages and it has been investigated extensively in cognitive science (see e.g. \citet{chomsky_syntactic_1957} \citet{fodor_connectionism_1988}).
It is often measured using dedicated metrics such as topographic similarity (\citet{brighton_understanding_2006,lazaridou_emergence_2018,kriegeskorte_representational_2008,bouchacourt_how_2018}), context independence \citet{bogin_emergence_2018}, conflict count \citet{kucinski2020emergence}, or positional disentanglement (\citet{chaabouni_compositionality_2020}).
In signaling games it bears a strong resemblance to the concept of disentangled representations, see  
(\citet{HigginsMPBGBML17}, \citet{kim2018disentangling}, \citet{locatello2019challenging}). 
In  machine learning context, compositionality is perceived as a generalization mechanism (\citet{lake_building_2016}) and has been used e.g. for goal composition (\citet{jiang2019language}) or knowledge transfer (\citet{li_ease--teaching_2019}). 

In this paper, we theoretically show that inductive biases on both the training framework and the data are needed for compositionality to emerge.
A similar observation has been made by \citet{kottur_natural_2017}; however, our result is more fundamental and points out a common misconception that compositionality can be learned in a purely unsupervised way. 
Such a result can be perceived as a discrete analog of \citet{locatello2019challenging}, applicable in the communication context. 
 
We then prove that adding an inductive bias in the loss function coupled with communication over a noisy channel leads to the spontaneous emergence of compositionality. 
This shows the catalytic role of noise in this process.
Intuitively, this can be attributed to the (partial) robustness of compositional language with respect to message corruption caused by a noisy channel.

We experimentally verify that a certain range of noise levels, dependent on the model and the data, promotes compositionality.  We provide a wide range of experiments that illustrate the influence of different priors. For the inductive biases in the training framework, we look into the impact of the network architecture as well as implementation and temporal variation in noise. On the data side, we study the effect of scrambling visual input or its description. We also study the generalization properties of the proposed training framework. 

\vspace{-.5em}
\section{Related work}\label{sec:related_work}

The topic of communication is actively studied in multi-agent RL, see \citet[Table 2]{Hernandez-LealK20} for a recent survey. Compositionality is often investigated in the context of signaling games (\citet{fudenberg1991game}, \citet{lewis_convention:_1969}, \citet{skyrms_signals:_2010}, \citet{lazaridou_emergence_2018}). 
Recent research has shown that strong inductive biases or grounding of communication protocols are necessary for the  protocol to be compositional
(see e.g. \citet{kottur_natural_2017}, \citet{slowik2020exploring}). The inductive bias can be imposed into the architecture of the agents or the training procedure. For instance,
\citet{das_learning_2017} place pressure on agents, to use symbols consistently across varying contexts, by a frequent reset of the agent's memory. A model-based approach was proposed by \citet{choi_compositional_2018} and \citet{bogin_emergence_2018}, who build upon the obverter algorithm 
(\citet{oliphant_learning_1997}, \citet{batali1998computational}).
\citet{slowik2020towards} explore games with hierarchical inputs and shows how agents implemented as graph convolutional networks obtain good generalization.
\citet{korbak2019developmentally} implemented the idea of template transfer \citep{barrett_self-assembling_2017} by pre-training the agents on simpler subtasks before the target task. 
\citet{918430} studied the iterative learning paradigm, 
where each generation of agents learns the language spoken by the previous generation before starting to communicate.
In the machine learning literature, this idea was explored by \citet{li_ease--teaching_2019}, \citet{cogswell_emergence_2019} and \citet{ren2020compositional} with the generation transfer typically implemented as reinitializing the weights of agents' neural networks. 
Such an approach inevitably introduces noise into the learning process. This naturally leads to a question of whether the noise itself may be a sufficient mechanism of compositionality, which we will try to address in this paper. \citet{guo2020inductive} have shown that the choice of a game has a large impact on the properties of a communication protocol emerging in that game, foreshadowing what we call grounding.

The noisy channel model of communication was famously introduced by \citet{shannon1948mathematical}. The idea of noise as a driving force in the emergence of communication was first proposed by \citet{Nowak8028}, who showed that word-level compositionality is the optimal solution to the problem of communication in a noisy environment under a particular fitness function. 
Noise is also used in deep learning, e.g. as a regularizer
(see e.g. dropout \citep{srivastava2014dropout}) 
or a mechanism allowing backpropagation through a discrete latent (see e.g. \citet{salakhutdinov2009semantic}, \citet{kaiser2018discrete}). 
Noise in the latter context was used in \citet{foerster_learning_2016} in order to learn to communicate. The authors 
observed that it is essential for successful training.

\vspace{-.5em}

\section{Noisy channel method}\label{sec:method}

We discuss the language and compositionality in Section \ref{sec:language_and_compositionality}. The impossibility result and the need for biases in emergent compositionality is the content of Section \ref{sec:grounding}. The communication task considered in this paper as well as the catalytic role of noise is described in Section \ref{sec:noisy_channel_theorem}. Theoretical results from this section hold in a somewhat idealized situation, but experiments in Section \ref{sec:experimental_results} are performed in a more realistic setup. The difference is described in Section \ref{sec:divergence_from_theory}.

\vspace{-0.5em}
\subsection{Language and compositionality} \label{sec:language_and_compositionality}

Consider a set of objects described by some features, and let a set $\mathcal F$ contain a combination of these features' values.
For example, the features could represent shape, say $\mathtt{squares}$ and $\mathtt{circles}$, and color, say $\mathtt{red}$ and $\mathtt{green}$, in which case
$\mathcal F=\{\mathtt{red\ square}, \mathtt{red\ circle}, \mathtt{green\ square}, \mathtt{green\ circle}\}$. The features could be identified with partitions of $\mathcal F$. 
More formally, each feature can be defined via equivalence relation, with features values corresponding to equivalence classes of this relation. 
In our example, the color corresponds to the partition $\{\mathtt{red\ square}, \mathtt{red\ circle}\}$ ('red') and $\{\mathtt{green\ square}, \mathtt{green\ circle}\}$ ('green'), while for the shape corresponds to the partition $\{\mathtt{red\ square}, \mathtt{green\ square}\}$ ('square') and $\{\mathtt{red\ circle}, \mathtt{green\ circle}\}$ ('circle').

We assume that objects with feature space $\mathcal F$  can be defined in  a space $\mathcal X$ and are generated by a two-stage process: first, the feature values are sampled from $f\in\mathcal F$, then an element of $\mathcal X$ is sampled according to a distribution conditioned on $f$.

In general, a language is defined as a mapping from objects to strings over some finite alphabet $\mathcal A$ (sometimes called messages), $\ell:\mathcal X\to\mathcal A^*$.
In this paper, we will study a subset of languages $\ell$, where the range of the language has a fixed length, equal to the number of features.
We say that $\ell$ is compositional with respect to a given feature if a change in $i$-th feature only impacts a corresponding $j$-th index of the message. Continuing the previous example, let $\mathcal A=\{\mathtt{a, b}\}$ and consider a language $\ell$ mapping red squares, red circles, green squares, and green circles to $\mathtt{aa, ba, ab}$, and $\mathtt{bb}$, respectively. Then $\ell$  is compositional with respect to color and shape features since the change of color only impacts the first index in the message (and analogously for the shape), see Figure \ref{fig:language_table}(\subref{subfig:lcompositional}).

Consider a permutation $\pi\colon \mathcal F\to \mathcal F$. In our running example, suppose that $\pi$ is an identity, except that it swaps $\mathtt{red\ circle}$ with $\mathtt{green\ circle}$, i.e. 
$\pi(\mathtt{red\ circle})=\mathtt{green\ circle}$ and 
$\pi(\mathtt{green\ circle})=\mathtt{red\ circle}$. 
The resulting language $\ell_\pi$ would then map red squares, red circles, green squares, and green circles to $\mathtt{aa, bb, ab}$, and $\mathtt{ba}$, respectively. This language is not compositional with respect to color and shape features, since if we  change a shape value in $\mathtt{red\ circle}$, both symbols in the message will change (from $\mathtt{bb}$ to $\mathtt{aa})$, see Figure \ref{fig:language_table}(\subref{subfig:lpinotcompositional}). 
However, $\ell_\pi$ is compositional with respect to a different set of features (shape and 'different color-shape'), see Figure \ref{fig:language_table}(\subref{subfig:lpicompositional}). Consequently, compositionality should be defined together with features, with respect to which it holds. In the next section, we show that this observation has significant implications for learning.

\newcommand\mycircle[1][]{\tikz\node[circle,minimum size=.4cm,fill=#1]{};}
\newcommand\mysquare[1][]{\tikz\node[rectangle,minimum size=.4cm,fill=#1]{};}

\newcommand\mycirclesm[1][]{\tikz\node[circle,minimum size=.3cm,fill=#1]{};}
\newcommand\mysquaresm[1][]{\tikz\node[rectangle,minimum size=.3cm,fill=#1]{};}

\begin{figure}[H]
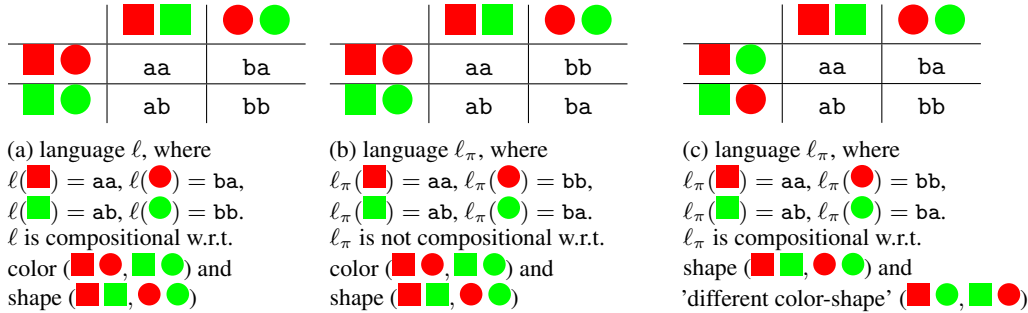

    \centering
    \begin{subfigure}{.3\textwidth}
        \begin{tabular}{l|c|c}
             & \mysquare[red] \mysquare[green] & \mycircle[red] \mycircle[green] \\ \hline
            \mysquare[red] \mycircle[red]  &  $\mathtt{aa}$ & $\mathtt{ba}$ \\ \hline
            \mysquare[green] \mycircle[green]  & $\mathtt{ab}$ & $\mathtt{bb}$ \\ 
        \end{tabular}
        \caption{language $\ell$, where\\
        $\ell(\mysquaresm[red])=\mathtt{aa}$,
        $\ell(\mycirclesm[red])=\mathtt{ba}$,\\
        $\ell(\mysquaresm[green])=\mathtt{ab}$,
        $\ell(\mycirclesm[green])=\mathtt{bb}$.\\ 
        $\ell$ is compositional w.r.t.\\ color 
        ($\mysquaresm[red]\ \mycirclesm[red], \mysquaresm[green]\ \mycirclesm[green]$) and\\
        shape ($\mysquaresm[red]\ \mysquaresm[green], \mycirclesm[red]\ \mycirclesm[green]$)}
        \label{subfig:lcompositional}
    \end{subfigure}
    \begin{subfigure}{.33\textwidth}
        \begin{tabular}{l|c|c}
             & \mysquare[red] \mysquare[green] & \mycircle[red] \mycircle[green] \\ \hline
            \mysquare[red] \mycircle[red]  &  $\mathtt{aa}$  & $\mathtt{bb}$ \\ \hline
            \mysquare[green] \mycircle[green]  & $\mathtt{ab}$ & $\mathtt{ba}$ \\ 
        \end{tabular}
        \caption{
        language $\ell_\pi$, where\\
        $\ell_\pi(\mysquaresm[red])=\mathtt{aa}$, 
        $\ell_\pi(\mycirclesm[red])=\mathtt{bb}$,\\ 
        $\ell_\pi(\mysquaresm[green])=\mathtt{ab}$, $\ell_\pi(\mycirclesm[green])=\mathtt{ba}$.\\
        $\ell_\pi$ is not compositional w.r.t.\\ color 
        ($\mysquaresm[red]\ \mycirclesm[red], \mysquaresm[green]\ \mycirclesm[green]$) and\\
        shape ($\mysquaresm[red]\ \mysquaresm[green], \mycirclesm[red]\ \mycirclesm[green]$)}
        \label{subfig:lpinotcompositional}
    \end{subfigure}
    \begin{subfigure}{.33\textwidth}
        \begin{tabular}{l|c|c}
             & \mysquare[red] \mysquare[green] & \mycircle[red] \mycircle[green] \\ \hline
            \mysquare[red] \mycircle[green]  &  $\mathtt{aa}$ & $\mathtt{ba}$ \\ \hline
            \mysquare[green] \mycircle[red]  & $\mathtt{ab}$ & $\mathtt{bb}$ \\ 
        \end{tabular}
        \caption{
        language $\ell_\pi$, where\\
        $\ell_\pi(\mysquaresm[red])=\mathtt{aa}$, 
        $\ell_\pi(\mycirclesm[red])=\mathtt{bb}$,\\ 
        $\ell_\pi(\mysquaresm[green])=\mathtt{ab}$, 
        $\ell_\pi(\mycirclesm[green])=\mathtt{ba}$.\\
        $\ell_\pi$ is compositional w.r.t.\\ 
        shape ($\mysquaresm[red]\ \mysquaresm[green], \mycirclesm[red]\ \mycirclesm[green]$) and\\
        'different color-shape' ($\mysquaresm[red]\   \mycirclesm[green], \mysquaresm[green]\   \mycirclesm[red]$)}
        \label{subfig:lpicompositional}
    \end{subfigure}
    \caption{Language, features, and compositionality.}
    \label{fig:language_table}
    \vspace{-1em}
\end{figure}

\subsection{Inductive biases and compositionality}\label{sec:grounding}
In this section, we investigate, whether compositionality can spontaneously emerge in an unsupervised fashion.  
The answer to this question is negative since the underlying features cannot be inferred from the data.

\begin{thm}\label{thm:all_is_good}
For a uniform distribution, $\mu$ on $\mathcal F$ and a permutation $\pi\colon\mathcal F\to \mathcal F$, the distribution $\mu\circ\pi^{-1}$ is also uniform.
\end{thm}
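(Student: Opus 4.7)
The statement is essentially a one-line consequence of the definitions, so the plan is short. I would first fix notation: since $\mathcal F$ is a finite set of feature-combinations, the uniform distribution assigns mass $\mu(\{f\}) = 1/|\mathcal F|$ to every singleton $\{f\}$, and the pushforward $\mu\circ\pi^{-1}$ is defined on singletons by $(\mu\circ\pi^{-1})(\{f\}) = \mu(\pi^{-1}(\{f\}))$.

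Next, I would use that $\pi$ is a bijection of $\mathcal F$, so the preimage $\pi^{-1}(\{f\})$ is again a singleton, namely $\{\pi^{-1}(f)\}$. Substituting into the definition gives $(\mu\circ\pi^{-1})(\{f\}) = \mu(\{\pi^{-1}(f)\}) = 1/|\mathcal F|$, which is independent of $f$. Since $\mu\circ\pi^{-1}$ agrees with the uniform distribution on every singleton of a finite set, the two measures coincide.

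There is no real obstacle here: the only subtlety is noticing that bijectivity of $\pi$ is what makes $\pi^{-1}(\{f\})$ a singleton (rather than a larger preimage), after which uniformity is automatic. In the paper the statement is used conceptually — it formalises the fact that relabelling feature-combinations by a permutation leaves the data-generating distribution indistinguishable from the original, so any learner receiving only samples from $\mu$ cannot tell $\mathcal F$ apart from $\pi(\mathcal F)$ and hence cannot, unsupervised, prefer the ``compositional'' factorisation over a permuted one. I would end the proof by stating this consequence explicitly, to motivate the impossibility discussion in Section~\ref{sec:grounding}.
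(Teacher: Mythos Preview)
Your proof is correct and takes essentially the same approach as the paper: the paper's own justification is the single sentence ``Notice that Theorem~\ref{thm:all_is_good} follows from the fact that any permutation $\pi:\mathcal F\to\mathcal F$ is a bijection,'' and your argument simply unpacks this by showing that bijectivity forces $\pi^{-1}(\{f\})$ to be a singleton, whence the pushforward mass at each point is $1/|\mathcal F|$.
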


While this result is elementary to prove (see Appendix \ref{app:sec:optimality}) it has deep implications for the emergence of compositionality in the learning process.
Suppose that data represent a balanced set of all features (i.e. features are sampled from $\mu$) and recall the two-stage generation process described in Section~\ref{sec:language_and_compositionality}.
If $f\sim \mu\circ\pi^{-1}$ has the same distribution as $f\sim\mu$, then the distribution of the observed data does not depend on $\pi$.
Consequently, any emergent compositional language $\ell$ with respect to some features is not compositional with respect to other (permuted) features. We arrive at the following conclusion: 

\textit{Any learning process which hopes to achieve compositionality must involve some priors related to the data and learning framework.}

Theorem \ref{thm:all_is_good} can be viewed as a discrete version of \citet[Theorem 1]{locatello2019challenging}.
There are multiple ways of grounding the learning process, including imposing inductive biases on the agents and designing loss functions to disentangle features. We study these, together with a new mechanism: injecting noise into the communication channel. In the next section, we show that this mechanism provably achieves compositionality.

\vspace{-0.5em}
\subsection{Compositionality and communication over a noisy channel}\label{sec:noisy_channel_theorem}

Another major conceptual finding is that compositional communication spontaneously emerges when introducing a relatively simple mechanism -- a noisy channel. This is proved in Theorem \ref{thm:optimality}, provided that the loss function penalizes agents' mistakes, but also rewards for (partially) correct guesses.

Recall, that we consider a signaling game, where agents cooperate and develop a communication protocol (a language $\ell$) in order to maximize their joint reward. Here a sender observes a certain object with features $f\in\mathcal F$ and sends a message to the receiver to allow him to infer $f$. The agents are rewarded if some of the features are identified correctly, see \eqref{eq:loss2}.

The above setup is standard and now we augment it with a noisy channel. The noisy channel is located between the sender and the receiver and may scramble messages. A message $\mathbf{s}$ is transformed into a corrupted message  
$\mathbf{s'}$, by replacing each symbol,  independently and with probability $\epsilon\in (0,1)$, with a different, uniformly sampled, symbol.

For the sake of this section and Theorem \ref{thm:optimality} below, we will make the following series of assumptions. Let $\mathcal F=\mathcal F_1\times\ldots\times\mathcal F_K$, that is the feature space is factorized into $K$ features. Furthermore, assume that each feature has the same number of values. We will assume that $\mathcal X=\mathcal F$ and define a language used by the sender as a mapping $\ell: \mathcal F_1 \times\ldots \times\mathcal F_K\to \mathcal A^K$, where $\mathcal A$ is an alphabet and $|\mathcal A|=|\mathcal F_i|$. We will further assume that the 
message $\mathbf{s}=\ell(f)$ is decoded as $\ell^{-1}(\mathbf{s})$.
The corrupted message corresponding to $\ff\in\mathcal F$ is denoted by $\ell(\ff)'$ and the inferred features corresponding to this corrupted message are given by $\ff' = \ell^{-1}(\ell(\ff)')\in\mathcal F$.
The setup for this section and for our experiments (Section \ref{sec:experiment_setup} and Section \ref{sec:experimental_results}) differ, see Section \ref{sec:divergence_from_theory}.

Recall that the definition of compositionality (with respect to given features) connects the change in feature values with the change in message symbols. It is thus reasonable to look for loss functions that are somehow factorized in terms of individual symbols. 
Consider the following loss function:
\begin{equation}\label{eq:loss2}
\begin{split}
J(\ell, \ff) = \mathbb E[H(\rho(\ff', \ff))],
\end{split}
\end{equation}
where $H$ is a non-negative, strictly increasing function and $\rho$ is the Hamming distance\footnotemark{}.
\footnotetext{The Hamming distance 
between two vectors $v, w\in \mathbb R^K$
is defined as  $\rho(v, w)=\sum_{i=1}^K\mathbf{1}(v_i\ne w_i)$.}
Intuitively, $\rho$ measures the number of changed features in the corrupted message and $H$ controls the degree by which we penalize this quantity. 

\begin{thm}\label{thm:optimality}
Assume that $K\ge 2$, $\mathcal F=\mathcal F_1\times \ldots \times\mathcal F_K$, $|\mathcal A|=|\mathcal F_i|\ge 2$, and $\mathcal X=\mathcal F$. 
Suppose additionally that $\epsilon < (|\mathcal A|-1)/|\mathcal A|$. 
Then a language $\ell^*$ minimizes $J$ over all languages $\ell$ which are one-to-one mappings if and only if $\ell^*$ is compositional (with respect to features given by $\mathcal F_i$).
\end{thm}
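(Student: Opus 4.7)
The plan is to minimise the loss pointwise in $f$ by recognising it as a sum over $f'$ of products from two identical Hamming-distance multisets, pin down the minimisers via a strict rearrangement/swap argument, and then use the classification of Hamming isometries to conclude.

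First, I rewrite the loss additively. Because the noisy channel flips each message coordinate independently with probability $\epsilon$, the conditional law of the corrupted message depends on $\mathbf{s},\mathbf{s}'$ only through $\rho(\mathbf{s},\mathbf{s}')$ and equals $r(\rho(\mathbf{s},\mathbf{s}'))$ with $r(j):=(1-\epsilon)^{K-j}(\epsilon/(|\mathcal{A}|-1))^j$. Substituting $\mathbf{s}'=\ell(f')$ (valid by bijectivity of $\ell$) yields
\begin{equation*}
J(\ell,f)\;=\;\sum_{f'\in\mathcal{F}} r\bigl(\rho(\ell(f'),\ell(f))\bigr)\,H\bigl(\rho(f',f)\bigr).
\end{equation*}
Set $j(f'):=\rho(\ell(f'),\ell(f))$ and $k(f'):=\rho(f',f)$. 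As $f'$ ranges over $\mathcal{F}$, the multiset $\{k(f')\}$ contains each value $m\in\{0,\ldots,K\}$ exactly $\binom{K}{m}(|\mathcal{A}|-1)^m$ times; because $\ell$ is a bijection and $|\mathcal{A}|=|\mathcal{F}_i|$, the multiset $\{j(f')\}$ is identical. So $J(\ell,f)$ is a sum of pairwise products between two equal multisets.

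Next I run a swap argument. The assumption $\epsilon<(|\mathcal{A}|-1)/|\mathcal{A}|$ is equivalent to $r(j+1)/r(j)<1$, hence $r$ is strictly decreasing; $H$ is strictly increasing by hypothesis. If some pair $f'_1,f'_2$ (necessarily distinct from $f$) were ``anti-aligned'' --- $j(f'_1)<j(f'_2)$ while $k(f'_1)>k(f'_2)$ --- the bijection $\widetilde\ell$ that agrees with $\ell$ outside $\{f'_1,f'_2\}$ and exchanges their values there satisfies $\widetilde k\equiv k$ but swaps $j(f'_1)\leftrightarrow j(f'_2)$, changing the sum by the strictly negative quantity $(r(j(f'_1))-r(j(f'_2)))(H(k(f'_2))-H(k(f'_1)))$. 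Thus no anti-aligned pair can survive at a minimum, and since the two multisets coincide this forces $j(f')=k(f')$ for every $f'$, i.e.\ $\rho(\ell(f'),\ell(f))=\rho(f',f)$. Applied pointwise in $f$ (with $f$ drawn from any full-support distribution on $\mathcal{F}$), $\ell$ minimises the expected loss iff this equality holds for all $(f,f')$ --- equivalently, $\ell$ is a Hamming isometry between $\mathcal{F}$ and $\mathcal{A}^K$ --- and the minimum value is the constant $\sum_m\binom{K}{m}\epsilon^m(1-\epsilon)^{K-m}H(m)$.

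Finally I identify Hamming isometries with compositional maps. After relabeling, assume $\ell(f_0)=\mathbf{0}$. Isometry sends the unit sphere around $f_0$ bijectively to the unit sphere around $\mathbf{0}$; within the latter, two distinct non-zero vectors are at Hamming distance $1$ iff they share their changed coordinate and at distance $2$ otherwise. This partition of the sphere forces a coordinate permutation $\pi\in S_K$ and value bijections $\tau_i:\mathcal{F}_i\to\mathcal{A}$. Induction on the Hamming weight $m=\rho(f,f_0)$ extends the coordinate-wise rule to all $f\in\mathcal{F}$: for $m\ge 2$, the images of the $m$ weight-$(m{-}1)$ predecessors of $f$ are determined by induction, and the isometry constraints --- distance $1$ to each predecessor and distance $m$ to $\mathbf{0}$ --- pin down $\ell(f)=(\tau_1(f_{\pi(1)}),\ldots,\tau_K(f_{\pi(K)}))$ uniquely. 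The converse direction (every compositional $\ell$ is an isometry, hence a minimizer) is immediate. The main technical obstacle is the strict uniqueness in the rearrangement step, which is what upgrades the easy ``compositional $\Rightarrow$ optimal'' direction into the full ``optimal $\Rightarrow$ compositional'' characterisation.
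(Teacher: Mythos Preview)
Your proof is correct and follows essentially the same swap/rearrangement argument as the paper's Appendix~E: both work pointwise in $f$, exploit that $r(j)=(1-\epsilon)^{K-j}(\epsilon/(|\mathcal A|-1))^j$ is strictly decreasing under the stated bound on $\epsilon$, and show that a single transposition of two values of $\ell$ strictly lowers $J(\ell,f)$ whenever the induced Hamming distances $j(f')=\rho(\ell(f'),\ell(f))$ and $k(f')=\rho(f',f)$ are anti-aligned. The paper carries this out by selecting the maximal violating level $k^*$ and swapping there; your rearrangement phrasing (two identical multisets, hence the unique similarly-ordered pairing forces $j\equiv k$) is a cleaner packaging of the same idea and makes the strict-uniqueness step more transparent.

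One substantive addition in your write-up is the final step, where you classify Hamming isometries as exactly the coordinate-permutation-plus-value-bijection maps. The paper sidesteps this by \emph{defining} compositionality in its Appendix as the isometry condition $\rho(f_0,f_1)=k\Leftrightarrow\rho(\ell(f_0),\ell(f_1))=k$, so it never needs the classification; your version instead keeps the main-text coordinatewise definition and proves the equivalence, which is a genuine (and correct) extra ingredient. Either way the content is the same.
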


Informally, Theorem \ref{thm:optimality} states that optimization of loss function $J$ promotes compositionality (assumption on the one-to-one property is technical). 
What makes this possible is the factorized nature of the losses and the introduction of a noisy channel. 
Interestingly, there exist other loss functions with similar properties. 
We postpone the analysis of this point and the proof of Theorem \ref{thm:optimality} to Appendix~\ref{app:sec:optimality}.

It is instructive to discuss some of Theorem \ref{thm:optimality} assumptions. 
Assumption $\mathcal F=\mathcal X$ means that $\ell$ takes semantically meaningful symbolic input. This does not cover many interesting cases, where $\mathcal X$ has representation entangled in terms of features (e.g. an image).
Furthermore, the assertion of Theorem~\ref{thm:optimality} holds for a rather wide spectrum of $\epsilon$ values (up to $0.8$ for $|\mathcal A|=5$, which is what we use in the main experiment in Section \ref{sec:experimental_results}). This stands in contrast to our experimental results, where we observe an interaction between different noise levels and compositionality, see Section \ref{sec:experimental_results}. There is no contradiction here since in Section \ref{sec:experimental_results} we study more realistic setup that extends beyond relatively strict assumptions of Theorem \ref{thm:optimality}, see Section \ref{sec:divergence_from_theory}.

\vspace{-.5em}

\section{Experiments setup}\label{sec:experiment_setup}

\subsection{Differences between experimental and theoretical setups }\label{sec:divergence_from_theory}

The setup of our experiments is more realistic (and common in this area of research) and extends beyond the assumptions of Theorem \ref{thm:optimality}. 
We assume that a dataset $\mathcal X$ contains images, consequently making features entangled in a visual representation (i.e. $\mathcal F\ne \mathcal X$). 
We also allow the alphabet size to differ from the number of feature values, $|\mathcal A|\ne |\mathcal F|$. 
Additionally, the receiver only sees the messages sent by the sender and has to learn to decode the messages.
A further gap stems from the implementation details and the use of neural networks, which may converge to suboptimal solutions. 
We do, however, assume that the feature space $\mathcal F$ is a Cartesian product (see Section \ref{sec:noisy_channel_theorem}).

\vspace{-0.5em}
\subsection{Training pipeline}\label{sec:training_pipeline}

In this section, we sketch the training pipeline and postpone the details to Appendix \ref{app:sec:training_pipeline}.  The dataset of images observed by the sender is denoted by $\mathcal D$. 
Each element of $\mathcal D$ has $K$ independent features $f_1,\ldots, f_K$ (here we consider $K=2$). 
Both the sender and the receiver are modeled as neural networks (for details see Appendix \ref{app:sec:experimental_setup}). 
The sender network takes an image from $\mathcal D$ as input and returns a distribution over the space of messages of length $L$ (here we assume $L=K=2$). We assume that conditionally on the image, the symbols in the message are independent and take values in a finite alphabet $\mathcal A_s=\{1, \ldots, d_s\}$. 
This distribution is then distorted by the noisy channel, which is a function that maps probability vectors into $d_s$-dimensional logits. Unless otherwise stated, we assume that noise is be defined as a dense layer with a specific choice of (not learnable) weights matrix and a $\log$ activation function: \useshortskip
\begin{equation}\label{eq:noise}
\mathtt{noise}(x) = \log (Wx).
\end{equation}
Here we assume that $W\in\mathbb R^{d_s\times d_s}$ is a fixed matrix, which takes a probability vector to a probability vector with positive entries. An example of such $W$ is a stochastic matrix. 
In this paper we use 
\begin{equation}\label{eq:Wuniformfinal}
W_{ij} = 
\begin{cases}
1-\varepsilon, & i = j,\\
\frac{\varepsilon}{d_s-1}, & i \ne j.
\end{cases}
\end{equation}

Alternative implementations of noise architecture are possible, see Appendix \ref{app:sec:training_pipeline}.
The noisy logits are then used to sample a message and pass it to the receiver.
To make this operation differentiable we use Gumbel-Softmax (\cite{jang_categorical_2016}) with Straight-Through mode (\cite{kaiser2018discrete}).\footnotemark{} 
\footnotetext{
We believe that Gumbel-Softmax approach is now an established choice in emergent communication research, see e.g. \citet{lee_emergent_2017}, \citet{mordatch_emergence_2017}. \citet{chaabouni_compositionality_2020} report that Gumbel-Softmax converges to similar solutions as REINFORCE but faster and is more stable.}
Upon receiving the message, the receiver outputs a distribution over possible values of the features, using an alphabet $\mathcal A_r=\{1, \ldots, d_r\}$. 
Finally, the neural networks are trained using a linear combination of cross-entropy loss for the receiver, cross-entropy loss for the sender, and $L_2$-regularization. The loss term for the sender incentives the language to be a one-to-one mapping.

\begin{wrapfigure}{R}{0.4\textwidth}
    \vspace{-3em}
    \centering
    \includegraphics[width=0.4\textwidth]{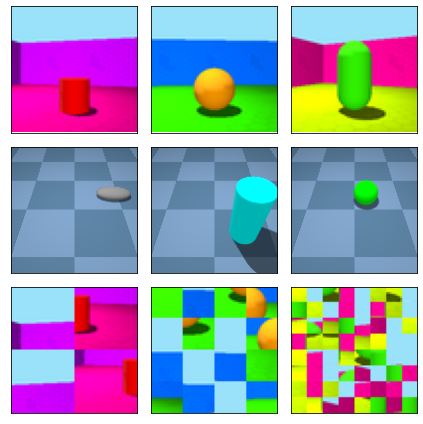}
    \caption{\small Top: shapes3d. Middle: obverter. Bottom: Scrambled shapes3d ($32$, $16$, $8$).}
    \vspace{-4em}
    \label{fig:datasets}
\end{wrapfigure}

\subsection{Datasets}\label{subsec:datasets}
We use two datasets: shapes3d  (used in \cite{3dshapes18} and included in the TensorFlow datasets package) and a dataset used by \citet{choi_compositional_2018}\footnotemark{} and \citet{korbak2019developmentally}, which we will refer to as the obverter dataset, see Figure \ref{fig:datasets}. The datasets are similar but offer different forms of visual variability.  
Shapes3d dataset includes images of 3D shapes. 
Each element is a $(64, 64, 3)$ RGB image, and is characterized by multiple features, such as shape or object hue. We choose images with values for both features ranging in $\{0, 1, 2, 3\}$. 
The obverter dataset contains images of four shapes (box, cylinder, ellipsoid, sphere) in four colors (blue, cyan, gray, green). Each image has dimensions $(128, 128, 3)$, and we use 1000 images for each shape--color pair. 
For details see Appendix \ref{app:datasets}. 
\footnotetext{The dataset is available at \url{https://github.com/benbogin/obverter}. We used the code provided in the repository to generate $1000$ images for each color-shape pair.}

\subsection{Training details}
\vspace{-0.5em}
The sender and the receiver are implemented as feed-forward neural networks. To ensure the diversity of random initializations we run each experiment with 100 random seeds. For each seed, there were 100 evaluation runs, once every $2000$ network updates. We report metrics averaged over the last $20$ evaluation runs. For details on architecture and hyperparameters' choice, see Appendix~\ref{app:sec:experimental_setup}.

\subsection{Compositionality measures}\label{sec:compositionality_metrics}

We measure compositionality in terms of four metrics used in emergent communication literature: topographic similarity, conflict count, context independence, and positional disentanglement.  
For all metrics, the higher values the better, except for conflict count, for which the reverse is true. 
The results presented in Section \ref{sec:experimental_results} indicate that the metrics agree on the assessment of our results. 
Parallel to compositionality metrics, we also report accuracy, which we often refer to as \emph{acc}.

\textbf{Topographic similarity} Topographic similarity \citep{brighton_understanding_2006,lazaridou_emergence_2018}, or \emph{topo} for short, is a popular measure of structural similarity between messages and features. Let $L_f : \mathcal F \times \mathcal F \to \mathbb{R}_+$ be a distance over features and $L_{m} : \mathcal A_s^* \times \mathcal A_s^* \to \mathbb{R}_+$ be a distance over messages. The topographical similarity is the Spearman $\rho$ correlation of $L_f$ and $L_m$ measured over a joint uniform distribution over features and symbols. We choose $L_{m}$ to be the \citet{levenshtein1966binary} distance and treat features $f \in \mathcal{F}$ as ordered pairs or features so we can choose $L_f$ to be the Hamming distance. We use topo as the main metric.

Figure \ref{fig:topo_bound} shows the expected value of topographic similarity for a random bijective language with a message length equal to $2$. For $5$-symbol alphabet, this value does not exceed $0.2$ which sets a point of reference for compositionality results. 
For derivation see Appendix \ref{app:sec:topo}.

\begin{wrapfigure}{R}{0.4\textwidth}
    \vspace{-1em}
    \centering
    \includegraphics[width=.4\textwidth]{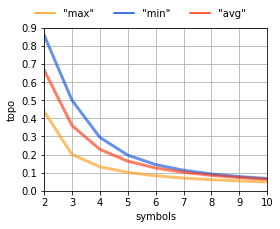}
    \caption{\small Expected value of topographic similarity for a random bijective language with message length $2$, as a function of the alphabet size. "min", "max", and "avg" stand for different ways of computing ranks.} 
    \label{fig:topo_bound}
    \vspace{-2.5em}
\end{wrapfigure}

\textbf{Conflict count}
Let $\phi:\{1,\ldots,K\}\to \{1, \ldots, K\}$ be a permutation. 
The principal meaning of a symbol $s$ at position $j$ is defined as $\mathtt{m}( s,j; \phi)=\argmax_{v}\mathtt{count}(s, j, v;\phi)$, where
$\mathtt{count}(s, j, v; \phi)$ is defined as $\sum_{(\mathtt{img},f)\in\mathcal D} \mathbf{1}(\ell(\mathtt{img})_j=s, f_{\phi(j)}=v)$, $v$ runs over all values of all features, 
and  ties in $\argmax$ are broken arbitrarily. Then, conflict count metric, \emph{conf} for short, is defined as $\mathtt{conf} = \min_\phi \sum_{s,j} \mathtt{score}(s,j; \phi)$,
where $\mathtt{score}(s,j;\phi) = \sum_{v\neq \mathtt{m}(s,j;\phi)} \mathtt{count}(s,j, v; \phi)$. Intuitively, $\mathtt{score}$ measures how many times the feature assigned to a symbol $s$ at a position $j$ diverts from its principal meaning $\mathtt{m}( s,j; \phi)$. $\mathtt{conf}$ sums these errors and takes $\min$ over possible orderings $\phi$. This metric was introduced in  \citet{kucinski2020emergence}.

\textbf{Context independence} Context independence  (\citet{bogin_emergence_2018}), abbreviated here as \emph{cont}, measures the alignment between symbols forming a message and features $f_1, \dots, f_K$. By $p(s|f)$, we mean the probability that the sender maps a feature $f$ to a message containing symbol $s \in \Sigma$. We define the inverse probability $p(f|s)$ similarly. Finally, we define $s^f = \argmax_s p(f \vert s)$; $s^f$ is the symbol most often sent in presence of a feature $f$. Then, context independence is $\mathbb{E} ( p(v^k \vert k) \cdot p(k \vert v^k))$; the expectation is taken with respect to the joint uniform distribution over features and symbols. 

\textbf{Positional disentanglement} Let $s_j$ denote the $j$-th symbol of a message $f(d)$, and $c_1^j$ the feature with the highest mutual information with $s_j$, and $c_2^j$ with the second highest mutual information: $c_1^j = \argmax_c \mathcal{I}(s_j; c)$, $c_2^j = \argmax_{c \neq c_1^j} \mathcal{I}(s_j; c)$
where $\mathcal{I}(\cdot; \cdot)$ is mutual information and $c$ is a feature value. Then, positional disentanglement \citep{chaabouni_compositionality_2020} is defined as
$\frac{1}{L} \sum_{j=1}^L \left(\mathcal{I}(s_j; c^j_1) - \mathcal{I}(s_j; c^j_2)\right)/\mathcal{H}(s_j)$, where $L$ is the maximum message length and $\mathcal{H}(s_j)$ is entropy over the distribution of symbols at $j$-th place in messages for each feature. We ignore positions with zero entropy.\footnote{Positional disentanglement is related to residual entropy proposed by \citet{resnick_capacity_2020}. \citet{chaabouni_compositionality_2020} also proposed 
bag-of-words disentanglement, which assumes order-invariance of messages. Due to our architecture choice, this assumption is not met, hence we decided not to report this metric.}
We will call this measure \emph{pos}, for short.

\section{Experiment results}\label{sec:experimental_results}

In this section, we study how different inductive biases for the model and the data influence compositionality. 
The main experiment is presented in Section~\ref{sec:baseline_experiment} and the experiments in Section~\ref{sec:model_biases}-\ref{sec:generalization} are its variation. 
In the main experiment we assume that the message length is $K=2$, $|\mathcal A_s|=5$, and $|\mathcal A_r|=8$ (see Section \ref{sec:training_pipeline}). Notice that both $\mathcal A_s$ and $\mathcal A_r$ differ from one another and from the number of possible feature values (which equals 4).
It turns out that the qualitative results are similar for both the shapes3d and obverter datasets, and in the interest of brevity we only report the results for both in the main experiment. Supplementary material for this section can be found in Appendix~\ref{sec:app:results}.

Our main findings include the fact that the noise indeed catalyzes the emergence of compositionality.  There is an interesting dependence on the noise level: too high noise may impede learning, while too small noise vanishes within other sources of noise. Importantly, this phenomenon appears consistently across different biases. Using two head output of the network is the strongest bias toward compositionality (amongst studied). Finally, we show that compositionality can generalize to unseen cases when fine-tuning is allowed.

\begin{figure}
    \centering
    \includegraphics[width=\textwidth]{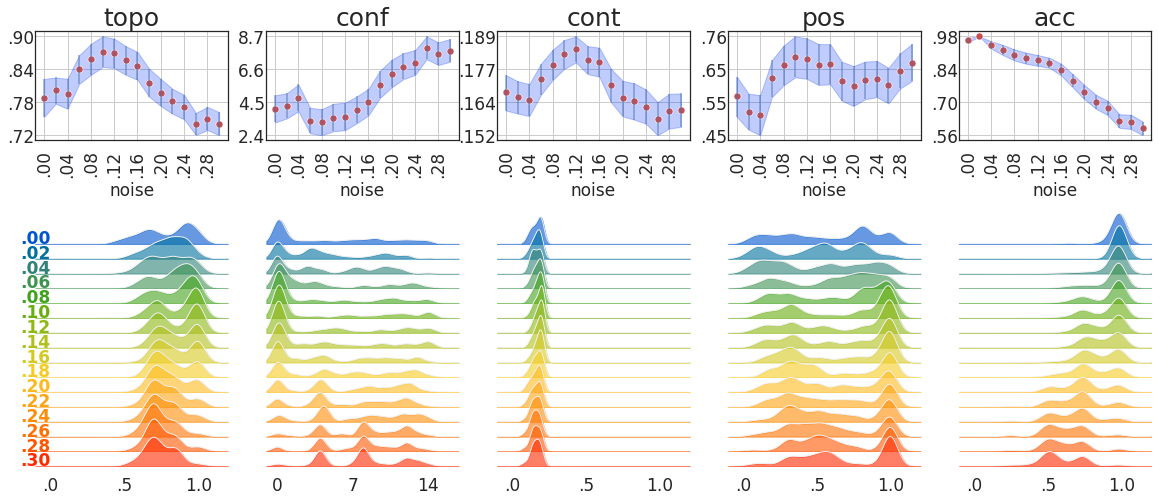}
    \caption{\small Results of the main experiment on shapes3d dataset. Top panel: average value of metrics for various noise levels. The shaded area corresponds to bootstrapped $95\%$-confidence intervals. Bottom panel: kernel density estimators for metrics and noise levels across seeds. Here \emph{topo} stands for topographic similarity, \emph{conf} for conflict count, \emph{cont} for context independence, \emph{pos} for positional disentanglement, and \emph{acc} for accuracy.}
    \label{fig:avg_with_cis_romantic_mcnulty}
	\vspace{-2em}
\end{figure}

\vspace{-.5em}
\subsection{Main experiment: emergence of compositionaliy}
\label{sec:baseline_experiment}

The results, presented in Figure \ref{fig:avg_with_cis_romantic_mcnulty},  illustrate how noise catalyzes the emergence of compositionality.
The top panel of Figure \ref{fig:avg_with_cis_romantic_mcnulty} shows the important patterns for metrics: they improve until an extremal point is reached, and decline afterward. 
Topographic similarity achieves extremum for noise level $0.1$, reaching value $0.87$, which is a significant improvement upon $0.79$ for the lack of noise.
The accuracy drops down with an increase of the noise level, as expected, however the speed of the decline increases. This shows that there is an interesting compositionality-accuracy trade-off.
The bottom panel of Figure \ref{fig:avg_with_cis_romantic_mcnulty} complements the overall picture with a visualization of metrics' distribution. We see interesting dynamics in topographic similarity distribution with respect to change in the noise levels. Namely, it starts by accumulating mass at the higher spectrum of its values, reaching a peak for noise $0.1$, after which it transitions to a bimodal distribution, finally shifting its mass more towards the mediocre end of the spectrum.
An interesting observation is that the results for experiments with achieved high accuracy, are not only better but also the effect of noise is more pronounced (for instance there are 47 seeds with accuracy exceeding $0.9$, for which noise $0.16$ yields $0.96$ topo). For the sake of brevity, we defer the discussion of this phenomenon to Appendix \ref{sec:nostalgic_lovelace}.
The accuracy 
undergoes a similar transformation as a topographic similarity. 
The detailed numerical analysis, as well as corresponding results for the obverter dataset, can be found in Appendix \ref{sec:nostalgic_lovelace}.

\begin{figure}
\centering
\subfloat[symbols 4x4\label{fig:grid:4x4}]{\includegraphics[width=0.25\linewidth]{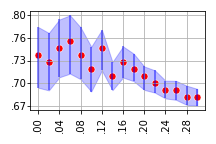}}
\hfil
\subfloat[symbols 8x8\label{fig:grid:8x8}]{\includegraphics[width=0.25\linewidth]{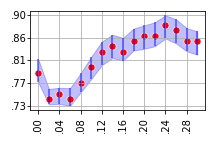}}
\hfil
\subfloat[variable noise $0.1$ \label{fig:grid:variable_noise_0.1}]{\includegraphics[width=0.25\linewidth]{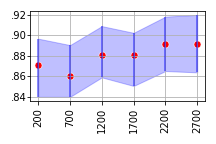}}
\hfil
\subfloat[variable noise $0.15$ \label{fig:grid:variable_noise_0.15}]{\includegraphics[width=0.25\linewidth]{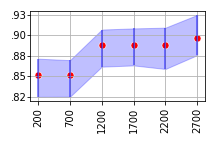}}

\subfloat[bigger CNN\label{fig:grid:big_cnn}]{\includegraphics[width=0.25\linewidth]{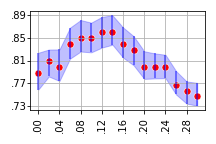}}
\hfil
\subfloat[additional dense layer\label{fig:grid:big_dense}]{\includegraphics[width=0.25\linewidth]{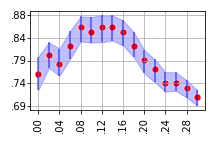}}
\hfil
\subfloat[alternative noise \label{fig:grid:alternative_noise}]{\includegraphics[width=0.25\linewidth]{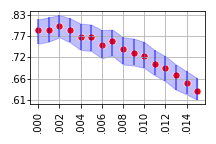}}
\hfil
\subfloat[message len 3\label{fig:grid:message3}]{\includegraphics[width=0.25\linewidth]{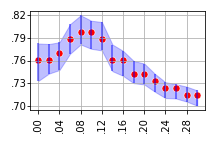}}

\subfloat[scramble 32\label{fig:grid:scrambl32}]{\includegraphics[width=0.25\linewidth]{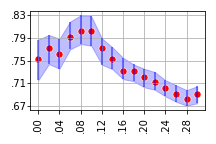}}
\hfil
\subfloat[scramble 16\label{fig:grid:scrambl16}]{\includegraphics[width=0.25\linewidth]{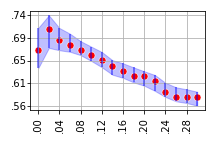}}
\hfil
\subfloat[scramble 8\label{fig:grid:scrambl8}]{\includegraphics[width=0.25\linewidth]{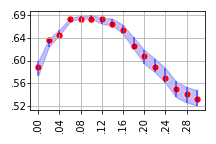}}
\hfil
\subfloat[scrambled labels\label{fig:grid:scramble_labels}]{\includegraphics[width=0.25\linewidth]{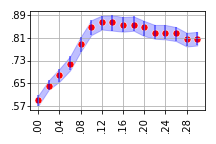}}

\caption{\small Topographic similarity for experiments in Sections \ref{sec:model_biases} and Section \ref{sec:data_biases}. The shaded areas correspond to bootstrapped $95\%$-confidence intervals for average topo.}
\label{fig:grid}
\vspace{-2em}
\end{figure}

\vspace{-0.5em}
\subsection{Influence of model inductive bias}\label{sec:model_biases}
\vspace{-.5em}

\textbf{Different number of symbols}
Here we study the impact of a different number of symbols on compositionality. In our experiments we used the communication channel with $|\mathcal A_s|=5$, giving a total of $25 = (5\times 5)$ possible messages. This is slightly redundant since only $16 = (4\times 4)$ is required, so this is the first case that we study here. 
It turns out, that allowing for only $16$ messages makes the training less stable. 
For topographic similarity, see Figure~\ref{fig:grid}(\subref*{fig:grid:4x4}), small to medium values of noise exhibit wide confidence intervals and it is statistically hard to distinguish between the metric values (this might be attributed to a bimodal distribution of topo in this noise range, see Figure~\ref{fig:ridge_4x4} in Appendix \ref{sec:number_of_symbols_app}). 
For larger values of noise (greater than $0.16$), topo starts to visibly decline. 
As the second experiment, we considered the total of $64 = (8\times 8)$ messages.
Interestingly, the topographic similarity values for the small noise regime (up to $0.08$) do not improve over the baseline value ($0.79$), see Figure~\ref{fig:grid}(\subref*{fig:grid:8x8}). This behavior changes for medium to large values of noise, where we can observe a visible increase in topo, peaking at $0.88$ with a noise level of $0.24$.
For details see Appendix \ref{sec:number_of_symbols_app}.

\textbf{Variable noise}
Understanding what happens under varying noise is an interesting and subtle problem. 
It would most probably arise in more complex situations when a communication channel is a part of a bigger system. 
Additionally, it might be similar to a phenomenon observed in supervised learning, indicating that training can benefit from learning rate warmup (see e.g.   \citet{goyal2017accurate}, \citet{frankle2020linear}). 
In this paragraph, we discuss a simple experiment with increasing noise and leave a more nuanced study for further work.
More precisely, in the initial stage of training the noise is kept at the level $\epsilon_0$, and after $T\in\{200, 700, 1200, 1700, 2200, 2700\}$ network updates it is switched to a different value, $\epsilon_T$, and kept there for the rest of the training.
The results for topographic similarity are presented in
Figures~\ref{fig:grid}(\subref*{fig:grid:variable_noise_0.1})-(\subref{fig:grid:variable_noise_0.15}), where $\epsilon_0=0$ and $\epsilon_T = 0.1$ and $\epsilon_T=0.15$, respectively. In Figure~\ref{fig:grid}(\subref{fig:grid:variable_noise_0.15}) we see that topo increases from $0.85$ for $T=200$, to $0.9$ for $T=2700$. The effect in Figure~\ref{fig:grid}(\subref{fig:grid:variable_noise_0.1}) is weaker and the variance in the results is quite high.
The details can be found in Appendix~\ref{app:sec:variable_noise_experiments}.

\textbf{Sensitivity with respect to small architecture changes}
This set of experiments aims to check the impact of changing the parameters of CNN and the dense layers in the agents' network. 
In the former experiment, we change the number of filters in the sender's CNN architecture from two layers with $8$ filters, to two layers with $16$ filters. This results in the slight change of topographic similarity profile, see Figure~\ref{fig:grid}(\subref*{fig:grid:big_cnn}), with the highest average value  
of $0.86$ for noise range $0.14$ significantly outperforming the zero-noise case ($0.79$). 
For the second experiment, we added a dense layer (with 64 neurons) to the receiver's architecture, see Figure~\ref{fig:grid}(\subref{fig:grid:big_dense}). This again shows improvement of compositionality due to noise, although the noise in the range $[0.08, 0.16]$ performs roughly the same (see also Appendix \ref{sec:architecture_experiments}).

\textbf{Sensitivity to noisy channel implementation}
Here we consider an alternative noisy channel implementation, where the noise permutes the sampled symbols, as opposed to distorting the distribution of the symbols (see Appendix \ref{app:sec:training_pipeline}). 
It turns out that with this change, the scale of noise where the interesting things happen changes as well, see Figure \ref{fig:grid}(\subref*{fig:grid:alternative_noise}). Namely, when compared with the main experiment, 
noise values are in the range smaller by the order of magnitude (we report results for noise levels in $\{0.000, 0.001, \ldots, 0.015\}$.
The results suggest that the small values of noise can help, while the larger noise levels lead to a decline in compositionality, see also Appendix \ref{sec:alternative_noise_design_experiments}.

\textbf{Longer message}
In this section, we discuss the experiment with an additional feature (with floor color acting as the third feature) and a message of length $3$. The result for topographic similarity can be seen in  Figure \ref{fig:grid}(\subref*{fig:grid:message3}), see also Appendix \ref{sec:longer_message}). The overall topographic similarity level is lower than in the main experiment, however, the distinctive peak is visible, here for noise level $0.08$. 

\subsection{Data inductive biases}\label{sec:data_biases}

\textbf{Visual priors} 
This experiment was intended to check how much the CNN-backed input is relevant in the compositionality context. We could conjecture that CNN may facilitate shape recognition and therefore be the driving force in the emergence of languages compositional with respect to the canonical shape, color split. To check this we impair the prior by scrambling images, as depicted in the bottom panel Figure~\ref{fig:datasets}. 
An image is scrambled by splitting it into $(64/x)^2$ disjoint tiles of height and width equal to $x$, and randomly reshuffling them.
This procedure significantly distorts the accuracy profile of the method (see Appendix \ref{sec:scrambled_images_appendix}). 
In particular, each transition from coarser to finer tiles, the accuracy decreases significantly: for zero-noise it drops from $0.97$ for no tiling, to $0.95$ for $x=32$, to $0.79$ for $x=16$ and $0.53$ for $x=8$.
The overall compositionality metrics decrease as well, but the characteristic peak for some positive noise level is still present, see Figure~\ref{fig:grid}(\subref{fig:grid:scrambl32})-(\subref{fig:grid:scrambl8}).
Having said that, the metrics incur a significant boost, when computed for a subset of experiments with high accuracy. We conjecture that the explanation is that the CNN prior is indeed relevant but is not the only one (the output considered in the next section is another). 

\textbf{Scrambled labels}
In this experiment, we aimed to understand how much the overall architecture output is important in the emergence of compositionality. 
The receiver's network has a two-headed output and the training framework uses a factorized loss function. 
In the standard setting, we compare the heads' outputs with 'color' and 'shape' respectively, therefore we reflect human priors from the data. In this experiment, we distort this setting, 
permuting the set of $(\text{color}, \text{shape})$ and factorizing them into new labels (see Appendix \ref{sec:scrambled_features_appendix}). We use a random permutation, so the new labels are abstract and correspond to some joint color-shape concepts. 
In our experiments, we show that languages, which emerge are compositional with respect to these new concepts, see Figure \ref{fig:grid}(\subref*{fig:grid:scramble_labels}). Consequently, they are \emph{not} compositional in the standard color-shape framework. This is in line with the claims of Section~\ref{sec:grounding} and further highlights that the output inductive bias is essential. 

\begin{figure}
    \centering
    \includegraphics[width=\textwidth]{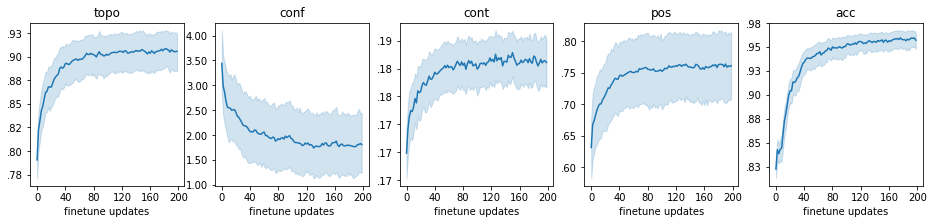}
    \caption{\small Fine-tuning for a baseline agent trained with noise $0.1$ and with several features removed from the training set (the diagonal from the shape-by-color matrix). On the x-axis is the number of finetuning updates.}
    \label{fig:finetuning}  
    \vspace{-2em}
\end{figure}

\begin{wrapfigure}{R}{0.4\textwidth}
    \vspace{-1.5em}
    \centering
    \includegraphics[width=.4\textwidth]{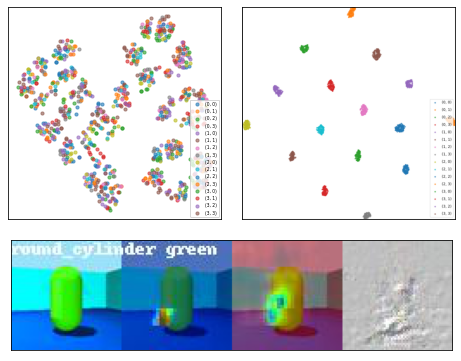}
    \caption{\small t-SNE visualization (with perplexity $50$ and $10000$ steps). Top left: shapes3d, top right: features of the last dense layer. Bottom: occlusion and saliency maps. }
    \label{fig:tsne} 
    \vspace{-1em}
\end{wrapfigure}

\vspace{-0.7em}
\subsection{Generalization}\label{sec:generalization}
\vspace{-0.5em}

\textbf{Network features}
In this paragraph, we present visualizations of the sender network, to gain some insights into what the network is doing, and whether we can observe any obvious overfit. 
In the upper panel of Figure~\ref{fig:tsne}, a t-SNE (\citet{van2008visualizing}) visualization is shown for both the raw shapes3d dataset and the last dense layer of a trained agent with noise $0.1$. 
We observe that the network successfully disentangles features of the data, which in the raw dataset appears to be entangled. This could be a good  sign in the context of generalization\footnotemark{}. 
\footnotetext{
It is known that t-SNE does not necessarily represent cluster sizes, distances, and respective positions (see \citet{wattenberg2016how}). Hence, t-SNE cannot be expected to serve as a compositionality metric.
}
Furthermore, Figure~\ref{fig:tsne} (bottom panel) shows occlusion and saliency maps for a sample image from shapes3d dataset. The network seems to pay attention to parts of the image that could be relevant for the task.

\textbf{Zero-shot and fine-tuning}
In this experiment, we study the generalization properties of the training protocol. 
We trained the baseline agent with noise $0.1$  with several features removed from the training set (a diagonal from the shape-by-color matrix) and used the removed set to analyze the out-of-distribution performance. In the zero-shot setup, the messages on these observations were not in line with the compositional structure acquired on the training set, which is reflected by rather unimpressive outcomes for each metric (initial values for each metric presented in Figure \ref{fig:finetuning}).
This may suggest that for a zero-shot generalization stronger biases might be required (see e.g. \citet{slowik2020towards,slowik2020exploring}). 
However, after a few network updates on the whole dataset (without the removed diagonal) 
we observed a quite significant increase in compositionality metrics, see Figure \ref{fig:finetuning}. The highest improvement can be seen roughly in the first $50$ additional network updates.

\vspace{-.5em}

\section{Limitations of the method }\label{sec:limitations}
Here we summarize a few limitations of the method, that we hope to overcome in future research. 

\textbf{Theory} Theorem \ref{thm:optimality} uses several restrictive assumptions, e.g. fixed message length or alphabet size equal to the range of feature values. In a general setting, it could be the case that noise would promote non-compositional, error-correcting, communication protocols. Stating the general conditions for the emergence of compositionality is an open problem.

\textbf{Choice of datasets} 
Confirmation on non-synthetic datasets is needed to fully underpin our method.

\textbf{Communication protocol} We assume a simple communication protocol with messages of length two and two features, each taking four values (except for experiments in Section \ref{sec:model_biases}). 

\textbf{Compositionality model} 
We assume a rather simple compositionality model based of independent features. Exploring non-trivial compositionality \citep{steinert-threlkeld_towards_2020,korbak2020measuring} might be an important conceptual development.

\textbf{Architecture} Our architecture might implicitly exhibit some unknown biases that increase compositionality. On the other hand, the architecture might be too simple to achieve strong results in tasks such as zero- or few- shot generalization. 

\textbf{Multi-agent interactions} 
We used supervised training setup, it would be natural to test the influence of noise in reinforcement learning scenarios.

\vspace{-.5em}
\section{Conclusions}\label{sec:conclusions}

In this paper, we theoretically show that inductive biases on both the training framework and the data are needed for the compositionality to emerge spontaneously in signaling games. 
We then formulate inductive biases in the loss function and prove that they are sufficient to achieve compositionality when coupled with communication over a noisy channel.
Consequently, we highlight the catalytic role of noise in the emergence of compositionality.
We perform a series of experiments in order to understand different aspects of the proposed framework better. We empirically validate that, indeed, a certain range of noise levels, dependent on the model and the data, promotes compositionality. 
Our work is foundational research and does not lead to any direct negative applications.

\begin{ack}
The work of Piotr Miłoś was supported by the Polish National Science Center grant UMO-2017/26/E/ST6/00622. The work of Tomasz Korbak was supported by the Leverhulme Doctoral Scholarship. We gratefully acknowledge Polish high-performance computing infrastructure PLGrid (HPC Centers: ACK Cyfronet AGH, PCSS) for providing computer facilities and support within computational grant no. PLG/2019/012498. Our experiments were managed using \url{https://neptune.ai}. We would like to thank the Neptune team for providing us access to the team version and technical support.
\end{ack}

\typeout{}
\bibliography{bibliography.bib}

\begin{thebibliography}{}

\bibitem[Barrett and Skyrms, 2017]{barrett_self-assembling_2017}
Barrett, J.~A. and Skyrms, B. (2017).
\newblock Self-assembling {Games}.
\newblock {\em The British Journal for the Philosophy of Science},
  68(2):329--353.

\bibitem[Batali, 1998]{batali1998computational}
Batali, J. (1998).
\newblock Computational simulations of the emergence of grammar.
\newblock {\em Approach to the Evolution of Language}, pages 405--426.

\bibitem[Bogin et~al., 2018]{bogin_emergence_2018}
Bogin, B., Geva, M., and Berant, J. (2018).
\newblock Emergence of {Communication} in an {Interactive} {World} with
  {Consistent} {Speakers}.
\newblock {\em arXiv preprint arXiv:1809.00549}.

\bibitem[Bouchacourt and Baroni, 2018]{bouchacourt_how_2018}
Bouchacourt, D. and Baroni, M. (2018).
\newblock How agents see things: {On} visual representations in an emergent
  language game.
\newblock {\em Empirical Methods in Natural Language Processing}.

\bibitem[Brighton and Kirby, 2006]{brighton_understanding_2006}
Brighton, H. and Kirby, S. (2006).
\newblock Understanding {Linguistic} {Evolution} by {Visualizing} the
  {Emergence} of {Topographic} {Mappings}.
\newblock {\em Artificial Life}, 12(2):229--242.

\bibitem[Burgess and Kim, 2018]{3dshapes18}
Burgess, C. and Kim, H. (2018).
\newblock 3d shapes dataset.
\newblock https://github.com/deepmind/3dshapes-dataset/.

\bibitem[Chaabouni et~al., 2020]{chaabouni_compositionality_2020}
Chaabouni, R., Kharitonov, E., Bouchacourt, D., Dupoux, E., and Baroni, M.
  (2020).
\newblock Compositionality and {Generalization} {In} {Emergent} {Languages}.
\newblock In {\em Proceedings of the 58th {Annual} {Meeting} of the
  {Association} for {Computational} {Linguistics}}, pages 4427--4442.

\bibitem[Choi et~al., 2018]{choi_compositional_2018}
Choi, E., Lazaridou, A., and de~Freitas, N. (2018).
\newblock Compositional {Obverter} {Communication} {Learning} {From} {Raw}
  {Visual} {Input}.
\newblock In {\em International Conference on Learning Representations}.

\bibitem[Chomsky, 1957]{chomsky_syntactic_1957}
Chomsky, N. (1957).
\newblock {\em Syntactic structures}.
\newblock Mouton de Gruyter.

\bibitem[Cogswell et~al., 2019]{cogswell_emergence_2019}
Cogswell, M., Lu, J., Lee, S., Parikh, D., and Batra, D. (2019).
\newblock Emergence of {Compositional} {Language} with {Deep} {Generational}
  {Transmission}.
\newblock {\em arXiv preprint arXiv:1904.09067}.

\bibitem[Das et~al., 2017]{das_learning_2017}
Das, A., Kottur, S., Moura, J.~M., Lee, S., and Batra, D. (2017).
\newblock Learning cooperative visual dialog agents with deep reinforcement
  learning.
\newblock In {\em Proceedings of the IEEE international conference on computer
  vision}, pages 2951--2960.

\bibitem[Fodor and Pylyshyn, 1988]{fodor_connectionism_1988}
Fodor, J.~A. and Pylyshyn, Z.~W. (1988).
\newblock Connectionism and cognitive architecture: a critical analysis.
\newblock {\em Cognition}, 28(1-2):3--71.

\bibitem[Foerster et~al., 2016]{foerster_learning_2016}
Foerster, J.~N., Assael, Y.~M., de~Freitas, N., and Whiteson, S. (2016).
\newblock Learning to {Communicate} with {Deep} {Multi}-{Agent} {Reinforcement}
  {Learning}.
\newblock {\em NIPS'16 Proceedings of the 30th International Conference on
  Neural Information Processing Systems}.

\bibitem[Frankle et~al., 2020]{frankle2020linear}
Frankle, J., Dziugaite, G.~K., Roy, D., and Carbin, M. (2020).
\newblock Linear mode connectivity and the lottery ticket hypothesis.
\newblock In {\em International Conference on Machine Learning}, pages
  3259--3269.

\bibitem[Fudenberg and Tirole, 1991]{fudenberg1991game}
Fudenberg, D. and Tirole, J. (1991).
\newblock {\em Game Theory}.
\newblock MIT Press.

\bibitem[Goyal et~al., 2017]{goyal2017accurate}
Goyal, P., Doll{\'a}r, P., Girshick, R., Noordhuis, P., Wesolowski, L., Kyrola,
  A., Tulloch, A., Jia, Y., and He, K. (2017).
\newblock Accurate, large minibatch sgd: Training imagenet in 1 hour.
\newblock {\em arXiv preprint arXiv:1706.02677}.

\bibitem[Guo et~al., 2020]{guo2020inductive}
Guo, S., Ren, Y., S{\l}owik, A., and Mathewson, K. (2020).
\newblock Inductive bias and language expressivity in emergent communication.
\newblock {\em arXiv preprint arXiv:2012.02875}.

\bibitem[Hernandez-Leal et~al., 2020]{Hernandez-LealK20}
Hernandez-Leal, P., Kartal, B., and Taylor, M.~E. (2020).
\newblock A very condensed survey and critique of multiagent deep reinforcement
  learning.
\newblock In {\em Proceedings of the 19th International Conference on
  Autonomous Agents and MultiAgent Systems}, pages 2146--2148.

\bibitem[Higgins et~al., 2017]{HigginsMPBGBML17}
Higgins, I., Matthey, L., Pal, A., Burgess, C., Glorot, X., Botvinick, M.,
  Mohamed, S., and Lerchner, A. (2017).
\newblock beta-{VAE}: Learning basic visual concepts with a constrained
  variational framework.
\newblock In {\em International Conference on Learning Representations}.

\bibitem[Jang et~al., 2017]{jang_categorical_2016}
Jang, E., Gu, S., and Poole, B. (2017).
\newblock Categorical reparametrization with gumbel-softmax.
\newblock In {\em Proceedings International Conference on Learning
  Representations 2017}.

\bibitem[Jaques et~al., 2019]{jaques_social_2018}
Jaques, N., Lazaridou, A., Hughes, E., Gulcehre, C., Ortega, P., Strouse, D.,
  Leibo, J.~Z., and De~Freitas, N. (2019).
\newblock Social {Influence} as {Intrinsic} {Motivation} for {Multi}-{Agent}
  {Deep} {Reinforcement} {Learning}.
\newblock In {\em International Conference on Machine Learning}, pages
  3040--3049.

\bibitem[Jiang et~al., 2019]{jiang2019language}
Jiang, Y., Gu, S., Murphy, K., and Finn, C. (2019).
\newblock Language as an abstraction for hierarchical deep reinforcement
  learning.
\newblock In {\em Advances in Neural Information Processing Systems},
  volume~32.

\bibitem[Kaiser and Bengio, 2018]{kaiser2018discrete}
Kaiser, {\L}. and Bengio, S. (2018).
\newblock Discrete autoencoders for sequence models.
\newblock {\em arXiv preprint arXiv:1801.09797}.

\bibitem[Kim and Mnih, 2018]{kim2018disentangling}
Kim, H. and Mnih, A. (2018).
\newblock Disentangling by factorising.
\newblock In {\em International Conference on Machine Learning}, pages
  2649--2658.

\bibitem[Kirby, 2001]{918430}
Kirby, S. (2001).
\newblock Spontaneous evolution of linguistic structure-an iterated learning
  model of the emergence of regularity and irregularity.
\newblock {\em IEEE Transactions on Evolutionary Computation}, 5(2):102--110.

\bibitem[Korbak et~al., 2019]{korbak2019developmentally}
Korbak, T., Zubek, J., Kuci{\'n}ski, {\L}., Mi{\l}o{\'s}, P., and
  Raczaszek-Leonardi, J. (2019).
\newblock Developmentally motivated emergence of compositional communication
  via template transfer.
\newblock {\em arXiv preprint arXiv:1910.06079}.

\bibitem[Korbak et~al., 2020]{korbak2020measuring}
Korbak, T., Zubek, J., and Raczaszek-Leonardi, J. (2020).
\newblock Measuring non-trivial compositionality in emergent communication.
\newblock {\em arXiv preprint arXiv:2010.15058}.

\bibitem[Kottur et~al., 2017]{kottur_natural_2017}
Kottur, S., Moura, J.~M., Lee, S., and Batra, D. (2017).
\newblock Natural {Language} {Does} {Not} {Emerge} '{Naturally}' in
  {Multi}-{Agent} {Dialog}.
\newblock {\em arXiv preprint arXiv:1706.08502}.

\bibitem[Kriegeskorte, 2008]{kriegeskorte_representational_2008}
Kriegeskorte, N. (2008).
\newblock Representational similarity analysis – connecting the branches of
  systems neuroscience.
\newblock {\em Frontiers in Systems Neuroscience}.

\bibitem[Kuci{\'n}ski et~al., 2020]{kucinski2020emergence}
Kuci{\'n}ski, {\L}., Ko{\l}odziej, P., and Mi{\l}o{\'s}, P. (2020).
\newblock Emergence of compositional language in communication through noisy
  channel.
\newblock {\em ICML 2020 Workshop LaReL}.

\bibitem[Lake et~al., 2017]{lake_building_2016}
Lake, B.~M., Ullman, T.~D., Tenenbaum, J.~B., and Gershman, S.~J. (2017).
\newblock Building {Machines} {That} {Learn} and {Think} {Like} {People}.
\newblock {\em Behavioral and brain sciences}, 40.

\bibitem[Lazaridou et~al., 2018]{lazaridou_emergence_2018}
Lazaridou, A., Hermann, K.~M., Tuyls, K., and Clark, S. (2018).
\newblock Emergence of linguistic communication from referential games with
  symbolic and pixel input.
\newblock In {\em International Conference on Learning Representations}.

\bibitem[Lazaridou et~al., 2017]{lazaridou_multi-agent_2016}
Lazaridou, A., Peysakhovich, A., and Baroni, M. (2017).
\newblock Multi-{Agent} {Cooperation} and the {Emergence} of ({Natural})
  {Language}.
\newblock In {\em International Conference on Learning Representations}.

\bibitem[Lee et~al., 2018]{lee_emergent_2017}
Lee, J., Cho, K., Weston, J., and Kiela, D. (2018).
\newblock Emergent translation in multi-agent communication.
\newblock In {\em International Conference on Learning Representations}.

\bibitem[Levenshtein, 1966]{levenshtein1966binary}
Levenshtein, V.~I. (1966).
\newblock Binary codes capable of correcting deletions, insertions, and
  reversals.
\newblock In {\em Soviet physics doklady}, volume~10, pages 707--710. Soviet
  Union.

\bibitem[Lewis, 1969]{lewis_convention:_1969}
Lewis, D.~K. (1969).
\newblock {\em Convention: a philosophical study}.
\newblock Blackwell.

\bibitem[Li and Bowling, 2019]{li_ease--teaching_2019}
Li, F. and Bowling, M. (2019).
\newblock Ease-of-teaching and language structure from emergent communication.
\newblock In {\em Advances in Neural Information Processing Systems}, pages
  15825--15835.

\bibitem[Locatello et~al., 2019]{locatello2019challenging}
Locatello, F., Bauer, S., Lucic, M., Raetsch, G., Gelly, S., Sch{\"o}lkopf, B.,
  and Bachem, O. (2019).
\newblock Challenging common assumptions in the unsupervised learning of
  disentangled representations.
\newblock In {\em International Conference on Machine Learning}, pages
  4114--4124.

\bibitem[Mordatch and Abbeel, 2018]{mordatch_emergence_2017}
Mordatch, I. and Abbeel, P. (2018).
\newblock Emergence of grounded compositional language in multi-agent
  populations.
\newblock In {\em Thirty-second AAAI conference on artificial intelligence}.

\bibitem[Nowak and Krakauer, 1999]{Nowak8028}
Nowak, M.~A. and Krakauer, D.~C. (1999).
\newblock The evolution of language.
\newblock {\em Proceedings of the National Academy of Sciences},
  96(14):8028--8033.

\bibitem[Oliphant and Batali, 1997]{oliphant_learning_1997}
Oliphant, M. and Batali, J. (1997).
\newblock Learning and the {Emergence} of {Coordinated} {Communication}.
\newblock {\em Center for Research on Language Newsletter}, 11.

\bibitem[Raczaszek-Leonardi et~al., 2018]{raczaszek-leonardi_language_2018}
Raczaszek-Leonardi, J., Nomikou, I., Rohlfing, K.~J., and Deacon, T.~W. (2018).
\newblock Language {Development} {From} an {Ecological} {Perspective}:
  {Ecologically} {Valid} {Ways} to {Abstract} {Symbols}.
\newblock {\em Ecological Psychology}, 30(1):39--73.

\bibitem[Ren et~al., 2020]{ren2020compositional}
Ren, Y., Guo, S., Labeau, M., Cohen, S.~B., and Kirby, S. (2020).
\newblock Compositional languages emerge in a neural iterated learning model.
\newblock In {\em International Conference on Learning Representations}.

\bibitem[Resnick et~al., 2020]{resnick_capacity_2020}
Resnick, C., Gupta, A., Foerster, J., Dai, A.~M., and Cho, K. (2020).
\newblock Capacity, {Bandwidth}, and {Compositionality} in {Emergent}
  {Language} {Learning}.
\newblock In {\em Autonomous Agents and Multiagent Systems}, pages 1125--1133.

\bibitem[Salakhutdinov and Hinton, 2009]{salakhutdinov2009semantic}
Salakhutdinov, R. and Hinton, G. (2009).
\newblock Semantic hashing.
\newblock {\em International Journal of Approximate Reasoning}, 50(7):969--978.

\bibitem[Shannon, 1948]{shannon1948mathematical}
Shannon, C.~E. (1948).
\newblock A mathematical theory of communication.
\newblock {\em The Bell system technical journal}, 27(3):379--423.

\bibitem[Skyrms, 2010]{skyrms_signals:_2010}
Skyrms, B. (2010).
\newblock {\em Signals: evolution, learning, \& information}.
\newblock Oxford University Press, Oxford ; New York.

\bibitem[S{\l}owik et~al., 2020a]{slowik2020towards}
S{\l}owik, A., Gupta, A., Hamilton, W.~L., Jamnik, M., and Holden, S.~B.
  (2020a).
\newblock Towards graph representation learning in emergent communication.
\newblock {\em arXiv preprint arXiv:2001.09063}.

\bibitem[S{\l}owik et~al., 2020b]{slowik2020exploring}
S{\l}owik, A., Gupta, A., Hamilton, W.~L., Jamnik, M., Holden, S.~B., and Pal,
  C. (2020b).
\newblock Exploring structural inductive biases in emergent communication.
\newblock {\em arXiv preprint arXiv:2002.01335}.

\bibitem[Srivastava et~al., 2014]{srivastava2014dropout}
Srivastava, N., Hinton, G., Krizhevsky, A., Sutskever, I., and Salakhutdinov,
  R. (2014).
\newblock Dropout: a simple way to prevent neural networks from overfitting.
\newblock {\em The journal of machine learning research}, 15(1):1929--1958.

\bibitem[Steinert-Threlkeld, 2020]{steinert-threlkeld_towards_2020}
Steinert-Threlkeld, S. (2020).
\newblock Towards the {Emergence} of {Non}-trivial {Compositionaliy}.
\newblock {\em Philosophy of Science}.

\bibitem[Van~der Maaten and Hinton, 2008]{van2008visualizing}
Van~der Maaten, L. and Hinton, G. (2008).
\newblock Visualizing data using t-sne.
\newblock {\em Journal of machine learning research}, 9(11).

\bibitem[Wattenberg et~al., 2016]{wattenberg2016how}
Wattenberg, M., Viégas, F., and Johnson, I. (2016).
\newblock How to use t-sne effectively.
\newblock {\em Distill}.

\end{thebibliography}
\bibliographystyle{apalike}

\appendix

\clearpage

\clearpage
\appendix

\section{Datasets}\label{app:datasets}

\subsection{Shapes3d}
Shapes3d is a dataset (see \citet{3dshapes18} and the Tensorflow Datasets package) consisting of $64\times 64\times 3$ RGB images of objects having six independent features (floor color, wall color, object color, scale, shape, and orientation), see Figure \ref{fig:datasets}. In this paper, we use four shapes (cube, cylinder, sphere, and rounded cylinder) and four object colors (red, orange, yellow, and green), totaling $192000$ images.

\subsection{Obverter}\label{app:sec:obverter}
The obverter dataset (\citet{bogin_emergence_2018}) is available  at the following address:  \url{https://github.com/benbogin/obverter}. 
The original dataset consists of $128\times 128\times 3$ RGB images of objects having eight colors and five shapes. In this paper, we used four colors (blue, cyan, gray, green) and four shapes (box, cylinder, ellipsoid, sphere), see Figure \ref{fig:datasets}. 
We have generated 1000 samples for each color-shape combination using a generation script available at the dataset repository, hence the total number of images is $160000$. Since the qualitative results were similar, and in the interest of brevity, we report results for the obverter only for the main experiment (Appendix \ref{sec:nostalgic_lovelace}).

\section{Experimental setup}\label{app:sec:experimental_setup}

\subsection{Architecture}\label{app:sec:architecture}

The network consists of three main parts: the sender, the receiver, and the noisy discrete channel between them see Figure \ref{fig:nn_arch}.
The sender network consists of two convolutional layers (with 8 filters, kernel $3\times 3$, stride 1, and elu activation function), each coupled with a $2\times 2$ max pool layer with stride 2.
The last max pool layer's output is passed through two dense layers (with 64 neurons and elu activation) and a linear classifier with softmax for each symbol.
The noisy channel layer consists of a dense layer with $|A_s|$ neurons, a fixed weights matrix, and a $\log$ activation function. This is followed by a Gumbel softmax layer.
The receiver network takes two encoded symbols as input and concatenates them to obtain one input vector $s$.
Consequently, $s$ and $1-s$ are passed to the dense layers, the result is summed up and processed by the elu activation function and two dense layers (similarly to \citet{kaiser2018discrete}).
There are two linear classifiers with a softmax layer at the output: one for the shape and one for the color. Each dense layer in the receiver has 64 neurons.

\subsection{Hyperparameters and training}

For training, we used 
$\lambda_{KL}=0.01$, $\lambda_{l_2}=0.0003$, an Adam optimizer (with $\beta_1=0.9$, $\beta_2=0.999$), 
learning rate $0.0001$, and a batch size of 64. The same set of hyperparameters was used for all the experiments. 
The hyperparameters were chosen on the original obverter dataset available at the repository referenced in Appendix \ref{app:sec:obverter}.
We used a grid search over parameters: learning rate ($1e-2, 1e-3, 1e-4, 3e-4$), 
kl regularization coefficient ($1e-1, 1e-2, 2e-2, 3e-2, 1e-3, 3e-3, 5e-3, 1e-4$), 
the number of CNN's filters ($8, 16$), the CNN's filter sizes ($3\times 3, 5\times 5$),
the sender's embedding size ($32, 64$), 
$l_2$ regularizer weight ($1e-2, 1e-3, 3e-3, 1e-4, 3e-4, 1e-6$), and the number of neurons in receiver's dense layers ($32, 64$).

Each experiment was run on $100$ seeds and had $200000$ network updates. The dataset was split into the training set ($90\%$ of the total) and on the test set (remaining $10\%$ of the dataset). The evaluation was done every $2000$ updates on the test set.

\begin{figure*}[h]
    \centering
    \includegraphics[width=\textwidth]{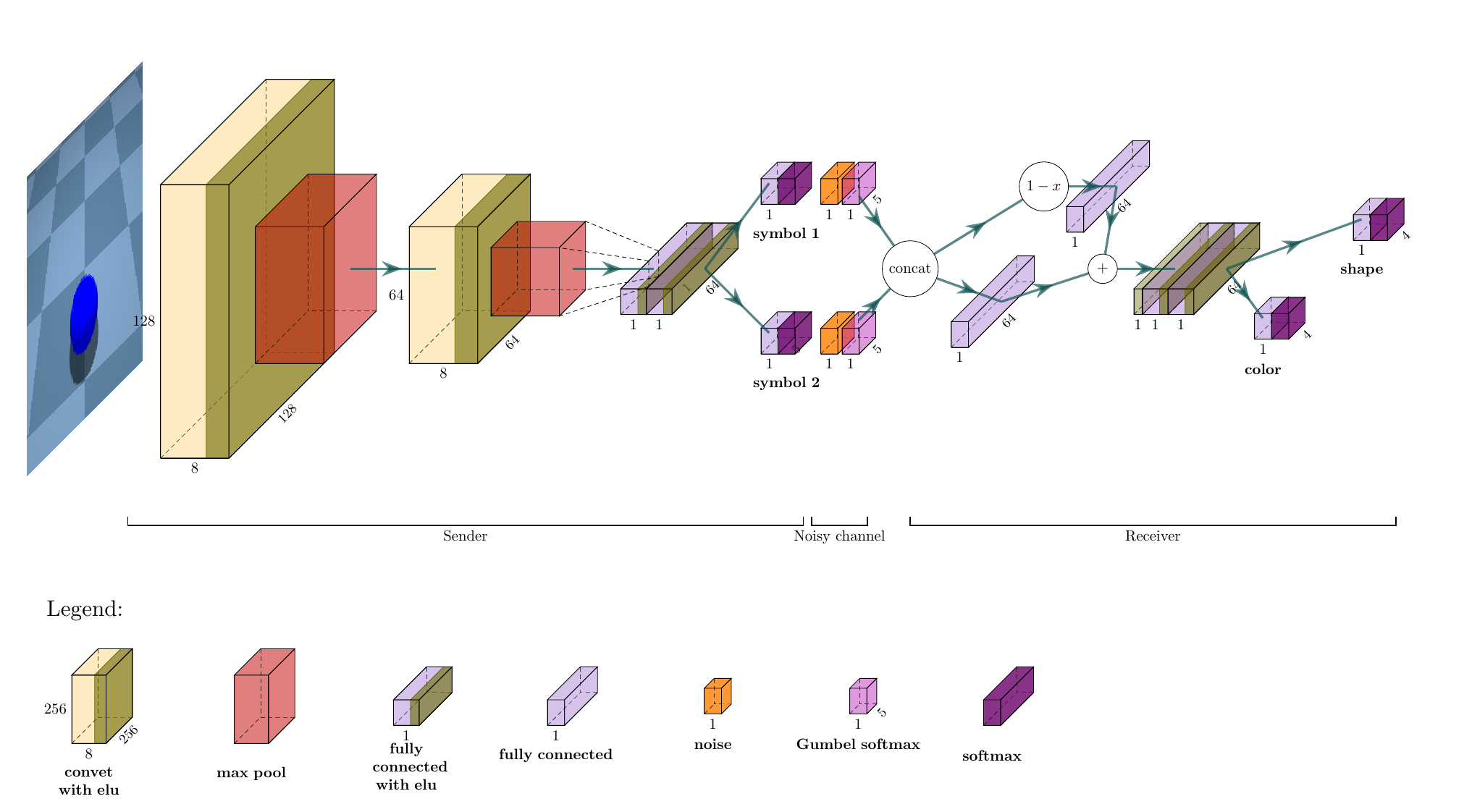}
    \caption{The architecture of the neural network. Here $|A_s|=5$.}
    \label{fig:nn_arch}
\end{figure*}

\subsection{Infrastructure used} \label{sec:infrastructure_used} 

The typical configuration of a computational node used in our experiments was: the Intel Xeon E5-2697 $2.60$GHz processor with $128$GB memory. On a single node, we ran $4$ or $5$ experiments at the same time. A single experiment (single seed) takes about $5$ hours. We did not use GPUs; we found that with the relatively small size of the network (see Appendix \ref{app:sec:architecture})
it offers only slight wall-time improvement while generating substantial additional costs.

\section{Training pipeline}\label{app:sec:training_pipeline}

\subsection{Detailed derivation}

The dataset of images observed by the sender is denoted by $\mathcal D$. 
Each element of $\mathcal D$ has $K$ independent features $f_1,\ldots, f_K$ (here we consider $K=2$). 
Both the sender and the receiver are modeled as neural networks ($s_\theta$ and $r_\psi$, respectively; for details see Appendix \ref{app:sec:experimental_setup}). 
The sender network takes an image from $\mathcal D$ as input and returns a distribution over the space of messages of length $L$ (here we assume $L=K=2$). We assume that conditionally on the image, the symbols in the message are independent and take values in a finite alphabet $\mathcal A_s=\{1, \ldots, d_s\}$. 
We furthermore assume, that features are enumerated with $A_f=\{1,\ldots, d_f\}$ and the receiver's alphabet is $\mathcal A_r=\{1,\ldots, d_r\}$.
Formally, $s_\theta(\mathtt{img}) = (s^i_{\theta}(\mathtt{img}))_{i=1}^ K$, 
where 
$s_\theta^i(\mathtt{img})=(s_{j,\theta}^i(\mathtt{img}))_{j=1}^{d_s}\in \mathcal P(A_s)$\footnotemark{} represents the probability distribution corresponding to the $i$th symbol.
\footnotetext{$\mathcal P(A)=\{p\in\mathbb R^{|A|}:p_i\ge 0, \sum_{i\in A}p_{i}=1\}$.}
Define a function $\mathtt{noise}\colon \mathcal P(A_s)\to \mathbb R^{d_s}$ as follows: 
\begin{equation}\label{app:eq:noise}
\mathtt{noise}(x) = \log (Wx), 
\end{equation}
where $W\in\mathbb R^{d_s\times d_s}$ is a fixed matrix, such that $Wx>0$ and   $Wx\in\mathcal P(A_s)$, for any $x\in\mathcal P(A_s)$\footnotemark{}. 
\footnotetext{
We could also define $\mathtt{noise}$ for all $x\in\mathbb R^m$, for some $m$, by first applying $\mathtt{softmax}$ to $x$, and then using \eqref{app:eq:noise}.}
The second condition on $W$ is satisfied, for instance, by a family of stochastic matrices; several examples are also given at the end of this section.  
In this paper, we use $W$ defined as
\begin{equation*}
W_{ij} = 
\begin{cases}
1-\varepsilon, & i = j,\\
\frac{\varepsilon}{d_s-1}, & i \ne j.
\end{cases}
\end{equation*}
Let $\widehat{s}_\theta^i(\mathtt{img})$ denote the 
logits of $i$th symbol distribution which passes through the noisy channel: 
\[
\widehat{s}_\theta^i(\mathtt{img}) = \mathtt{noise}(s_\theta^i(\mathtt{img})).
\]
Suppose further that $g^i=(g_1^i,\ldots,g_{d_s}^i)$ is a vector of i.i.d. $\text{Gumbel}(0,1)$ random variables and  
define the following functions:
\begin{align*}
\mathtt{gumbel\_sample}(x; g) &= \argmax_i(x_i+g_i), \\
\mathtt{gumbel\_softmax}(x;\tau, g)_i &= \frac{\text{exp}((x_i+g_i)/\tau)}{\sum_{j=1}^k \text{exp}((x_j+g_j)/\tau)}. 
\end{align*}
Let
\begin{align*}
\widehat{\mathfrak{m}}_i&= \mathtt{gumbel\_softmax}(\widehat{s}_\theta^i(\mathtt{img}); \tau, g^i)\in\mathbb R^{d_s}.
\end{align*}
The receiver neural network is denoted as 
$r_\psi(\mathfrak{m}) = (r^i_{\psi}(\mathfrak{m}))_{i=1}^ K$, 
where 
$r_\psi^i(\mathfrak{m})=(r_{j,\psi}^i(\mathfrak{m}))_{j=1}^d\in \mathcal P(A_r)$ represents the probability distribution on $A_r$, corresponding to the $i$th feature. 

In the Straight-Through mode (see \citet{jang_categorical_2016}), 
$r_\psi$ takes $\widehat{\mathfrak{m}}$ as input half of the time, and the remaining half of the time, it takes  $\widetilde{\mathfrak{m}}$. Here 
\begin{align*}
\widetilde{\mathfrak{m}} &= \mathtt{stop\_gradient}(\overline{\mathfrak{m}}-\widehat{\mathfrak{m}})+\widehat{\mathfrak{m}},\\
\overline{\mathfrak{m}}_i&= \mathtt{one\_hot}(\omega_i)\in\mathbb R^{d_s},\\
\omega_i &= \mathtt{gumbel\_sample}(s^i_\theta(\mathtt{img}); g^i) \in A^{d_s}, 
\end{align*}
i.e. $(\omega_1, \ldots, \omega_K)$ is a sampled noisy message.
The neural networks are trained using the following loss function:
\[
\mathcal L = \mathcal L_{xent} + \lambda_{KL}\mathcal L_{KL} + \lambda_{l_2}\mathcal L_{l_2}.
\]
The cross-entropy loss is defined as
\[
\mathcal L_{xent} = -\mathbb E_{(\mathtt{img}, f_1, \ldots, f_K)\sim \mathcal D}
\left[
\sum_{i=1}^K \log r_{f_i, \psi}(\widetilde{\mathfrak{m}}(\mathtt{img})) \right].
\]

Furthermore, 
$
\mathcal L_{KL}=\mathbb E_{x\sim\mathcal D}\left[\sum_{i=1}^K 
\text{KL}(U(\mathcal A_s)||s_\theta^i(x))\right]
$
and 
$
\mathcal L_{l_2}=||\theta||_2 + ||\psi||_2
$, where $U(\mathcal A_s)$ denotes the uniform distribution of $\mathcal A_s$.
The $\text{KL}$ loss incentives the language to be a one-to-one mapping.

\subsection{Alternative noise architecture}
The above implementation of noise is not the only one possible. We can apply noise after the message is formed. Denote the uncorrupted sender's message
\begin{align*}
	{\mathfrak{m}}_i&= \mathtt{gumbel\_softmax}(\log s_\theta^i(\mathtt{img}); \tau, g^i)\in\mathbb R^{d_s}.
\end{align*}
Further, let $\rho_i$ be uniformly sampled random permutations of $A_s$ and $\epsilon_i$ are Bernoulli random variables such that $\mathbb{P}(\epsilon_i=1) = \epsilon = 1- \mathbb{P}(\epsilon_i=0)$, with $\epsilon>0$. We define the noise matrices
\[
	N_i = \epsilon_i P^{\rho_i} + (1 - \epsilon_i) \mathbb{I},
\]
where $P^{\rho_i}$ is the permutation matrix corresponding to $\rho_i$ and $\mathbb{I}$ is the identity matrix. The corrupted message is then given by 
\[
	\widehat{m}_i = N_i {\mathfrak{m}}_i.
\]
Above we assume that $\widehat{m}_i, {\mathfrak{m}}_i$ is encoded in the one-hot vector form $\in \mathbb{R}^{d_s}$. We note that this particular implementation of the noise has the advantage of being differentiable using the standard autograd methods.\footnote{We cannot differentiate the random sampling of $\rho_i, \epsilon_i$ but we can differentiate multiplication with respect to $N_i$.}

\section{Detailed results}\label{sec:app:results}
Each experiment was run on 100 seeds. When presenting results we give $95\%$-confidence intervals, bootstrapped using 4000 resamples.

\subsection{Main experiment}\label{sec:nostalgic_lovelace}
The results for the shapes3d dataset are is visualized in Figure \ref{fig:app:avg_with_cis_shapes3d_repeat} 
(which is the copy of Figure \ref{fig:avg_with_cis_romantic_mcnulty} placed here for convenience), and numerical results are summarized in  
Table  \ref{app:tab:full_baseline_table_shapes3d}. 
Similarly, for the obverter dataset, the results can be found in Figure \ref{fig:avg_with_cis_obverter} and in Table \ref{app:tab:full_baseline_table_obverter}. 
From the qualitative perspective, the results for the obverter dataset are similar to the ones obtained for the shapes3d dataset. Having said that, we can see that overall the metrics are more stable (resulting in narrower confidence intervals), the best performing noise level is slightly different ($0.12$), and the density evolution of topo across different noise levels appears to be smoother.

\begin{figure}[H]
    \centering
    \includegraphics[width=\textwidth]{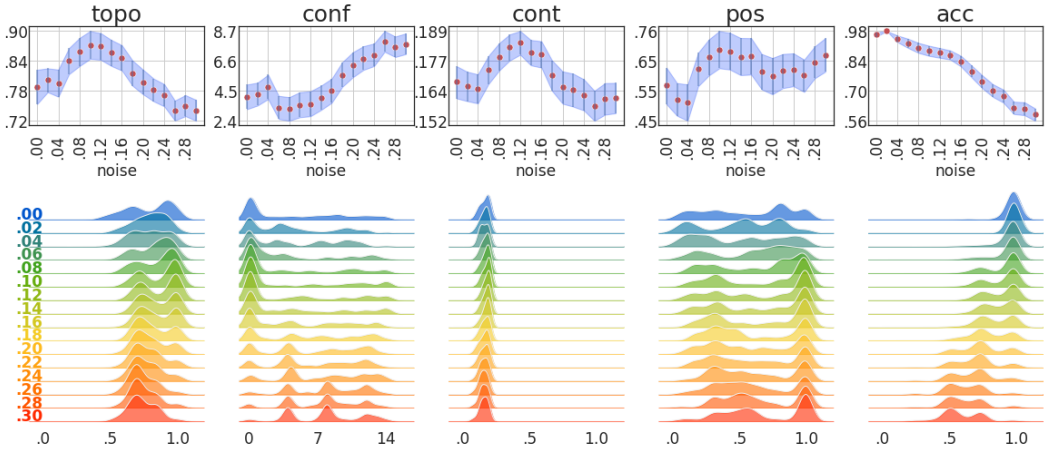}
    \caption{\small Shapes3d dataset. Top panel: average value of metrics for various noise levels. The shaded area corresponds to bootstrapped $95\%$-confidence intervals. Bottom panel: kernel density estimators for metrics and noise levels across seeds. Here \emph{topo} stands for topographic similarity, \emph{conf} for conflict count, \emph{cont} for context independence, \emph{pos} for positional disentanglement and \emph{acc} for accuracy.}
    \label{fig:app:avg_with_cis_shapes3d_repeat}
	\vspace{-1.5em}
\end{figure}

\begin{figure}[H]
    \centering
    \includegraphics[width=\textwidth]{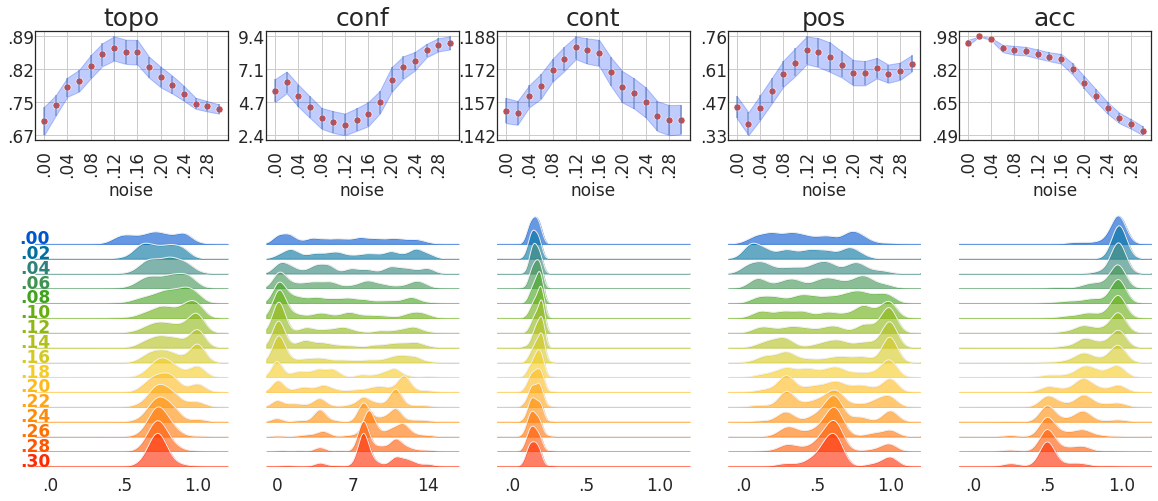}
    \caption{\small Obverter dataset. Top panel: average value of metrics for various noise levels. The shaded area corresponds to bootstrapped $95\%$-confidence intervals for this estimator. Bottom panel: kernel density estimators for metrics and noise levels across seeds.
    Bottom panel: kernel density estimators for metrics and noise levels across seeds. Here \emph{topo} stands for topographic similarity, \emph{conf} for conflict count, \emph{cont} for context independence, \emph{pos} for positional disentanglement and \emph{acc} for accuracy.
    }
    \label{fig:avg_with_cis_obverter}
\end{figure}

As observed in Section \ref{sec:baseline_experiment}, there is an interesting and non-trivial interplay between compositionality and accuracy. In Figure \ref{fig:shapes3d_filter} we visualize topographic similarity behavior when conditioned on experiments with high accuracy (for shapes3d; for the obverter see Figure \ref{fig:obverter_filter}). We can see an increase in the metrics values across all noise levels with the increase of accuracy. For example, for noise $0.1$ and threshold $0.85$, topo equals $0.91$, four percentage points higher than for unconditional case. 
We can see that increasing the threshold also strengthen the impact of noise on compositionality (which can be seen by increasing the profile of topo plots). 
Additionally, the count curves for smaller noises dominate the ones for higher noises. 
Notice, however, that the number of experiments exceeding some accuracy threshold declines as the threshold increases. For example, there are $80$ experiments with noise level $0.1$ exceeding the threshold of $0.85$ accuracy, but only $4$ with noise level $0.3$. 
This implies that conclusions for high accuracy thresholds should be treated with care.

\begin{figure}[H]
    \centering
    \includegraphics[width=\textwidth]{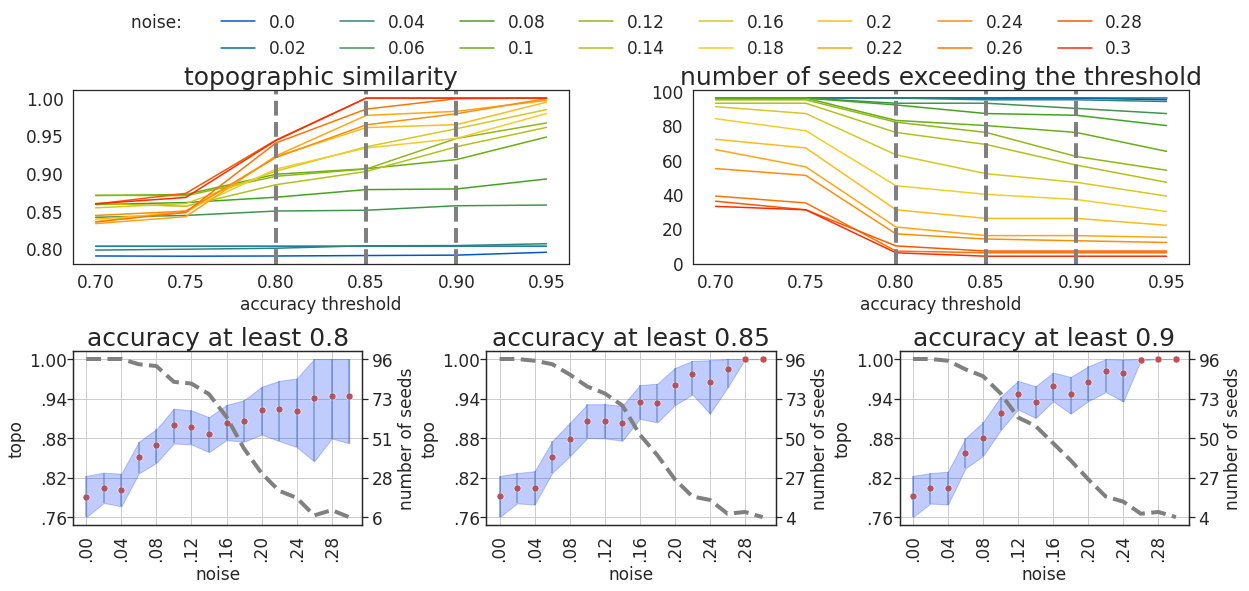}
    \caption{\small Shapes3d dataset. Top left: The values of topo computed for each noise level (hues) and on seeds exceeding a certain accuracy threshold ($x$-axis). Vertical dashed lines represent three cross-sections, visualized in the bottom panel.
    Top right: Similar to the left panel, but instead of topo we visualize the number of seeds with accuracy at least as a given threshold ($x$-axis). Vertical dashed lines represent three cross-sections, visualized in the bottom panel.
    Bottom: Each of the plots represents a cross-section of the plots in the top panel, taken at points $0.80$, $0.85$, and $0.90$, respectively. 
    On the left axis of each figure is the range of topo, whereas on the right axis is the number of seeds with accuracy exceed the corresponding level. On the $x$-axis are the noise levels. The scatter plot with 95\%-confidence intervals represents the values of topo. The gray dashed line represents the number of seeds with accuracy exceeding a given threshold, for each of the noise levels.}
    \label{fig:shapes3d_filter}
\end{figure}

\begin{figure}[H]
    \centering
    \includegraphics[width=\textwidth]{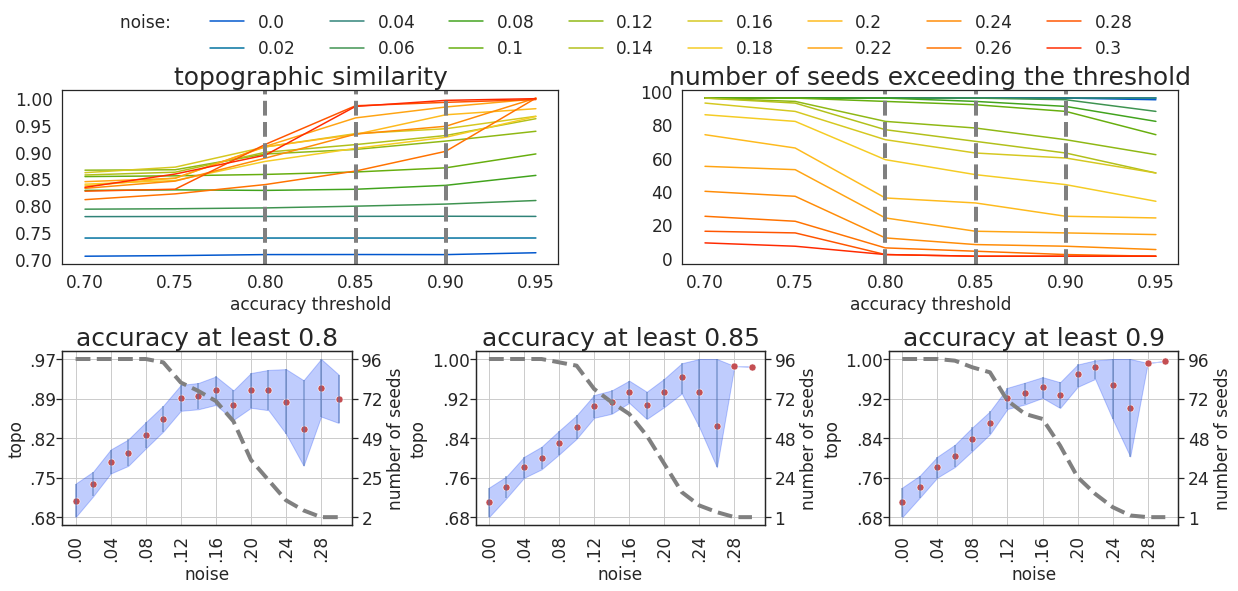}
    \caption{\small Obverter dataset. Top left: The values of topo computed for each noise level (hues) and on seeds exceeding a certain accuracy threshold ($x$-axis). Vertical dashed lines represent three cross-sections, visualized in the bottom panel.
    Top right: Similar to the left panel, but instead of topo we visualize the number of seeds with accuracy at least as a given threshold ($x$-axis). Vertical dashed lines represent three cross-sections, visualized in the bottom panel.
    Bottom: Each of the plots represents a cross-section of the plots in the top panel, taken at points $0.80$, $0.85$, and $0.90$, respectively. 
    On the left axis of each figure is the range of topo, whereas on the right axis is the number of seeds with accuracy exceed the corresponding level. On the $x$-axis are the noise levels. The scatter plot with 95\%-confidence intervals represents the values of topo. The gray dashed line represents the number of seeds with accuracy exceeding a given threshold, for each of the noise levels.}
    \label{fig:obverter_filter}
\end{figure}

For the main experiment, we also provide pair-plots for all metrics and three noise levels: $0.0$, $0.1$, and $0.2$,  see Figure \ref{fig:shapes3d_pairplot} and Figure \ref{fig:obverter_pairplot} for the shapes3d and the obverter datasets, respectively. 
It shows the Spearman correlation between metrics, broken down to noise levels (color-coded circles in the upper triangle of the grid) as well as for the entire group (white circle in the upper triangle of the grid). We can see in Figure \ref{fig:shapes3d_pairplot} that the metrics are highly correlated (the negative correlation with conflict count follows by definition of the metric, see Section \ref{sec:compositionality_metrics}). 
For zero-noise (blue color), we see that accuracy is high irrespective of the compositionality metrics, resulting in an almost vertical line. Looking at the topo-acc cell, we can see that for mediocre accuracy values, the noise level $0.2$ (yellow color) tends to score higher in topo metrics than the noise level $0.1$ (green color). This relation reverses for accuracy values closer to $1.0$. 
For topo-conf and topo-pos cells, we see a visible linear correlation, with the effect weakening slightly for higher noise levels.
At the topo-cont cell, a lot of the mass of all noise levels is occupied in the center, but the noise level $0.1$ more frequently stays in the upper right corner. Similar observations can be done for Figure \ref{fig:obverter_pairplot}.

\begin{figure}[H]
    \centering
    \includegraphics[width=\textwidth]{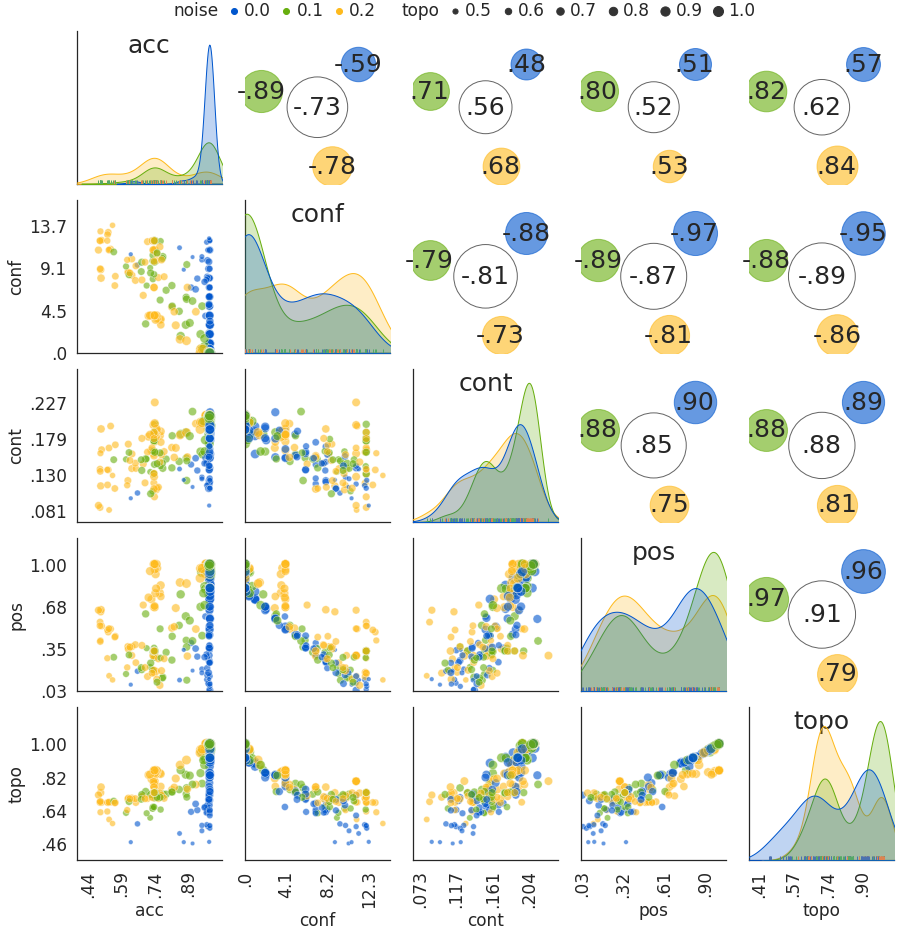}
    \caption{\small Shapes3d dataset. The lower triangle of the grid: color-coded scatter plot for experiments with different noise levels.  The upper triangle of the grid: visualization of Spearman correlation between metrics. The large circle with white fill shows the correlation of metrics value without a split into noise levels. The smaller color-coded circles represent the in-group correlation. Diagonal: kernel density estimators for each metric and noise level.
    Here \emph{topo} stands for topographic similarity, \emph{conf} for conflict count, \emph{cont} for context independence, \emph{pos} for positional disentanglement and \emph{acc} for accuracy.}
    \label{fig:shapes3d_pairplot}
\end{figure}

\begin{figure}[H]
    \centering
    \includegraphics[width=\textwidth]{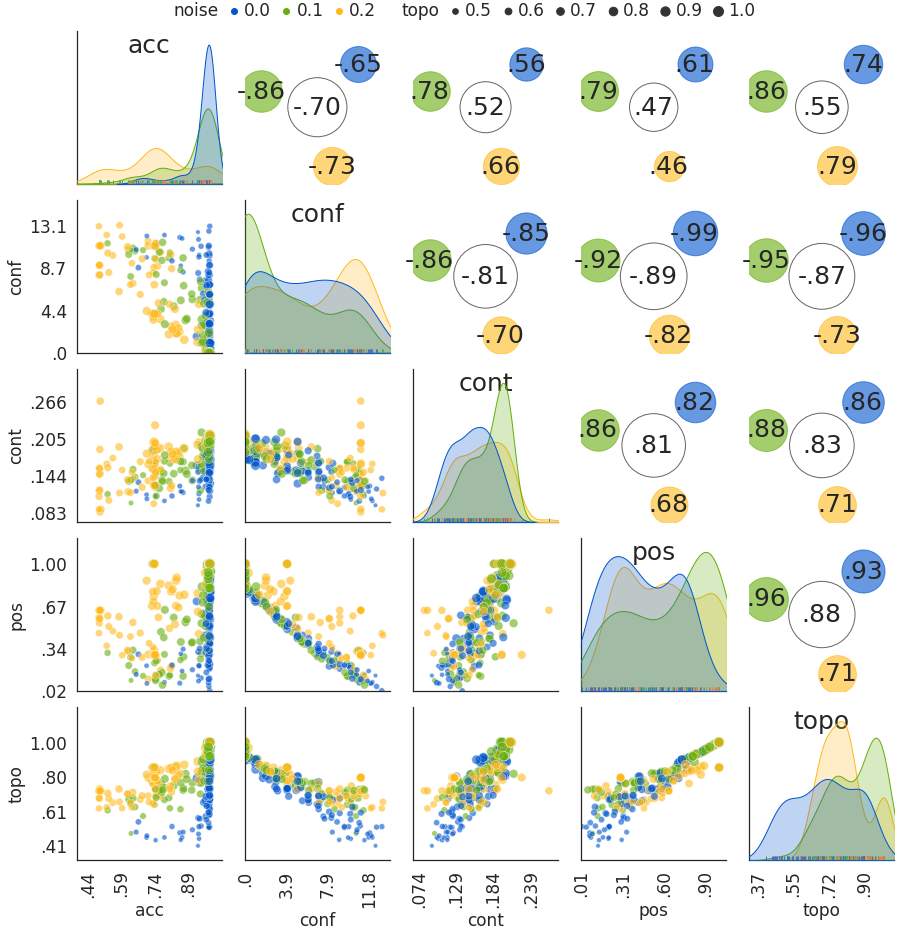}
    \caption{\small Obverter dataset. The lower triangle of the grid: color-coded scatter plot for experiments with different noise levels.  The upper triangle of the grid: visualization of Spearman correlation between metrics. The large circle with white fill shows the correlation of metrics value without a split into noise levels. The smaller color-coded circles represent the in-group correlation. Diagonal: kernel density estimators for each metric and noise level.
    Here \emph{topo} stands for topographic similarity, \emph{conf} for conflict count, \emph{cont} for context independence, \emph{pos} for positional disentanglement and \emph{acc} for accuracy.}
    \label{fig:obverter_pairplot}
\end{figure}

\begin{table}[H]
    \centering
    \begin{tabular}{l|ccccc}
\toprule
noise &                                              topo &                                              conf &                                                 cont &                                               pos &                                               acc \\ \midrule
0.00  &  \makecell{$0.79$ \\ \scriptsize{$[0.76, 0.82]$}} &  \makecell{$4.08$ \\ \scriptsize{$[3.21, 4.94]$}} &  \makecell{$0.168$ \\ \scriptsize{$[0.161, 0.175]$}} &  \makecell{$0.57$ \\ \scriptsize{$[0.51, 0.63]$}} &  \makecell{$0.97$ \\ \scriptsize{$[0.96, 0.98]$}} \\ \midrule
0.02  &  \makecell{$0.80$ \\ \scriptsize{$[0.78, 0.82]$}} &  \makecell{$4.26$ \\ \scriptsize{$[3.46, 5.08]$}} &  \makecell{$0.166$ \\ \scriptsize{$[0.160, 0.172]$}} &  \makecell{$0.52$ \\ \scriptsize{$[0.47, 0.58]$}} &  \makecell{$0.98$ \\ \scriptsize{$[0.98, 0.98]$}} \\ \midrule
0.04  &  \makecell{$0.80$ \\ \scriptsize{$[0.77, 0.82]$}} &  \makecell{$4.78$ \\ \scriptsize{$[3.92, 5.64]$}} &  \makecell{$0.165$ \\ \scriptsize{$[0.159, 0.171]$}} &  \makecell{$0.51$ \\ \scriptsize{$[0.45, 0.57]$}} &  \makecell{$0.95$ \\ \scriptsize{$[0.93, 0.96]$}} \\ \midrule
0.06  &  \makecell{$0.84$ \\ \scriptsize{$[0.81, 0.86]$}} &  \makecell{$3.31$ \\ \scriptsize{$[2.52, 4.15]$}} &  \makecell{$0.173$ \\ \scriptsize{$[0.167, 0.179]$}} &  \makecell{$0.62$ \\ \scriptsize{$[0.57, 0.68]$}} &  \makecell{$0.93$ \\ \scriptsize{$[0.90, 0.95]$}} \\ \midrule
0.08  &  \makecell{$0.86$ \\ \scriptsize{$[0.83, 0.88]$}} &  \makecell{$3.22$ \\ \scriptsize{$[2.40, 4.06]$}} &  \makecell{$0.179$ \\ \scriptsize{$[0.173, 0.184]$}} &  \makecell{$0.66$ \\ \scriptsize{$[0.60, 0.73]$}} &  \makecell{$0.91$ \\ \scriptsize{$[0.88, 0.93]$}} \\ \midrule
0.10  &  \makecell{$0.87$ \\ \scriptsize{$[0.84, 0.90]$}} &  \makecell{$3.49$ \\ \scriptsize{$[2.64, 4.41]$}} &  \makecell{$0.183$ \\ \scriptsize{$[0.177, 0.188]$}} &  \makecell{$0.69$ \\ \scriptsize{$[0.63, 0.76]$}} &  \makecell{$0.89$ \\ \scriptsize{$[0.87, 0.92]$}} \\ \midrule
0.12  &  \makecell{$0.87$ \\ \scriptsize{$[0.84, 0.89]$}} &  \makecell{$3.58$ \\ \scriptsize{$[2.71, 4.47]$}} &  \makecell{$0.184$ \\ \scriptsize{$[0.179, 0.189]$}} &  \makecell{$0.69$ \\ \scriptsize{$[0.62, 0.75]$}} &  \makecell{$0.88$ \\ \scriptsize{$[0.86, 0.90]$}} \\ \midrule
0.14  &  \makecell{$0.85$ \\ \scriptsize{$[0.83, 0.88]$}} &  \makecell{$3.99$ \\ \scriptsize{$[3.16, 4.87]$}} &  \makecell{$0.180$ \\ \scriptsize{$[0.175, 0.186]$}} &  \makecell{$0.67$ \\ \scriptsize{$[0.60, 0.73]$}} &  \makecell{$0.87$ \\ \scriptsize{$[0.85, 0.89]$}} \\ \midrule
0.16  &  \makecell{$0.84$ \\ \scriptsize{$[0.82, 0.87]$}} &  \makecell{$4.49$ \\ \scriptsize{$[3.66, 5.36]$}} &  \makecell{$0.180$ \\ \scriptsize{$[0.174, 0.185]$}} &  \makecell{$0.67$ \\ \scriptsize{$[0.61, 0.73]$}} &  \makecell{$0.84$ \\ \scriptsize{$[0.82, 0.87]$}} \\ \midrule
0.18  &  \makecell{$0.81$ \\ \scriptsize{$[0.79, 0.84]$}} &  \makecell{$5.63$ \\ \scriptsize{$[4.78, 6.52]$}} &  \makecell{$0.171$ \\ \scriptsize{$[0.164, 0.177]$}} &  \makecell{$0.62$ \\ \scriptsize{$[0.56, 0.68]$}} &  \makecell{$0.80$ \\ \scriptsize{$[0.77, 0.82]$}} \\ \midrule
0.20  &  \makecell{$0.80$ \\ \scriptsize{$[0.77, 0.82]$}} &  \makecell{$6.32$ \\ \scriptsize{$[5.45, 7.22]$}} &  \makecell{$0.166$ \\ \scriptsize{$[0.159, 0.173]$}} &  \makecell{$0.60$ \\ \scriptsize{$[0.54, 0.66]$}} &  \makecell{$0.75$ \\ \scriptsize{$[0.72, 0.78]$}} \\ \midrule
0.22  &  \makecell{$0.78$ \\ \scriptsize{$[0.76, 0.81]$}} &  \makecell{$6.78$ \\ \scriptsize{$[5.97, 7.64]$}} &  \makecell{$0.165$ \\ \scriptsize{$[0.158, 0.172]$}} &  \makecell{$0.62$ \\ \scriptsize{$[0.56, 0.68]$}} &  \makecell{$0.71$ \\ \scriptsize{$[0.68, 0.74]$}} \\ \midrule
0.24  &  \makecell{$0.77$ \\ \scriptsize{$[0.75, 0.79]$}} &  \makecell{$7.01$ \\ \scriptsize{$[6.24, 7.83]$}} &  \makecell{$0.163$ \\ \scriptsize{$[0.156, 0.169]$}} &  \makecell{$0.62$ \\ \scriptsize{$[0.57, 0.68]$}} &  \makecell{$0.68$ \\ \scriptsize{$[0.65, 0.71]$}} \\ \midrule
0.26  &  \makecell{$0.74$ \\ \scriptsize{$[0.72, 0.76]$}} &  \makecell{$7.97$ \\ \scriptsize{$[7.25, 8.71]$}} &  \makecell{$0.158$ \\ \scriptsize{$[0.152, 0.164]$}} &  \makecell{$0.60$ \\ \scriptsize{$[0.55, 0.66]$}} &  \makecell{$0.62$ \\ \scriptsize{$[0.59, 0.65]$}} \\ \midrule
0.28  &  \makecell{$0.75$ \\ \scriptsize{$[0.73, 0.77]$}} &  \makecell{$7.58$ \\ \scriptsize{$[6.88, 8.32]$}} &  \makecell{$0.161$ \\ \scriptsize{$[0.154, 0.168]$}} &  \makecell{$0.65$ \\ \scriptsize{$[0.59, 0.70]$}} &  \makecell{$0.62$ \\ \scriptsize{$[0.59, 0.65]$}} \\ \midrule
0.30  &  \makecell{$0.74$ \\ \scriptsize{$[0.72, 0.76]$}} &  \makecell{$7.80$ \\ \scriptsize{$[7.08, 8.54]$}} &  \makecell{$0.161$ \\ \scriptsize{$[0.155, 0.168]$}} &  \makecell{$0.67$ \\ \scriptsize{$[0.62, 0.73]$}} &  \makecell{$0.59$ \\ \scriptsize{$[0.56, 0.62]$}} \\ 
\bottomrule
\end{tabular}

    \caption{\small Shapes3d dataset. Results for the metrics for selected noise levels. Shown in square brackets are bootstrapped $95\%$-confidence intervals.
    Here \emph{topo} stands for topographic similarity, \emph{conf} for conflict count, \emph{cont} for context independence, \emph{pos} for positional disentanglement and \emph{acc} for accuracy.
    }
    \label{app:tab:full_baseline_table_shapes3d}
\end{table}

\begin{table}
    \centering
    \begin{tabular}{l|ccccc}
\toprule
noise &                                              topo &                                              conf &                                                 cont &                                               pos &                                               acc \\ \midrule
0.00  &  \makecell{$0.71$ \\ \scriptsize{$[0.67, 0.74]$}} &  \makecell{$5.54$ \\ \scriptsize{$[4.73, 6.37]$}} &  \makecell{$0.153$ \\ \scriptsize{$[0.147, 0.159]$}} &  \makecell{$0.45$ \\ \scriptsize{$[0.40, 0.50]$}} &  \makecell{$0.95$ \\ \scriptsize{$[0.94, 0.97]$}} \\ \midrule
0.02  &  \makecell{$0.74$ \\ \scriptsize{$[0.72, 0.76]$}} &  \makecell{$6.15$ \\ \scriptsize{$[5.36, 6.91]$}} &  \makecell{$0.152$ \\ \scriptsize{$[0.146, 0.157]$}} &  \makecell{$0.38$ \\ \scriptsize{$[0.33, 0.43]$}} &  \makecell{$0.98$ \\ \scriptsize{$[0.98, 0.98]$}} \\ \midrule
0.04  &  \makecell{$0.78$ \\ \scriptsize{$[0.76, 0.80]$}} &  \makecell{$5.19$ \\ \scriptsize{$[4.42, 5.97]$}} &  \makecell{$0.160$ \\ \scriptsize{$[0.155, 0.165]$}} &  \makecell{$0.45$ \\ \scriptsize{$[0.39, 0.50]$}} &  \makecell{$0.97$ \\ \scriptsize{$[0.97, 0.98]$}} \\ \midrule
0.06  &  \makecell{$0.79$ \\ \scriptsize{$[0.77, 0.82]$}} &  \makecell{$4.36$ \\ \scriptsize{$[3.61, 5.14]$}} &  \makecell{$0.164$ \\ \scriptsize{$[0.158, 0.170]$}} &  \makecell{$0.52$ \\ \scriptsize{$[0.46, 0.58]$}} &  \makecell{$0.92$ \\ \scriptsize{$[0.90, 0.94]$}} \\ \midrule
0.08  &  \makecell{$0.83$ \\ \scriptsize{$[0.80, 0.85]$}} &  \makecell{$3.58$ \\ \scriptsize{$[2.84, 4.32]$}} &  \makecell{$0.172$ \\ \scriptsize{$[0.166, 0.177]$}} &  \makecell{$0.59$ \\ \scriptsize{$[0.54, 0.65]$}} &  \makecell{$0.91$ \\ \scriptsize{$[0.89, 0.94]$}} \\ \midrule
0.10  &  \makecell{$0.85$ \\ \scriptsize{$[0.82, 0.88]$}} &  \makecell{$3.30$ \\ \scriptsize{$[2.54, 4.09]$}} &  \makecell{$0.177$ \\ \scriptsize{$[0.171, 0.183]$}} &  \makecell{$0.64$ \\ \scriptsize{$[0.58, 0.70]$}} &  \makecell{$0.91$ \\ \scriptsize{$[0.89, 0.93]$}} \\ \midrule
0.12  &  \makecell{$0.86$ \\ \scriptsize{$[0.84, 0.89]$}} &  \makecell{$3.14$ \\ \scriptsize{$[2.39, 3.93]$}} &  \makecell{$0.182$ \\ \scriptsize{$[0.177, 0.188]$}} &  \makecell{$0.70$ \\ \scriptsize{$[0.64, 0.76]$}} &  \makecell{$0.90$ \\ \scriptsize{$[0.87, 0.92]$}} \\ \midrule
0.14  &  \makecell{$0.86$ \\ \scriptsize{$[0.83, 0.88]$}} &  \makecell{$3.50$ \\ \scriptsize{$[2.69, 4.31]$}} &  \makecell{$0.181$ \\ \scriptsize{$[0.175, 0.187]$}} &  \makecell{$0.69$ \\ \scriptsize{$[0.62, 0.75]$}} &  \makecell{$0.88$ \\ \scriptsize{$[0.86, 0.90]$}} \\ \midrule
0.16  &  \makecell{$0.86$ \\ \scriptsize{$[0.83, 0.88]$}} &  \makecell{$3.87$ \\ \scriptsize{$[3.02, 4.74]$}} &  \makecell{$0.180$ \\ \scriptsize{$[0.174, 0.186]$}} &  \makecell{$0.67$ \\ \scriptsize{$[0.60, 0.73]$}} &  \makecell{$0.87$ \\ \scriptsize{$[0.85, 0.90]$}} \\ \midrule
0.18  &  \makecell{$0.82$ \\ \scriptsize{$[0.80, 0.85]$}} &  \makecell{$4.73$ \\ \scriptsize{$[3.93, 5.55]$}} &  \makecell{$0.171$ \\ \scriptsize{$[0.164, 0.178]$}} &  \makecell{$0.63$ \\ \scriptsize{$[0.58, 0.69]$}} &  \makecell{$0.82$ \\ \scriptsize{$[0.79, 0.85]$}} \\ \midrule
0.20  &  \makecell{$0.80$ \\ \scriptsize{$[0.78, 0.82]$}} &  \makecell{$6.34$ \\ \scriptsize{$[5.46, 7.22]$}} &  \makecell{$0.164$ \\ \scriptsize{$[0.157, 0.171]$}} &  \makecell{$0.60$ \\ \scriptsize{$[0.54, 0.65]$}} &  \makecell{$0.75$ \\ \scriptsize{$[0.72, 0.78]$}} \\ \midrule
0.22  &  \makecell{$0.78$ \\ \scriptsize{$[0.76, 0.80]$}} &  \makecell{$7.22$ \\ \scriptsize{$[6.39, 7.99]$}} &  \makecell{$0.161$ \\ \scriptsize{$[0.154, 0.168]$}} &  \makecell{$0.60$ \\ \scriptsize{$[0.54, 0.65]$}} &  \makecell{$0.69$ \\ \scriptsize{$[0.65, 0.72]$}} \\ \midrule
0.24  &  \makecell{$0.76$ \\ \scriptsize{$[0.75, 0.78]$}} &  \makecell{$7.66$ \\ \scriptsize{$[6.98, 8.30]$}} &  \makecell{$0.157$ \\ \scriptsize{$[0.150, 0.164]$}} &  \makecell{$0.62$ \\ \scriptsize{$[0.57, 0.66]$}} &  \makecell{$0.62$ \\ \scriptsize{$[0.59, 0.65]$}} \\ \midrule
0.26  &  \makecell{$0.74$ \\ \scriptsize{$[0.73, 0.76]$}} &  \makecell{$8.42$ \\ \scriptsize{$[7.91, 8.88]$}} &  \makecell{$0.151$ \\ \scriptsize{$[0.144, 0.157]$}} &  \makecell{$0.59$ \\ \scriptsize{$[0.55, 0.63]$}} &  \makecell{$0.57$ \\ \scriptsize{$[0.55, 0.60]$}} \\ \midrule
0.28  &  \makecell{$0.74$ \\ \scriptsize{$[0.73, 0.75]$}} &  \makecell{$8.81$ \\ \scriptsize{$[8.33, 9.27]$}} &  \makecell{$0.149$ \\ \scriptsize{$[0.142, 0.156]$}} &  \makecell{$0.60$ \\ \scriptsize{$[0.57, 0.64]$}} &  \makecell{$0.54$ \\ \scriptsize{$[0.52, 0.57]$}} \\ \midrule
0.30  &  \makecell{$0.73$ \\ \scriptsize{$[0.72, 0.74]$}} &  \makecell{$8.94$ \\ \scriptsize{$[8.47, 9.41]$}} &  \makecell{$0.149$ \\ \scriptsize{$[0.142, 0.156]$}} &  \makecell{$0.64$ \\ \scriptsize{$[0.60, 0.67]$}} &  \makecell{$0.51$ \\ \scriptsize{$[0.49, 0.53]$}} \\ 
\bottomrule
\end{tabular}

    \caption{\small Obverter dataset. Results for the metrics for selected noise levels. Shown in square brackets are bootstrapped $95\%$-confidence intervals.
    Here \emph{topo} stands for topographic similarity, \emph{conf} for conflict count, \emph{cont} for context independence, \emph{pos} for positional disentanglement and \emph{acc} for accuracy.
    }
    \label{app:tab:full_baseline_table_obverter}
\end{table}

\pagebreak
\subsection{Different number of symbols}\label{sec:number_of_symbols_app}

This section presents detailed results for the case when the message space has $16=4\times 4$ elements (4-symbol alphabet; see Figure \ref{fig:ridge_4x4} and Table \ref{app:tab:4x4}) 
 and $64=8\times 8$ elements (8-symbol alphabet;
 see Figure \ref{fig:ridge_8x8} and Table \ref{app:tab:8x8}).
In the former case, the results are more variable. For topo, this can be seen from a bimodal shape of its distribution and wide confidence intervals,  particularly for small to medium values of noise. This makes it statistically hard to distinguish values of topo in this noise range. For larger values of noise (greater than $0.16$), topo starts to visibly decline. Similar behavior can be seen for other metrics. 

Interestingly, for the latter experiment, with $64=8\times 8$ messages, the story is different. While topographic similarity values for the small noise regime (up to $0.08$) do not improve over the baseline value, this behavior changes for medium to large values of noise. In this range, we can observe a visible increase in the topo, peaking at $0.88$ with a noise level of $0.24$.

\begin{figure}[H]
    \centering
    \includegraphics[width=\textwidth]{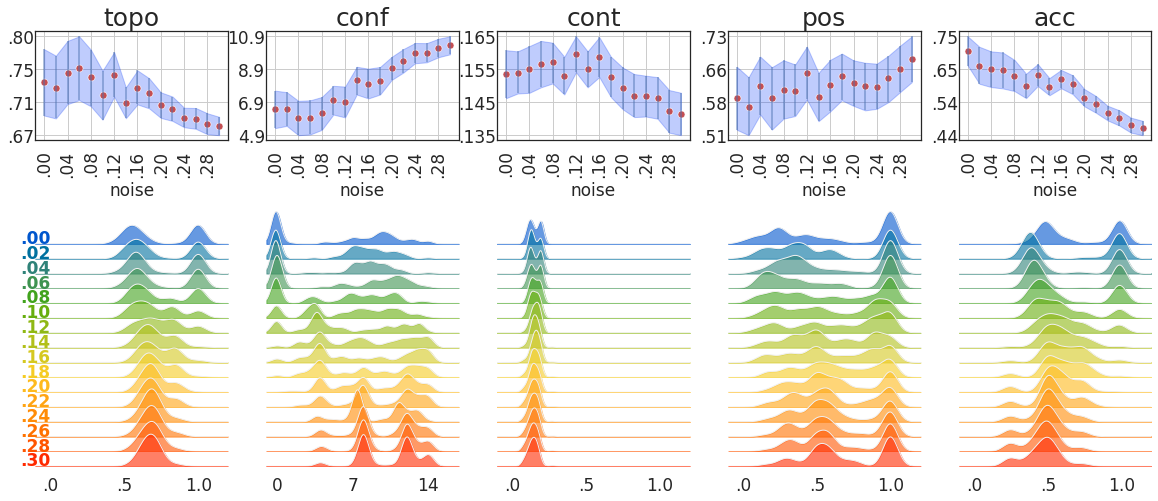}
    \caption{$4\times 4$. Top panel: average value of metrics for various noise levels (on shapes3d dataset). The shaded area corresponds to bootstrapped $95\%$-confidence intervals for this estimator. Bottom panel: kernel density estimators for metrics and noise levels across seeds.
    Here \emph{topo} stands for topographic similarity, \emph{conf} for conflict count, \emph{cont} for context independence, \emph{pos} for positional disentanglement and \emph{acc} for accuracy.
    }
    \label{fig:ridge_4x4}
\end{figure}

\begin{table}
    \centering
    \begin{tabular}{l|ccccc}
\toprule
noise &                                              topo &                                                conf &                                                 cont &                                               pos &                                               acc \\ \midrule
0.00  &  \makecell{$0.74$ \\ \scriptsize{$[0.69, 0.78]$}} &    \makecell{$6.45$ \\ \scriptsize{$[5.30, 7.57]$}} &  \makecell{$0.154$ \\ \scriptsize{$[0.146, 0.161]$}} &  \makecell{$0.59$ \\ \scriptsize{$[0.52, 0.67]$}} &  \makecell{$0.71$ \\ \scriptsize{$[0.66, 0.75]$}} \\ \midrule
0.02  &  \makecell{$0.73$ \\ \scriptsize{$[0.69, 0.77]$}} &    \makecell{$6.48$ \\ \scriptsize{$[5.44, 7.48]$}} &  \makecell{$0.154$ \\ \scriptsize{$[0.148, 0.160]$}} &  \makecell{$0.57$ \\ \scriptsize{$[0.51, 0.64]$}} &  \makecell{$0.66$ \\ \scriptsize{$[0.60, 0.72]$}} \\ \midrule
0.04  &  \makecell{$0.75$ \\ \scriptsize{$[0.71, 0.79]$}} &    \makecell{$5.93$ \\ \scriptsize{$[4.88, 6.96]$}} &  \makecell{$0.155$ \\ \scriptsize{$[0.148, 0.162]$}} &  \makecell{$0.62$ \\ \scriptsize{$[0.56, 0.69]$}} &  \makecell{$0.65$ \\ \scriptsize{$[0.59, 0.71]$}} \\ \midrule
0.06  &  \makecell{$0.76$ \\ \scriptsize{$[0.71, 0.80]$}} &    \makecell{$5.94$ \\ \scriptsize{$[4.92, 6.99]$}} &  \makecell{$0.156$ \\ \scriptsize{$[0.149, 0.163]$}} &  \makecell{$0.59$ \\ \scriptsize{$[0.52, 0.66]$}} &  \makecell{$0.65$ \\ \scriptsize{$[0.59, 0.70]$}} \\ \midrule
0.08  &  \makecell{$0.74$ \\ \scriptsize{$[0.71, 0.78]$}} &    \makecell{$6.19$ \\ \scriptsize{$[5.20, 7.19]$}} &  \makecell{$0.157$ \\ \scriptsize{$[0.151, 0.163]$}} &  \makecell{$0.61$ \\ \scriptsize{$[0.55, 0.68]$}} &  \makecell{$0.63$ \\ \scriptsize{$[0.58, 0.68]$}} \\ \midrule
0.10  &  \makecell{$0.72$ \\ \scriptsize{$[0.69, 0.75]$}} &    \makecell{$7.00$ \\ \scriptsize{$[6.12, 7.87]$}} &  \makecell{$0.153$ \\ \scriptsize{$[0.147, 0.159]$}} &  \makecell{$0.61$ \\ \scriptsize{$[0.55, 0.67]$}} &  \makecell{$0.60$ \\ \scriptsize{$[0.56, 0.63]$}} \\ \midrule
0.12  &  \makecell{$0.75$ \\ \scriptsize{$[0.72, 0.78]$}} &    \makecell{$6.90$ \\ \scriptsize{$[5.95, 7.82]$}} &  \makecell{$0.159$ \\ \scriptsize{$[0.154, 0.165]$}} &  \makecell{$0.65$ \\ \scriptsize{$[0.59, 0.71]$}} &  \makecell{$0.63$ \\ \scriptsize{$[0.60, 0.67]$}} \\ \midrule
0.14  &  \makecell{$0.71$ \\ \scriptsize{$[0.69, 0.73]$}} &    \makecell{$8.20$ \\ \scriptsize{$[7.34, 9.02]$}} &  \makecell{$0.155$ \\ \scriptsize{$[0.149, 0.160]$}} &  \makecell{$0.60$ \\ \scriptsize{$[0.54, 0.65]$}} &  \makecell{$0.59$ \\ \scriptsize{$[0.57, 0.62]$}} \\ \midrule
0.16  &  \makecell{$0.73$ \\ \scriptsize{$[0.71, 0.75]$}} &    \makecell{$8.01$ \\ \scriptsize{$[7.10, 8.91]$}} &  \makecell{$0.159$ \\ \scriptsize{$[0.153, 0.165]$}} &  \makecell{$0.62$ \\ \scriptsize{$[0.56, 0.68]$}} &  \makecell{$0.62$ \\ \scriptsize{$[0.59, 0.65]$}} \\ \midrule
0.18  &  \makecell{$0.72$ \\ \scriptsize{$[0.70, 0.74]$}} &    \makecell{$8.18$ \\ \scriptsize{$[7.34, 9.01]$}} &  \makecell{$0.153$ \\ \scriptsize{$[0.147, 0.159]$}} &  \makecell{$0.64$ \\ \scriptsize{$[0.59, 0.70]$}} &  \makecell{$0.60$ \\ \scriptsize{$[0.57, 0.63]$}} \\ \midrule
0.20  &  \makecell{$0.71$ \\ \scriptsize{$[0.69, 0.72]$}} &    \makecell{$8.93$ \\ \scriptsize{$[8.23, 9.63]$}} &  \makecell{$0.149$ \\ \scriptsize{$[0.143, 0.155]$}} &  \makecell{$0.63$ \\ \scriptsize{$[0.58, 0.68]$}} &  \makecell{$0.56$ \\ \scriptsize{$[0.53, 0.59]$}} \\ \midrule
0.22  &  \makecell{$0.70$ \\ \scriptsize{$[0.69, 0.72]$}} &   \makecell{$9.38$ \\ \scriptsize{$[8.67, 10.09]$}} &  \makecell{$0.147$ \\ \scriptsize{$[0.140, 0.153]$}} &  \makecell{$0.62$ \\ \scriptsize{$[0.57, 0.67]$}} &  \makecell{$0.54$ \\ \scriptsize{$[0.51, 0.56]$}} \\ \midrule
0.24  &  \makecell{$0.69$ \\ \scriptsize{$[0.68, 0.70]$}} &   \makecell{$9.86$ \\ \scriptsize{$[9.23, 10.46]$}} &  \makecell{$0.147$ \\ \scriptsize{$[0.141, 0.153]$}} &  \makecell{$0.62$ \\ \scriptsize{$[0.57, 0.67]$}} &  \makecell{$0.51$ \\ \scriptsize{$[0.49, 0.53]$}} \\ \midrule
0.26  &  \makecell{$0.69$ \\ \scriptsize{$[0.68, 0.70]$}} &   \makecell{$9.87$ \\ \scriptsize{$[9.29, 10.45]$}} &  \makecell{$0.146$ \\ \scriptsize{$[0.140, 0.153]$}} &  \makecell{$0.64$ \\ \scriptsize{$[0.58, 0.69]$}} &  \makecell{$0.49$ \\ \scriptsize{$[0.47, 0.52]$}} \\ \midrule
0.28  &  \makecell{$0.68$ \\ \scriptsize{$[0.67, 0.70]$}} &  \makecell{$10.15$ \\ \scriptsize{$[9.60, 10.72]$}} &  \makecell{$0.142$ \\ \scriptsize{$[0.136, 0.149]$}} &  \makecell{$0.66$ \\ \scriptsize{$[0.61, 0.71]$}} &  \makecell{$0.47$ \\ \scriptsize{$[0.45, 0.49]$}} \\ \midrule
0.30  &  \makecell{$0.68$ \\ \scriptsize{$[0.67, 0.69]$}} &  \makecell{$10.32$ \\ \scriptsize{$[9.78, 10.86]$}} &  \makecell{$0.141$ \\ \scriptsize{$[0.135, 0.148]$}} &  \makecell{$0.68$ \\ \scriptsize{$[0.63, 0.73]$}} &  \makecell{$0.46$ \\ \scriptsize{$[0.44, 0.48]$}} \\ 
\bottomrule
\end{tabular}

    \caption{4x4. Results for the metrics for selected noise levels. Shown in square brackets are bootstrapped $95\%$-confidence intervals.
    Here \emph{topo} stands for topographic similarity, \emph{conf} for conflict count, \emph{cont} for context independence, \emph{pos} for positional disentanglement and \emph{acc} for accuracy.
    }
    \label{app:tab:4x4}
\end{table}

\begin{figure}[H]
    \centering
    \includegraphics[width=\textwidth]{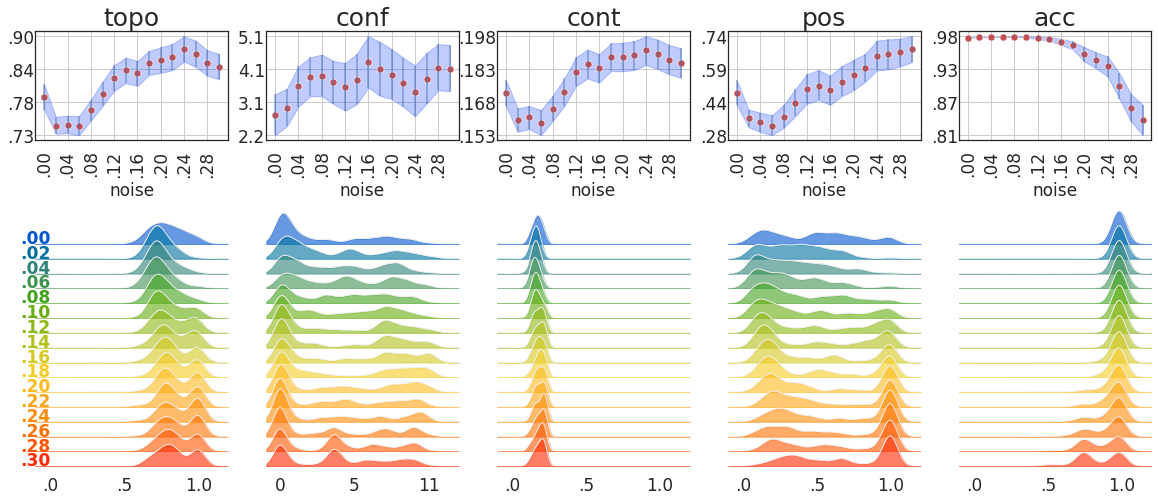}
    \caption{8x8. Top panel: average value of metrics for various noise levels. The shaded area corresponds to bootstrapped $95\%$-confidence intervals for this estimator. Bottom panel: kernel density estimators for metrics and noise levels across seeds.
    Here \emph{topo} stands for topographic similarity, \emph{conf} for conflict count, \emph{cont} for context independence, \emph{pos} for positional disentanglement and \emph{acc} for accuracy.}
    \label{fig:ridge_8x8}
\end{figure}

\begin{table}
    \centering
    \begin{tabular}{l|ccccc}
\toprule
noise &                                              topo &                                              conf &                                                 cont &                                               pos &                                               acc \\ \midrule
0.00  &  \makecell{$0.79$ \\ \scriptsize{$[0.77, 0.82]$}} &  \makecell{$2.77$ \\ \scriptsize{$[2.17, 3.39]$}} &  \makecell{$0.173$ \\ \scriptsize{$[0.166, 0.178]$}} &  \makecell{$0.48$ \\ \scriptsize{$[0.42, 0.54]$}} &  \makecell{$0.98$ \\ \scriptsize{$[0.98, 0.98]$}} \\ \midrule
0.02  &  \makecell{$0.74$ \\ \scriptsize{$[0.73, 0.76]$}} &  \makecell{$2.99$ \\ \scriptsize{$[2.43, 3.55]$}} &  \makecell{$0.160$ \\ \scriptsize{$[0.155, 0.166]$}} &  \makecell{$0.36$ \\ \scriptsize{$[0.32, 0.40]$}} &  \makecell{$0.98$ \\ \scriptsize{$[0.98, 0.98]$}} \\ \midrule
0.04  &  \makecell{$0.75$ \\ \scriptsize{$[0.73, 0.76]$}} &  \makecell{$3.62$ \\ \scriptsize{$[3.02, 4.22]$}} &  \makecell{$0.161$ \\ \scriptsize{$[0.156, 0.167]$}} &  \makecell{$0.34$ \\ \scriptsize{$[0.30, 0.39]$}} &  \makecell{$0.98$ \\ \scriptsize{$[0.98, 0.98]$}} \\ \midrule
0.06  &  \makecell{$0.74$ \\ \scriptsize{$[0.73, 0.76]$}} &  \makecell{$3.90$ \\ \scriptsize{$[3.33, 4.49]$}} &  \makecell{$0.159$ \\ \scriptsize{$[0.153, 0.165]$}} &  \makecell{$0.33$ \\ \scriptsize{$[0.28, 0.37]$}} &  \makecell{$0.98$ \\ \scriptsize{$[0.98, 0.98]$}} \\ \midrule
0.08  &  \makecell{$0.77$ \\ \scriptsize{$[0.75, 0.79]$}} &  \makecell{$3.94$ \\ \scriptsize{$[3.33, 4.57]$}} &  \makecell{$0.165$ \\ \scriptsize{$[0.160, 0.171]$}} &  \makecell{$0.37$ \\ \scriptsize{$[0.32, 0.42]$}} &  \makecell{$0.98$ \\ \scriptsize{$[0.98, 0.98]$}} \\ \midrule
0.10  &  \makecell{$0.80$ \\ \scriptsize{$[0.78, 0.82]$}} &  \makecell{$3.74$ \\ \scriptsize{$[3.10, 4.38]$}} &  \makecell{$0.173$ \\ \scriptsize{$[0.167, 0.179]$}} &  \makecell{$0.43$ \\ \scriptsize{$[0.37, 0.50]$}} &  \makecell{$0.98$ \\ \scriptsize{$[0.98, 0.98]$}} \\ \midrule
0.12  &  \makecell{$0.83$ \\ \scriptsize{$[0.80, 0.85]$}} &  \makecell{$3.60$ \\ \scriptsize{$[2.90, 4.32]$}} &  \makecell{$0.182$ \\ \scriptsize{$[0.176, 0.189]$}} &  \makecell{$0.50$ \\ \scriptsize{$[0.43, 0.56]$}} &  \makecell{$0.98$ \\ \scriptsize{$[0.98, 0.98]$}} \\ \midrule
0.14  &  \makecell{$0.84$ \\ \scriptsize{$[0.82, 0.86]$}} &  \makecell{$3.82$ \\ \scriptsize{$[3.09, 4.59]$}} &  \makecell{$0.186$ \\ \scriptsize{$[0.179, 0.192]$}} &  \makecell{$0.51$ \\ \scriptsize{$[0.44, 0.58]$}} &  \makecell{$0.98$ \\ \scriptsize{$[0.98, 0.98]$}} \\ \midrule
0.16  &  \makecell{$0.83$ \\ \scriptsize{$[0.81, 0.86]$}} &  \makecell{$4.35$ \\ \scriptsize{$[3.59, 5.11]$}} &  \makecell{$0.184$ \\ \scriptsize{$[0.177, 0.190]$}} &  \makecell{$0.49$ \\ \scriptsize{$[0.42, 0.56]$}} &  \makecell{$0.97$ \\ \scriptsize{$[0.97, 0.98]$}} \\ \midrule
0.18  &  \makecell{$0.85$ \\ \scriptsize{$[0.83, 0.87]$}} &  \makecell{$4.13$ \\ \scriptsize{$[3.34, 4.95]$}} &  \makecell{$0.189$ \\ \scriptsize{$[0.182, 0.195]$}} &  \makecell{$0.53$ \\ \scriptsize{$[0.46, 0.60]$}} &  \makecell{$0.97$ \\ \scriptsize{$[0.96, 0.97]$}} \\ \midrule
0.20  &  \makecell{$0.86$ \\ \scriptsize{$[0.84, 0.88]$}} &  \makecell{$3.97$ \\ \scriptsize{$[3.24, 4.73]$}} &  \makecell{$0.189$ \\ \scriptsize{$[0.182, 0.195]$}} &  \makecell{$0.56$ \\ \scriptsize{$[0.49, 0.63]$}} &  \makecell{$0.95$ \\ \scriptsize{$[0.94, 0.96]$}} \\ \midrule
0.22  &  \makecell{$0.86$ \\ \scriptsize{$[0.84, 0.89]$}} &  \makecell{$3.72$ \\ \scriptsize{$[3.01, 4.48]$}} &  \makecell{$0.190$ \\ \scriptsize{$[0.183, 0.196]$}} &  \makecell{$0.59$ \\ \scriptsize{$[0.53, 0.66]$}} &  \makecell{$0.94$ \\ \scriptsize{$[0.93, 0.96]$}} \\ \midrule
0.24  &  \makecell{$0.88$ \\ \scriptsize{$[0.85, 0.90]$}} &  \makecell{$3.47$ \\ \scriptsize{$[2.73, 4.25]$}} &  \makecell{$0.192$ \\ \scriptsize{$[0.185, 0.198]$}} &  \makecell{$0.65$ \\ \scriptsize{$[0.58, 0.72]$}} &  \makecell{$0.93$ \\ \scriptsize{$[0.91, 0.95]$}} \\ \midrule
0.26  &  \makecell{$0.87$ \\ \scriptsize{$[0.85, 0.89]$}} &  \makecell{$3.84$ \\ \scriptsize{$[3.11, 4.62]$}} &  \makecell{$0.190$ \\ \scriptsize{$[0.183, 0.197]$}} &  \makecell{$0.66$ \\ \scriptsize{$[0.59, 0.72]$}} &  \makecell{$0.90$ \\ \scriptsize{$[0.87, 0.92]$}} \\ \midrule
0.28  &  \makecell{$0.85$ \\ \scriptsize{$[0.83, 0.87]$}} &  \makecell{$4.18$ \\ \scriptsize{$[3.51, 4.89]$}} &  \makecell{$0.188$ \\ \scriptsize{$[0.181, 0.194]$}} &  \makecell{$0.66$ \\ \scriptsize{$[0.60, 0.73]$}} &  \makecell{$0.86$ \\ \scriptsize{$[0.83, 0.88]$}} \\ \midrule
0.30  &  \makecell{$0.85$ \\ \scriptsize{$[0.82, 0.87]$}} &  \makecell{$4.15$ \\ \scriptsize{$[3.48, 4.85]$}} &  \makecell{$0.186$ \\ \scriptsize{$[0.179, 0.193]$}} &  \makecell{$0.68$ \\ \scriptsize{$[0.62, 0.74]$}} &  \makecell{$0.84$ \\ \scriptsize{$[0.81, 0.86]$}} \\ 
\bottomrule
\end{tabular}

    \caption{8x8. Results for the metrics for selected noise levels. Shown in square brackets are bootstrapped $95\%$-confidence intervals.
    Here \emph{topo} stands for topographic similarity, \emph{conf} for conflict count, \emph{cont} for context independence, \emph{pos} for positional disentanglement and \emph{acc} for accuracy.
    }
    \label{app:tab:8x8}
\end{table}

\pagebreak
\subsection{Variable noise }\label{app:sec:variable_noise_experiments}

In this section, we present detailed results for variable noise experiments. 
More precisely, at the beginning of training the noise is kept at some initial level $\epsilon_0\in\{0.0, 0.15\}$ 
and after $T\in\{200, 700, 1200, 1700, 2200, 2700\}$ warmup network updates it is changed to a value, $\epsilon_T$, and kept there for the rest of the training.  
The results are presented in
Figure~\ref{fig:ridge_variable_noise_0.15} and Table \ref{app:tab:variable_noise_0.15} ($\epsilon_0=0.0, \epsilon_T=0.15$), 
Figure~\ref*{fig:ridge_variable_noise_0_0.1} and Table \ref{app:tab:variable_noise_0_0.1} ($\epsilon_0 = 0.0, \epsilon_T=0.1$), 
and
Figure~\ref{fig:ridge_REAL_0.15_0.1} and Table \ref{app:tab:REAL_0.15_0.1} ($\epsilon_0 = 0.15, \epsilon_T=0.1$).

For $\epsilon_0=0.0,\epsilon_T=0.15$ case, we see that topo increases from $0.85$ for $T=200$, to $0.9$ for $T=2700$, and the transition is reflected both in the density profile as well as the confidence intervals. The effect for $\epsilon_0=0.0, \epsilon_T=0.1$ is weaker and the variance in the results is quite high. There seems to be a negligable effect for the case $\epsilon_0 = 0.15, \epsilon_T=0.1$.

\begin{figure}[H]
    \centering
    \includegraphics[width=\textwidth]{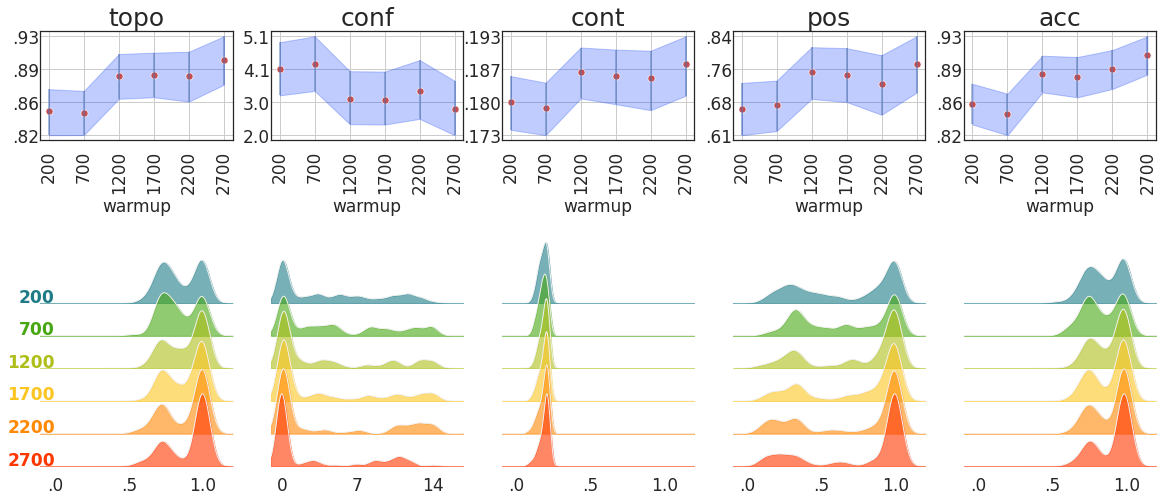}
    \caption{Variable noise with $\epsilon_0=0.0,\epsilon_T=0.15$. Top panel: average value of metrics for various warmup levels ($T$). The shaded area corresponds to bootstrapped $95\%$-confidence intervals for this estimator. Bottom panel: kernel density estimators for metrics and noise levels across seeds.
    Here \emph{topo} stands for topographic similarity, \emph{conf} for conflict count, \emph{cont} for context independence, \emph{pos} for positional disentanglement and \emph{acc} for accuracy.
    }
    \label{fig:ridge_variable_noise_0.15}
\end{figure}

\begin{figure}[H]
    \centering
    \includegraphics[width=\textwidth]{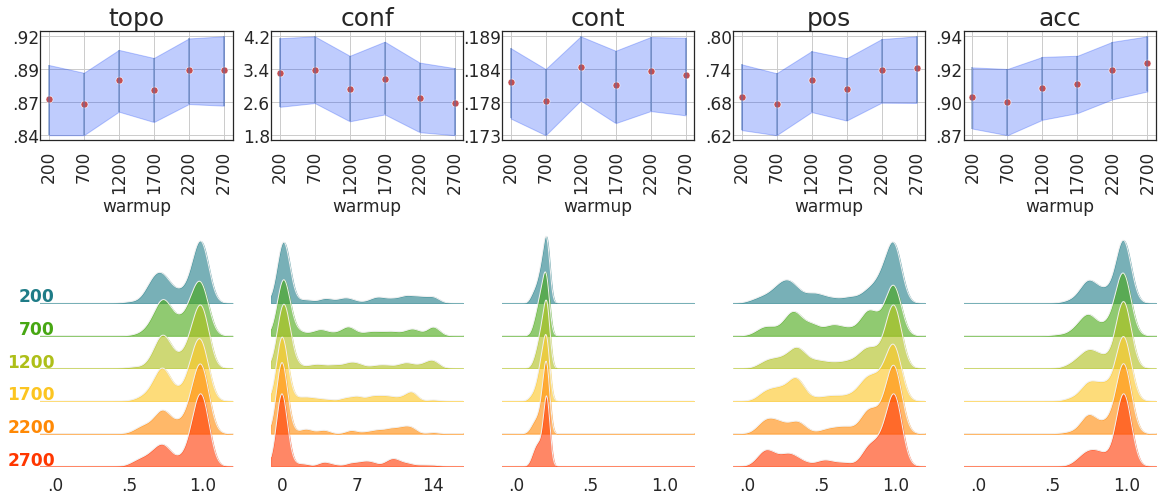}
    \caption{Variable noise $\epsilon_0=0.0, \epsilon_T=0.1$. Top panel: average value of metrics for various warmup levels ($T$). The shaded area corresponds to bootstrapped $95\%$-confidence intervals for this estimator. Bottom panel: kernel density estimators for metrics and noise levels across seeds.
    Here \emph{topo} stands for topographic similarity, \emph{conf} for conflict count, \emph{cont} for context independence, \emph{pos} for positional disentanglement and \emph{acc} for accuracy.
    }
    \label{fig:ridge_variable_noise_0_0.1}
\end{figure}

\begin{figure}[H]
    \centering
    \includegraphics[width=\textwidth]{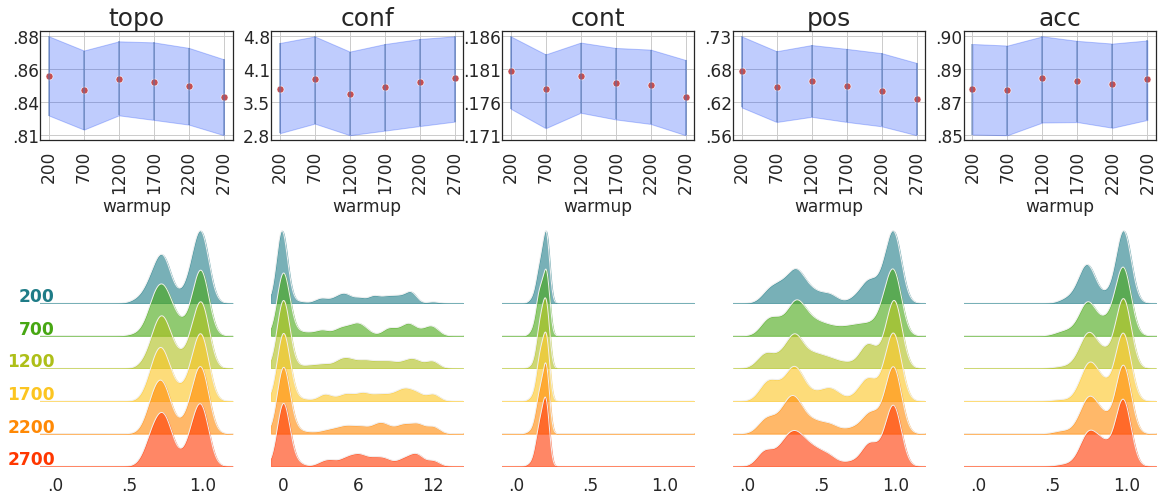}
    \caption{Variable noise $\epsilon_0=0.15, \epsilon_T=0.1$. Top panel: average value of metrics for various warmup levels ($T$). The shaded area corresponds to bootstrapped $95\%$-confidence intervals for this estimator. Bottom panel: kernel density estimators for metrics and noise levels across seeds.
    Here \emph{topo} stands for topographic similarity, \emph{conf} for conflict count, \emph{cont} for context independence, \emph{pos} for positional disentanglement and \emph{acc} for accuracy.
    }
    \label{fig:ridge_REAL_0.15_0.1}
\end{figure}

\begin{table}[H]
    \centering
    \begin{tabular}{l|ccccc}
\toprule
warmup &                                              topo &                                              conf &                                                 cont &                                               pos &                                               acc \\ \midrule
200    &  \makecell{$0.85$ \\ \scriptsize{$[0.82, 0.87]$}} &  \makecell{$4.06$ \\ \scriptsize{$[3.25, 4.89]$}} &  \makecell{$0.180$ \\ \scriptsize{$[0.175, 0.185]$}} &  \makecell{$0.67$ \\ \scriptsize{$[0.61, 0.73]$}} &  \makecell{$0.85$ \\ \scriptsize{$[0.83, 0.88]$}} \\ \midrule
700    &  \makecell{$0.85$ \\ \scriptsize{$[0.82, 0.87]$}} &  \makecell{$4.21$ \\ \scriptsize{$[3.39, 5.07]$}} &  \makecell{$0.179$ \\ \scriptsize{$[0.173, 0.184]$}} &  \makecell{$0.68$ \\ \scriptsize{$[0.62, 0.73]$}} &  \makecell{$0.84$ \\ \scriptsize{$[0.82, 0.87]$}} \\ \midrule
1200   &  \makecell{$0.89$ \\ \scriptsize{$[0.86, 0.91]$}} &  \makecell{$3.14$ \\ \scriptsize{$[2.37, 4.00]$}} &  \makecell{$0.186$ \\ \scriptsize{$[0.181, 0.191]$}} &  \makecell{$0.75$ \\ \scriptsize{$[0.69, 0.81]$}} &  \makecell{$0.89$ \\ \scriptsize{$[0.87, 0.91]$}} \\ \midrule
1700   &  \makecell{$0.89$ \\ \scriptsize{$[0.86, 0.91]$}} &  \makecell{$3.12$ \\ \scriptsize{$[2.36, 3.99]$}} &  \makecell{$0.185$ \\ \scriptsize{$[0.180, 0.190]$}} &  \makecell{$0.75$ \\ \scriptsize{$[0.68, 0.81]$}} &  \makecell{$0.88$ \\ \scriptsize{$[0.86, 0.91]$}} \\ \midrule
2200   &  \makecell{$0.89$ \\ \scriptsize{$[0.86, 0.91]$}} &  \makecell{$3.40$ \\ \scriptsize{$[2.53, 4.34]$}} &  \makecell{$0.185$ \\ \scriptsize{$[0.178, 0.190]$}} &  \makecell{$0.73$ \\ \scriptsize{$[0.65, 0.79]$}} &  \makecell{$0.89$ \\ \scriptsize{$[0.87, 0.92]$}} \\ \midrule
2700   &  \makecell{$0.90$ \\ \scriptsize{$[0.88, 0.93]$}} &  \makecell{$2.82$ \\ \scriptsize{$[2.03, 3.70]$}} &  \makecell{$0.188$ \\ \scriptsize{$[0.181, 0.193]$}} &  \makecell{$0.77$ \\ \scriptsize{$[0.71, 0.84]$}} &  \makecell{$0.91$ \\ \scriptsize{$[0.89, 0.93]$}} \\ 
\bottomrule
\end{tabular}

    \caption{Variable noise with $\epsilon_0=0.0,\epsilon_T=0.15$. Results for the metrics for various warmup levels ($T$). Shown in square brackets are bootstrapped $95\%$-confidence intervals.
    Here \emph{topo} stands for topographic similarity, \emph{conf} for conflict count, \emph{cont} for context independence, \emph{pos} for positional disentanglement and \emph{acc} for accuracy. }
    \label{app:tab:variable_noise_0.15}
\end{table}

\begin{table}
    \centering
    \begin{tabular}{l|ccccc}
\toprule
warmup &                                              topo &                                              conf &                                                 cont &                                               pos &                                               acc \\ \midrule
200    &  \makecell{$0.87$ \\ \scriptsize{$[0.84, 0.90]$}} &  \makecell{$3.28$ \\ \scriptsize{$[2.47, 4.12]$}} &  \makecell{$0.182$ \\ \scriptsize{$[0.176, 0.187]$}} &  \makecell{$0.69$ \\ \scriptsize{$[0.63, 0.75]$}} &  \makecell{$0.90$ \\ \scriptsize{$[0.88, 0.92]$}} \\ \midrule
700    &  \makecell{$0.86$ \\ \scriptsize{$[0.84, 0.89]$}} &  \makecell{$3.35$ \\ \scriptsize{$[2.56, 4.17]$}} &  \makecell{$0.179$ \\ \scriptsize{$[0.173, 0.184]$}} &  \makecell{$0.68$ \\ \scriptsize{$[0.62, 0.73]$}} &  \makecell{$0.90$ \\ \scriptsize{$[0.87, 0.92]$}} \\ \midrule
1200   &  \makecell{$0.88$ \\ \scriptsize{$[0.86, 0.91]$}} &  \makecell{$2.89$ \\ \scriptsize{$[2.12, 3.70]$}} &  \makecell{$0.184$ \\ \scriptsize{$[0.179, 0.189]$}} &  \makecell{$0.72$ \\ \scriptsize{$[0.66, 0.78]$}} &  \makecell{$0.91$ \\ \scriptsize{$[0.88, 0.93]$}} \\ \midrule
1700   &  \makecell{$0.88$ \\ \scriptsize{$[0.85, 0.90]$}} &  \makecell{$3.14$ \\ \scriptsize{$[2.28, 4.04]$}} &  \makecell{$0.181$ \\ \scriptsize{$[0.175, 0.187]$}} &  \makecell{$0.70$ \\ \scriptsize{$[0.65, 0.76]$}} &  \makecell{$0.91$ \\ \scriptsize{$[0.89, 0.93]$}} \\ \midrule
2200   &  \makecell{$0.89$ \\ \scriptsize{$[0.86, 0.92]$}} &  \makecell{$2.69$ \\ \scriptsize{$[1.85, 3.54]$}} &  \makecell{$0.183$ \\ \scriptsize{$[0.177, 0.189]$}} &  \makecell{$0.74$ \\ \scriptsize{$[0.68, 0.80]$}} &  \makecell{$0.92$ \\ \scriptsize{$[0.90, 0.94]$}} \\ \midrule
2700   &  \makecell{$0.89$ \\ \scriptsize{$[0.86, 0.92]$}} &  \makecell{$2.56$ \\ \scriptsize{$[1.78, 3.40]$}} &  \makecell{$0.183$ \\ \scriptsize{$[0.176, 0.189]$}} &  \makecell{$0.74$ \\ \scriptsize{$[0.68, 0.80]$}} &  \makecell{$0.92$ \\ \scriptsize{$[0.90, 0.94]$}} \\ 
\bottomrule
\end{tabular}

    \caption{Variable noise with $\epsilon_0=0.0,\epsilon_T=0.1$. Results for the metrics for various warmup levels ($T$). Shown in square brackets are bootstrapped $95\%$-confidence intervals.
    Here \emph{topo} stands for topographic similarity, \emph{conf} for conflict count, \emph{cont} for context independence, \emph{pos} for positional disentanglement and \emph{acc} for accuracy.
    }
    \label{app:tab:variable_noise_0_0.1}
\end{table}

\begin{table}
    \centering
    \begin{tabular}{l|ccccc}
\toprule
warmup &                                              topo &                                              conf &                                                 cont &                                               pos &                                               acc \\ \midrule
200    &  \makecell{$0.86$ \\ \scriptsize{$[0.83, 0.88]$}} &  \makecell{$3.73$ \\ \scriptsize{$[2.85, 4.63]$}} &  \makecell{$0.181$ \\ \scriptsize{$[0.175, 0.186]$}} &  \makecell{$0.67$ \\ \scriptsize{$[0.61, 0.73]$}} &  \makecell{$0.88$ \\ \scriptsize{$[0.85, 0.90]$}} \\ \midrule
700    &  \makecell{$0.85$ \\ \scriptsize{$[0.82, 0.87]$}} &  \makecell{$3.91$ \\ \scriptsize{$[3.04, 4.76]$}} &  \makecell{$0.178$ \\ \scriptsize{$[0.172, 0.183]$}} &  \makecell{$0.65$ \\ \scriptsize{$[0.59, 0.71]$}} &  \makecell{$0.88$ \\ \scriptsize{$[0.85, 0.90]$}} \\ \midrule
1200   &  \makecell{$0.85$ \\ \scriptsize{$[0.83, 0.88]$}} &  \makecell{$3.63$ \\ \scriptsize{$[2.81, 4.46]$}} &  \makecell{$0.180$ \\ \scriptsize{$[0.175, 0.185]$}} &  \makecell{$0.66$ \\ \scriptsize{$[0.59, 0.72]$}} &  \makecell{$0.88$ \\ \scriptsize{$[0.86, 0.90]$}} \\ \midrule
1700   &  \makecell{$0.85$ \\ \scriptsize{$[0.82, 0.88]$}} &  \makecell{$3.77$ \\ \scriptsize{$[2.90, 4.61]$}} &  \makecell{$0.179$ \\ \scriptsize{$[0.174, 0.184]$}} &  \makecell{$0.65$ \\ \scriptsize{$[0.59, 0.71]$}} &  \makecell{$0.88$ \\ \scriptsize{$[0.86, 0.90]$}} \\ \midrule
2200   &  \makecell{$0.85$ \\ \scriptsize{$[0.82, 0.87]$}} &  \makecell{$3.86$ \\ \scriptsize{$[2.99, 4.71]$}} &  \makecell{$0.179$ \\ \scriptsize{$[0.173, 0.184]$}} &  \makecell{$0.64$ \\ \scriptsize{$[0.58, 0.70]$}} &  \makecell{$0.88$ \\ \scriptsize{$[0.85, 0.90]$}} \\ \midrule
2700   &  \makecell{$0.84$ \\ \scriptsize{$[0.81, 0.87]$}} &  \makecell{$3.94$ \\ \scriptsize{$[3.08, 4.76]$}} &  \makecell{$0.177$ \\ \scriptsize{$[0.171, 0.183]$}} &  \makecell{$0.62$ \\ \scriptsize{$[0.56, 0.69]$}} &  \makecell{$0.88$ \\ \scriptsize{$[0.86, 0.90]$}} \\ 
\bottomrule
\end{tabular}

    \caption{Variable noise with $\epsilon_0=0.15, \epsilon_T=0.1$. Results for the metrics for various warmup levels ($T$). Shown in square brackets are bootstrapped $95\%$-confidence intervals.
    Here \emph{topo} stands for topographic similarity, \emph{conf} for conflict count, \emph{cont} for context independence, \emph{pos} for positional disentanglement and \emph{acc} for accuracy.
    }
    \label{app:tab:REAL_0.15_0.1}
\end{table}

\clearpage
\newpage
\subsection{Sensitivity with respect to small architecture changes}\label{sec:architecture_experiments}

This section provides details for experiments aiming to check the impact of small architecture change in CNN (Figure \ref{fig:ridge_big_cnn}, Table \ref{app:tab:big_cnn}) and the dense layers in the agents' network (Figure \ref{fig:ridge_big_dense}, Table \ref{app:tab:big_dense}). 
In the former experiment, we change the number of filters in the sender's CNN architecture from two layers with $8$ filters, to two layers with $16$ filters. This results in the slight change of topographic similarity profile with the highest average value of $0.86$ for noise range $0.14$ significantly outperforming the zero-noise case ($0.79$). 
For the second experiment, we added a dense layer (with 64 neurons) to the receiver's architecture. This again shows improvement of compositionality due to noise, although the noise in the range $[0.08, 0.16]$ performs roughly the same.

\begin{figure}[H]
    \centering
    \includegraphics[width=\textwidth]{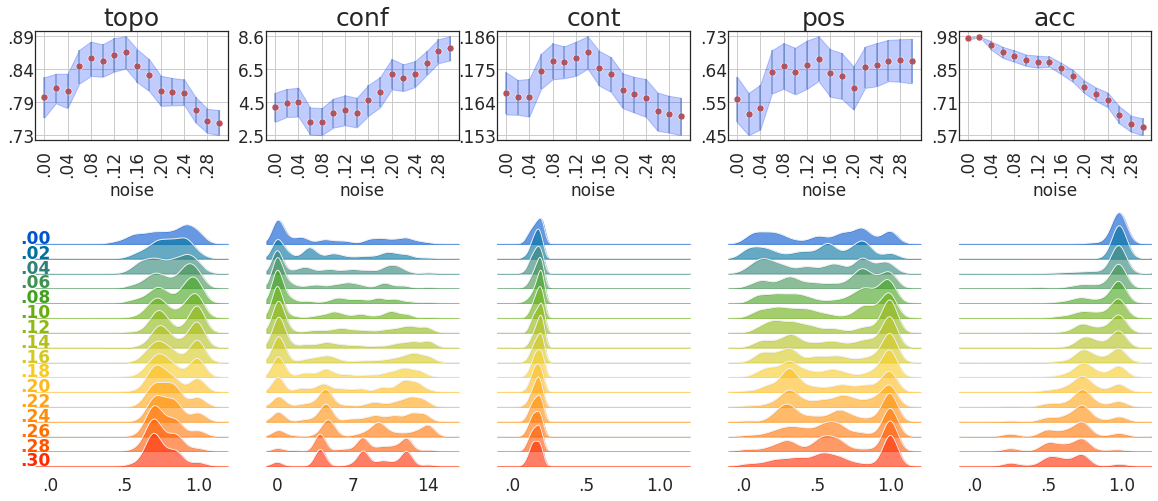}
    \caption{Bigger CNN. Top panel: average value of metrics for various noise levels. The shaded area corresponds to bootstrapped $95\%$-confidence intervals for this estimator. Bottom panel: kernel density estimators for metrics and noise levels across seeds.
    Here \emph{topo} stands for topographic similarity, \emph{conf} for conflict count, \emph{cont} for context independence, \emph{pos} for positional disentanglement and \emph{acc} for accuracy.
    }
    \label{fig:ridge_big_cnn}
\end{figure}

\begin{figure}[H]
    \centering
    \includegraphics[width=\textwidth]{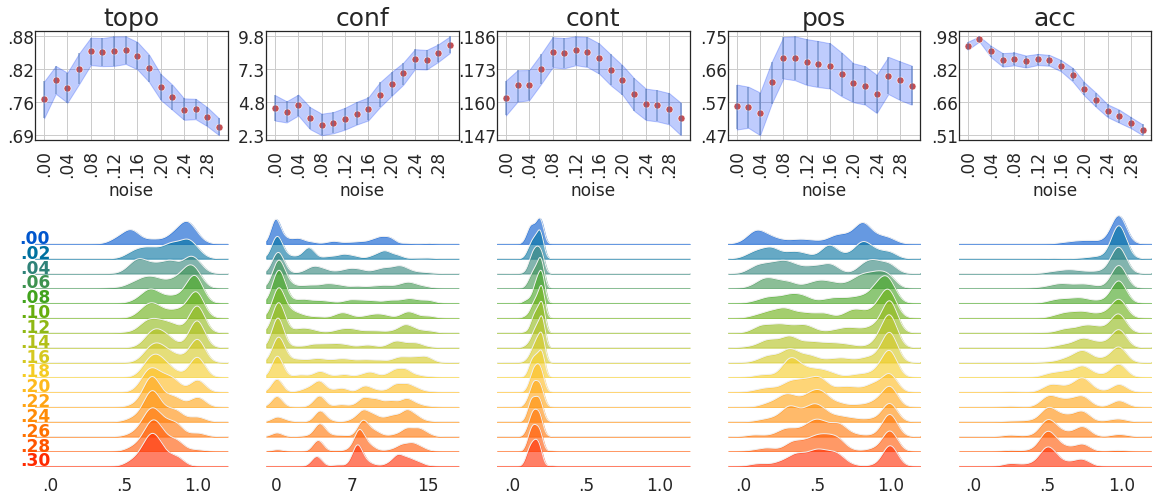}
    \caption{Bigger dense layer. Top panel: average value of metrics for various noise levels. The shaded area corresponds to bootstrapped $95\%$-confidence intervals for this estimator. Bottom panel: kernel density estimators for metrics and noise levels across seeds.
    Here \emph{topo} stands for topographic similarity, \emph{conf} for conflict count, \emph{cont} for context independence, \emph{pos} for positional disentanglement and \emph{acc} for accuracy.
    }
    \label{fig:ridge_big_dense}
\end{figure}

\begin{table}
    \centering
    \begin{tabular}{l|ccccc}
\toprule
noise &                                              topo &                                              conf &                                                 cont &                                               pos &                                               acc \\ \midrule
0.00  &  \makecell{$0.79$ \\ \scriptsize{$[0.76, 0.82]$}} &  \makecell{$4.23$ \\ \scriptsize{$[3.34, 5.11]$}} &  \makecell{$0.168$ \\ \scriptsize{$[0.160, 0.174]$}} &  \makecell{$0.56$ \\ \scriptsize{$[0.49, 0.62]$}} &  \makecell{$0.98$ \\ \scriptsize{$[0.97, 0.98]$}} \\ \midrule
0.02  &  \makecell{$0.81$ \\ \scriptsize{$[0.78, 0.83]$}} &  \makecell{$4.47$ \\ \scriptsize{$[3.63, 5.34]$}} &  \makecell{$0.166$ \\ \scriptsize{$[0.160, 0.172]$}} &  \makecell{$0.51$ \\ \scriptsize{$[0.45, 0.57]$}} &  \makecell{$0.98$ \\ \scriptsize{$[0.98, 0.98]$}} \\ \midrule
0.04  &  \makecell{$0.80$ \\ \scriptsize{$[0.78, 0.83]$}} &  \makecell{$4.53$ \\ \scriptsize{$[3.66, 5.42]$}} &  \makecell{$0.166$ \\ \scriptsize{$[0.160, 0.172]$}} &  \makecell{$0.53$ \\ \scriptsize{$[0.47, 0.60]$}} &  \makecell{$0.95$ \\ \scriptsize{$[0.93, 0.96]$}} \\ \midrule
0.06  &  \makecell{$0.84$ \\ \scriptsize{$[0.81, 0.87]$}} &  \makecell{$3.35$ \\ \scriptsize{$[2.56, 4.22]$}} &  \makecell{$0.175$ \\ \scriptsize{$[0.169, 0.181]$}} &  \makecell{$0.63$ \\ \scriptsize{$[0.57, 0.69]$}} &  \makecell{$0.92$ \\ \scriptsize{$[0.89, 0.94]$}} \\ \midrule
0.08  &  \makecell{$0.85$ \\ \scriptsize{$[0.83, 0.88]$}} &  \makecell{$3.32$ \\ \scriptsize{$[2.52, 4.18]$}} &  \makecell{$0.178$ \\ \scriptsize{$[0.172, 0.184]$}} &  \makecell{$0.65$ \\ \scriptsize{$[0.58, 0.71]$}} &  \makecell{$0.90$ \\ \scriptsize{$[0.88, 0.93]$}} \\ \midrule
0.10  &  \makecell{$0.85$ \\ \scriptsize{$[0.83, 0.88]$}} &  \makecell{$3.86$ \\ \scriptsize{$[2.99, 4.76]$}} &  \makecell{$0.178$ \\ \scriptsize{$[0.172, 0.183]$}} &  \makecell{$0.63$ \\ \scriptsize{$[0.57, 0.70]$}} &  \makecell{$0.89$ \\ \scriptsize{$[0.86, 0.91]$}} \\ \midrule
0.12  &  \makecell{$0.86$ \\ \scriptsize{$[0.83, 0.89]$}} &  \makecell{$4.06$ \\ \scriptsize{$[3.13, 4.95]$}} &  \makecell{$0.179$ \\ \scriptsize{$[0.173, 0.185]$}} &  \makecell{$0.65$ \\ \scriptsize{$[0.58, 0.72]$}} &  \makecell{$0.88$ \\ \scriptsize{$[0.85, 0.90]$}} \\ \midrule
0.14  &  \makecell{$0.86$ \\ \scriptsize{$[0.84, 0.89]$}} &  \makecell{$3.90$ \\ \scriptsize{$[3.00, 4.81]$}} &  \makecell{$0.181$ \\ \scriptsize{$[0.176, 0.186]$}} &  \makecell{$0.67$ \\ \scriptsize{$[0.61, 0.73]$}} &  \makecell{$0.88$ \\ \scriptsize{$[0.86, 0.90]$}} \\ \midrule
0.16  &  \makecell{$0.84$ \\ \scriptsize{$[0.82, 0.87]$}} &  \makecell{$4.64$ \\ \scriptsize{$[3.70, 5.59]$}} &  \makecell{$0.176$ \\ \scriptsize{$[0.170, 0.182]$}} &  \makecell{$0.63$ \\ \scriptsize{$[0.56, 0.70]$}} &  \makecell{$0.85$ \\ \scriptsize{$[0.83, 0.87]$}} \\ \midrule
0.18  &  \makecell{$0.83$ \\ \scriptsize{$[0.80, 0.85]$}} &  \makecell{$5.17$ \\ \scriptsize{$[4.22, 6.08]$}} &  \makecell{$0.174$ \\ \scriptsize{$[0.168, 0.180]$}} &  \makecell{$0.62$ \\ \scriptsize{$[0.56, 0.69]$}} &  \makecell{$0.82$ \\ \scriptsize{$[0.80, 0.85]$}} \\ \midrule
0.20  &  \makecell{$0.80$ \\ \scriptsize{$[0.78, 0.83]$}} &  \makecell{$6.26$ \\ \scriptsize{$[5.32, 7.17]$}} &  \makecell{$0.168$ \\ \scriptsize{$[0.162, 0.174]$}} &  \makecell{$0.59$ \\ \scriptsize{$[0.53, 0.65]$}} &  \makecell{$0.77$ \\ \scriptsize{$[0.75, 0.80]$}} \\ \midrule
0.22  &  \makecell{$0.80$ \\ \scriptsize{$[0.78, 0.82]$}} &  \makecell{$6.04$ \\ \scriptsize{$[5.21, 6.85]$}} &  \makecell{$0.167$ \\ \scriptsize{$[0.161, 0.173]$}} &  \makecell{$0.65$ \\ \scriptsize{$[0.59, 0.70]$}} &  \makecell{$0.74$ \\ \scriptsize{$[0.72, 0.77]$}} \\ \midrule
0.24  &  \makecell{$0.80$ \\ \scriptsize{$[0.78, 0.82]$}} &  \makecell{$6.26$ \\ \scriptsize{$[5.46, 7.04]$}} &  \makecell{$0.166$ \\ \scriptsize{$[0.160, 0.172]$}} &  \makecell{$0.65$ \\ \scriptsize{$[0.59, 0.71]$}} &  \makecell{$0.72$ \\ \scriptsize{$[0.69, 0.75]$}} \\ \midrule
0.26  &  \makecell{$0.77$ \\ \scriptsize{$[0.75, 0.79]$}} &  \makecell{$6.95$ \\ \scriptsize{$[6.19, 7.66]$}} &  \makecell{$0.161$ \\ \scriptsize{$[0.155, 0.168]$}} &  \makecell{$0.66$ \\ \scriptsize{$[0.61, 0.72]$}} &  \makecell{$0.66$ \\ \scriptsize{$[0.62, 0.69]$}} \\ \midrule
0.28  &  \makecell{$0.76$ \\ \scriptsize{$[0.74, 0.78]$}} &  \makecell{$7.65$ \\ \scriptsize{$[6.91, 8.38]$}} &  \makecell{$0.161$ \\ \scriptsize{$[0.154, 0.167]$}} &  \makecell{$0.66$ \\ \scriptsize{$[0.60, 0.72]$}} &  \makecell{$0.62$ \\ \scriptsize{$[0.59, 0.66]$}} \\ \midrule
0.30  &  \makecell{$0.75$ \\ \scriptsize{$[0.73, 0.77]$}} &  \makecell{$7.81$ \\ \scriptsize{$[7.08, 8.55]$}} &  \makecell{$0.160$ \\ \scriptsize{$[0.153, 0.166]$}} &  \makecell{$0.66$ \\ \scriptsize{$[0.60, 0.73]$}} &  \makecell{$0.61$ \\ \scriptsize{$[0.57, 0.64]$}} \\ 
\bottomrule
\end{tabular}

    \caption{Bigger CNN. Results for the metrics for selected noise levels. Shown in square brackets are bootstrapped $95\%$-confidence intervals.
    Here \emph{topo} stands for topographic similarity, \emph{conf} for conflict count, \emph{cont} for context independence, \emph{pos} for positional disentanglement and \emph{acc} for accuracy.
    }
    \label{app:tab:big_cnn}
\end{table}

\begin{table}
    \centering
    \begin{tabular}{l|ccccc}
\toprule
noise &                                              topo &                                              conf &                                                 cont &                                               pos &                                               acc \\ \midrule
0.00  &  \makecell{$0.76$ \\ \scriptsize{$[0.72, 0.80]$}} &  \makecell{$4.42$ \\ \scriptsize{$[3.45, 5.38]$}} &  \makecell{$0.162$ \\ \scriptsize{$[0.155, 0.169]$}} &  \makecell{$0.56$ \\ \scriptsize{$[0.49, 0.62]$}} &  \makecell{$0.93$ \\ \scriptsize{$[0.91, 0.95]$}} \\ \midrule
0.02  &  \makecell{$0.80$ \\ \scriptsize{$[0.77, 0.83]$}} &  \makecell{$4.10$ \\ \scriptsize{$[3.30, 4.92]$}} &  \makecell{$0.167$ \\ \scriptsize{$[0.160, 0.173]$}} &  \makecell{$0.55$ \\ \scriptsize{$[0.50, 0.61]$}} &  \makecell{$0.96$ \\ \scriptsize{$[0.95, 0.98]$}} \\ \midrule
0.04  &  \makecell{$0.78$ \\ \scriptsize{$[0.76, 0.81]$}} &  \makecell{$4.65$ \\ \scriptsize{$[3.83, 5.42]$}} &  \makecell{$0.167$ \\ \scriptsize{$[0.161, 0.173]$}} &  \makecell{$0.54$ \\ \scriptsize{$[0.47, 0.59]$}} &  \makecell{$0.90$ \\ \scriptsize{$[0.88, 0.93]$}} \\ \midrule
0.06  &  \makecell{$0.82$ \\ \scriptsize{$[0.79, 0.85]$}} &  \makecell{$3.62$ \\ \scriptsize{$[2.80, 4.48]$}} &  \makecell{$0.173$ \\ \scriptsize{$[0.167, 0.179]$}} &  \makecell{$0.62$ \\ \scriptsize{$[0.56, 0.68]$}} &  \makecell{$0.86$ \\ \scriptsize{$[0.83, 0.90]$}} \\ \midrule
0.08  &  \makecell{$0.86$ \\ \scriptsize{$[0.83, 0.88]$}} &  \makecell{$3.14$ \\ \scriptsize{$[2.33, 3.97]$}} &  \makecell{$0.180$ \\ \scriptsize{$[0.174, 0.185]$}} &  \makecell{$0.69$ \\ \scriptsize{$[0.63, 0.75]$}} &  \makecell{$0.87$ \\ \scriptsize{$[0.84, 0.90]$}} \\ \midrule
0.10  &  \makecell{$0.85$ \\ \scriptsize{$[0.83, 0.88]$}} &  \makecell{$3.27$ \\ \scriptsize{$[2.48, 4.09]$}} &  \makecell{$0.179$ \\ \scriptsize{$[0.173, 0.185]$}} &  \makecell{$0.69$ \\ \scriptsize{$[0.63, 0.75]$}} &  \makecell{$0.86$ \\ \scriptsize{$[0.83, 0.88]$}} \\ \midrule
0.12  &  \makecell{$0.86$ \\ \scriptsize{$[0.83, 0.88]$}} &  \makecell{$3.56$ \\ \scriptsize{$[2.76, 4.42]$}} &  \makecell{$0.180$ \\ \scriptsize{$[0.175, 0.186]$}} &  \makecell{$0.68$ \\ \scriptsize{$[0.62, 0.74]$}} &  \makecell{$0.87$ \\ \scriptsize{$[0.84, 0.89]$}} \\ \midrule
0.14  &  \makecell{$0.86$ \\ \scriptsize{$[0.83, 0.88]$}} &  \makecell{$3.95$ \\ \scriptsize{$[3.14, 4.87]$}} &  \makecell{$0.180$ \\ \scriptsize{$[0.174, 0.186]$}} &  \makecell{$0.67$ \\ \scriptsize{$[0.61, 0.74]$}} &  \makecell{$0.86$ \\ \scriptsize{$[0.84, 0.89]$}} \\ \midrule
0.16  &  \makecell{$0.85$ \\ \scriptsize{$[0.82, 0.87]$}} &  \makecell{$4.31$ \\ \scriptsize{$[3.46, 5.21]$}} &  \makecell{$0.177$ \\ \scriptsize{$[0.171, 0.183]$}} &  \makecell{$0.67$ \\ \scriptsize{$[0.61, 0.73]$}} &  \makecell{$0.84$ \\ \scriptsize{$[0.81, 0.86]$}} \\ \midrule
0.18  &  \makecell{$0.82$ \\ \scriptsize{$[0.80, 0.85]$}} &  \makecell{$5.42$ \\ \scriptsize{$[4.46, 6.36]$}} &  \makecell{$0.173$ \\ \scriptsize{$[0.167, 0.179]$}} &  \makecell{$0.65$ \\ \scriptsize{$[0.58, 0.71]$}} &  \makecell{$0.79$ \\ \scriptsize{$[0.76, 0.83]$}} \\ \midrule
0.20  &  \makecell{$0.79$ \\ \scriptsize{$[0.76, 0.81]$}} &  \makecell{$6.25$ \\ \scriptsize{$[5.33, 7.14]$}} &  \makecell{$0.169$ \\ \scriptsize{$[0.163, 0.175]$}} &  \makecell{$0.62$ \\ \scriptsize{$[0.57, 0.68]$}} &  \makecell{$0.73$ \\ \scriptsize{$[0.69, 0.76]$}} \\ \midrule
0.22  &  \makecell{$0.77$ \\ \scriptsize{$[0.74, 0.79]$}} &  \makecell{$7.04$ \\ \scriptsize{$[6.17, 7.85]$}} &  \makecell{$0.163$ \\ \scriptsize{$[0.157, 0.170]$}} &  \makecell{$0.61$ \\ \scriptsize{$[0.56, 0.67]$}} &  \makecell{$0.67$ \\ \scriptsize{$[0.64, 0.71]$}} \\ \midrule
0.24  &  \makecell{$0.74$ \\ \scriptsize{$[0.72, 0.76]$}} &  \makecell{$8.14$ \\ \scriptsize{$[7.36, 8.87]$}} &  \makecell{$0.159$ \\ \scriptsize{$[0.153, 0.166]$}} &  \makecell{$0.59$ \\ \scriptsize{$[0.54, 0.64]$}} &  \makecell{$0.62$ \\ \scriptsize{$[0.59, 0.65]$}} \\ \midrule
0.26  &  \makecell{$0.74$ \\ \scriptsize{$[0.72, 0.76]$}} &  \makecell{$8.08$ \\ \scriptsize{$[7.33, 8.82]$}} &  \makecell{$0.159$ \\ \scriptsize{$[0.153, 0.165]$}} &  \makecell{$0.64$ \\ \scriptsize{$[0.59, 0.69]$}} &  \makecell{$0.60$ \\ \scriptsize{$[0.57, 0.63]$}} \\ \midrule
0.28  &  \makecell{$0.73$ \\ \scriptsize{$[0.71, 0.75]$}} &  \makecell{$8.58$ \\ \scriptsize{$[7.90, 9.24]$}} &  \makecell{$0.158$ \\ \scriptsize{$[0.152, 0.164]$}} &  \makecell{$0.63$ \\ \scriptsize{$[0.57, 0.68]$}} &  \makecell{$0.57$ \\ \scriptsize{$[0.54, 0.59]$}} \\ \midrule
0.30  &  \makecell{$0.71$ \\ \scriptsize{$[0.69, 0.73]$}} &  \makecell{$9.21$ \\ \scriptsize{$[8.58, 9.85]$}} &  \makecell{$0.154$ \\ \scriptsize{$[0.147, 0.160]$}} &  \makecell{$0.61$ \\ \scriptsize{$[0.56, 0.67]$}} &  \makecell{$0.53$ \\ \scriptsize{$[0.51, 0.56]$}} \\ 
\bottomrule
\end{tabular}

    \caption{Bigger dense layer. Results for the metrics for selected noise levels. Shown in square brackets are bootstrapped $95\%$-confidence intervals.
    Here \emph{topo} stands for topographic similarity, \emph{conf} for conflict count, \emph{cont} for context independence, \emph{pos} for positional disentanglement and \emph{acc} for accuracy.
    }
    \label{app:tab:big_dense}
\end{table}

\clearpage
\newpage
\subsection{Sensitivity to noisy channel implementation}
\label{sec:alternative_noise_design_experiments}
In this section, we provide details for 
experiments with alternative noisy channel implementation, see Figure \ref{fig:ridge_alternative_noise} and Table \ref{app:tab:alternative_noise}. 
Notice that the range of noise values is different from the main experiment (smaller by the order of magnitude).
The results suggest that the small values of noise can help, while the larger noise levels lead to a decline in compositionality.

\begin{figure}[H]
    \centering
    \includegraphics[width=\textwidth]{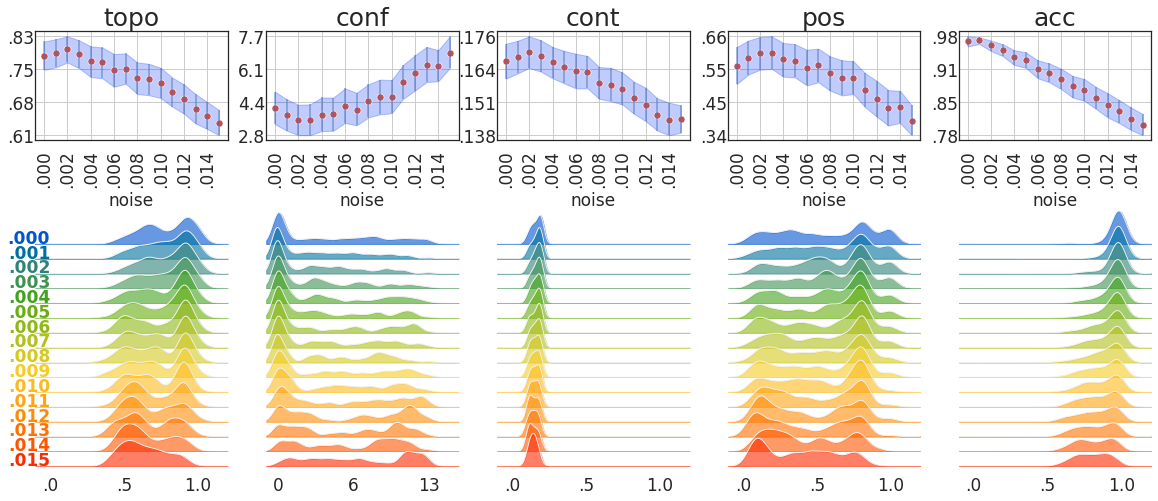}
    \caption{Alternative noise design. Top panel: average value of metrics for various noise levels. The shaded area corresponds to bootstrapped $95\%$-confidence intervals for this estimator. Bottom panel: kernel density estimators for metrics and noise levels across seeds.
    Here \emph{topo} stands for topographic similarity, \emph{conf} for conflict count, \emph{cont} for context independence, \emph{pos} for positional disentanglement and \emph{acc} for accuracy.
    }
    \label{fig:ridge_alternative_noise}
\end{figure}

\begin{table}
    \centering
    \begin{tabular}{l|ccccc}
\toprule
noise &                                              topo &                                              conf &                                                 cont &                                               pos &                                               acc \\ \midrule
0.000 &  \makecell{$0.79$ \\ \scriptsize{$[0.75, 0.82]$}} &  \makecell{$4.13$ \\ \scriptsize{$[3.31, 4.95]$}} &  \makecell{$0.167$ \\ \scriptsize{$[0.160, 0.173]$}} &  \makecell{$0.56$ \\ \scriptsize{$[0.50, 0.62]$}} &  \makecell{$0.97$ \\ \scriptsize{$[0.96, 0.98]$}} \\ \midrule
0.001 &  \makecell{$0.79$ \\ \scriptsize{$[0.76, 0.82]$}} &  \makecell{$3.77$ \\ \scriptsize{$[3.00, 4.57]$}} &  \makecell{$0.168$ \\ \scriptsize{$[0.162, 0.174]$}} &  \makecell{$0.59$ \\ \scriptsize{$[0.53, 0.64]$}} &  \makecell{$0.97$ \\ \scriptsize{$[0.96, 0.98]$}} \\ \midrule
0.002 &  \makecell{$0.80$ \\ \scriptsize{$[0.77, 0.83]$}} &  \makecell{$3.51$ \\ \scriptsize{$[2.77, 4.29]$}} &  \makecell{$0.170$ \\ \scriptsize{$[0.164, 0.176]$}} &  \makecell{$0.60$ \\ \scriptsize{$[0.55, 0.66]$}} &  \makecell{$0.96$ \\ \scriptsize{$[0.95, 0.97]$}} \\ \midrule
0.003 &  \makecell{$0.79$ \\ \scriptsize{$[0.76, 0.82]$}} &  \makecell{$3.52$ \\ \scriptsize{$[2.77, 4.31]$}} &  \makecell{$0.169$ \\ \scriptsize{$[0.163, 0.175]$}} &  \makecell{$0.61$ \\ \scriptsize{$[0.55, 0.66]$}} &  \makecell{$0.95$ \\ \scriptsize{$[0.94, 0.96]$}} \\ \midrule
0.004 &  \makecell{$0.77$ \\ \scriptsize{$[0.74, 0.81]$}} &  \makecell{$3.80$ \\ \scriptsize{$[2.97, 4.66]$}} &  \makecell{$0.166$ \\ \scriptsize{$[0.160, 0.172]$}} &  \makecell{$0.58$ \\ \scriptsize{$[0.53, 0.64]$}} &  \makecell{$0.94$ \\ \scriptsize{$[0.92, 0.95]$}} \\ \midrule
0.005 &  \makecell{$0.77$ \\ \scriptsize{$[0.73, 0.80]$}} &  \makecell{$3.81$ \\ \scriptsize{$[2.99, 4.65]$}} &  \makecell{$0.165$ \\ \scriptsize{$[0.158, 0.171]$}} &  \makecell{$0.58$ \\ \scriptsize{$[0.53, 0.63]$}} &  \makecell{$0.93$ \\ \scriptsize{$[0.91, 0.95]$}} \\ \midrule
0.006 &  \makecell{$0.75$ \\ \scriptsize{$[0.72, 0.79]$}} &  \makecell{$4.21$ \\ \scriptsize{$[3.38, 5.11]$}} &  \makecell{$0.163$ \\ \scriptsize{$[0.156, 0.170]$}} &  \makecell{$0.56$ \\ \scriptsize{$[0.50, 0.61]$}} &  \makecell{$0.91$ \\ \scriptsize{$[0.89, 0.93]$}} \\ \midrule
0.007 &  \makecell{$0.76$ \\ \scriptsize{$[0.72, 0.79]$}} &  \makecell{$4.05$ \\ \scriptsize{$[3.26, 4.89]$}} &  \makecell{$0.163$ \\ \scriptsize{$[0.156, 0.169]$}} &  \makecell{$0.56$ \\ \scriptsize{$[0.51, 0.62]$}} &  \makecell{$0.91$ \\ \scriptsize{$[0.89, 0.92]$}} \\ \midrule
0.008 &  \makecell{$0.74$ \\ \scriptsize{$[0.70, 0.77]$}} &  \makecell{$4.48$ \\ \scriptsize{$[3.65, 5.31]$}} &  \makecell{$0.158$ \\ \scriptsize{$[0.152, 0.164]$}} &  \makecell{$0.54$ \\ \scriptsize{$[0.48, 0.59]$}} &  \makecell{$0.89$ \\ \scriptsize{$[0.87, 0.91]$}} \\ \midrule
0.009 &  \makecell{$0.73$ \\ \scriptsize{$[0.70, 0.77]$}} &  \makecell{$4.68$ \\ \scriptsize{$[3.81, 5.55]$}} &  \makecell{$0.158$ \\ \scriptsize{$[0.151, 0.164]$}} &  \makecell{$0.52$ \\ \scriptsize{$[0.47, 0.58]$}} &  \makecell{$0.88$ \\ \scriptsize{$[0.85, 0.90]$}} \\ \midrule
0.010 &  \makecell{$0.72$ \\ \scriptsize{$[0.69, 0.76]$}} &  \makecell{$4.70$ \\ \scriptsize{$[3.87, 5.53]$}} &  \makecell{$0.156$ \\ \scriptsize{$[0.150, 0.162]$}} &  \makecell{$0.52$ \\ \scriptsize{$[0.47, 0.58]$}} &  \makecell{$0.87$ \\ \scriptsize{$[0.85, 0.89]$}} \\ \midrule
0.011 &  \makecell{$0.70$ \\ \scriptsize{$[0.67, 0.74]$}} &  \makecell{$5.43$ \\ \scriptsize{$[4.59, 6.30]$}} &  \makecell{$0.153$ \\ \scriptsize{$[0.147, 0.159]$}} &  \makecell{$0.48$ \\ \scriptsize{$[0.43, 0.54]$}} &  \makecell{$0.85$ \\ \scriptsize{$[0.83, 0.88]$}} \\ \midrule
0.012 &  \makecell{$0.69$ \\ \scriptsize{$[0.65, 0.72]$}} &  \makecell{$5.87$ \\ \scriptsize{$[5.00, 6.76]$}} &  \makecell{$0.150$ \\ \scriptsize{$[0.144, 0.157]$}} &  \makecell{$0.46$ \\ \scriptsize{$[0.40, 0.51]$}} &  \makecell{$0.84$ \\ \scriptsize{$[0.81, 0.86]$}} \\ \midrule
0.013 &  \makecell{$0.67$ \\ \scriptsize{$[0.63, 0.70]$}} &  \makecell{$6.29$ \\ \scriptsize{$[5.42, 7.19]$}} &  \makecell{$0.146$ \\ \scriptsize{$[0.140, 0.152]$}} &  \makecell{$0.43$ \\ \scriptsize{$[0.37, 0.48]$}} &  \makecell{$0.83$ \\ \scriptsize{$[0.80, 0.85]$}} \\ \midrule
0.014 &  \makecell{$0.65$ \\ \scriptsize{$[0.62, 0.68]$}} &  \makecell{$6.23$ \\ \scriptsize{$[5.47, 7.03]$}} &  \makecell{$0.144$ \\ \scriptsize{$[0.138, 0.151]$}} &  \makecell{$0.43$ \\ \scriptsize{$[0.38, 0.48]$}} &  \makecell{$0.81$ \\ \scriptsize{$[0.79, 0.83]$}} \\ \midrule
0.015 &  \makecell{$0.63$ \\ \scriptsize{$[0.61, 0.66]$}} &  \makecell{$6.91$ \\ \scriptsize{$[6.13, 7.72]$}} &  \makecell{$0.145$ \\ \scriptsize{$[0.139, 0.150]$}} &  \makecell{$0.39$ \\ \scriptsize{$[0.34, 0.44]$}} &  \makecell{$0.80$ \\ \scriptsize{$[0.78, 0.82]$}} \\ 
\bottomrule
\end{tabular}

    \caption{Alternative noise design. Results for the metrics for selected noise levels. Shown in square brackets are bootstrapped $95\%$-confidence intervals.
    Here \emph{topo} stands for topographic similarity, \emph{conf} for conflict count, \emph{cont} for context independence, \emph{pos} for positional disentanglement and \emph{acc} for accuracy.
    }
    \label{app:tab:alternative_noise}
\end{table}

\clearpage
\newpage
\subsection{Longer message}\label{sec:longer_message}
Here we provide details for longer message experiments, see Figure~\ref{fig:ridge_message_len_3}, Table~\ref{app:tab:message_len_3} and Figure~\ref{fig:ridge_message_len_4}, Table~\ref{app:tab:message_len_4} for message lengths three and four, respectively. The setup expands upon the main experiment by including floor color
\footnote{We took the floor color feature to take values $0, 1, 2, 3$ (out of $10$ possible). The resulting dataset contains $76800$ images. } 
for the former and, additionally, wall color
\footnote{We took the wall color feature to take values $0, 1, 2, 3$ (out of $10$ possible). The resulting dataset contains $30720$ images. } 
for the latter. The overall levels of compositionality metrics decline when compared with the main experiment, however, the general picture that noise improves compositionality remains intact.

\begin{figure}[H]
    \centering
    \includegraphics[width=\textwidth]{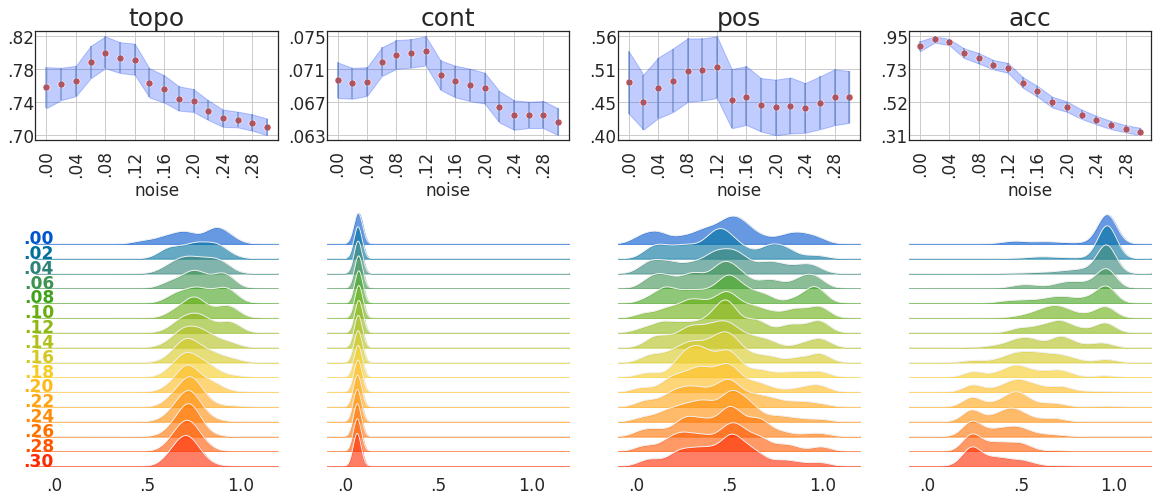}
    \caption{Message length equals 3. Top panel: average value of metrics for various noise levels. The shaded area corresponds to bootstrapped $95\%$-confidence intervals for this estimator. Bottom panel: kernel density estimators for metrics and noise levels across seeds.
    Here \emph{topo} stands for topographic similarity, \emph{pos} for positional disentanglement and \emph{acc} for accuracy.
    }
    \label{fig:ridge_message_len_3}
\end{figure}

\begin{figure}[H]
    \centering
    \includegraphics[width=\textwidth]{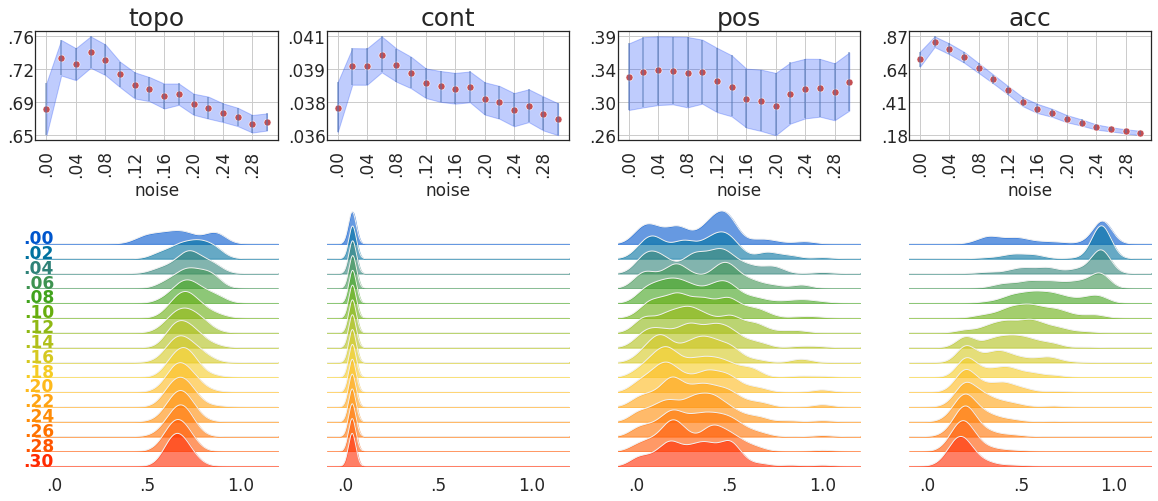}
    \caption{Message length equals 4. Top panel: average value of metrics for various noise levels. The shaded area corresponds to bootstrapped $95\%$-confidence intervals for this estimator. Bottom panel: kernel density estimators for metrics and noise levels across seeds.
    Here \emph{topo} stands for topographic similarity, \emph{pos} for positional disentanglement and \emph{acc} for accuracy.
    }
    \label{fig:ridge_message_len_4}
\end{figure}

\begin{table}
    \centering
    \begin{tabular}{l|cccc}
\toprule
noise &                                              topo &                                                 cont &                                               pos &                                               acc \\ \midrule
0.00  &  \makecell{$0.76$ \\ \scriptsize{$[0.73, 0.78]$}} &  \makecell{$0.069$ \\ \scriptsize{$[0.067, 0.072]$}} &  \makecell{$0.49$ \\ \scriptsize{$[0.44, 0.54]$}} &  \makecell{$0.89$ \\ \scriptsize{$[0.85, 0.92]$}} \\ \midrule
0.02  &  \makecell{$0.76$ \\ \scriptsize{$[0.74, 0.78]$}} &  \makecell{$0.069$ \\ \scriptsize{$[0.067, 0.071]$}} &  \makecell{$0.45$ \\ \scriptsize{$[0.41, 0.50]$}} &  \makecell{$0.93$ \\ \scriptsize{$[0.91, 0.95]$}} \\ \midrule
0.04  &  \makecell{$0.77$ \\ \scriptsize{$[0.75, 0.79]$}} &  \makecell{$0.069$ \\ \scriptsize{$[0.067, 0.071]$}} &  \makecell{$0.48$ \\ \scriptsize{$[0.43, 0.53]$}} &  \makecell{$0.91$ \\ \scriptsize{$[0.89, 0.93]$}} \\ \midrule
0.06  &  \makecell{$0.79$ \\ \scriptsize{$[0.77, 0.81]$}} &  \makecell{$0.072$ \\ \scriptsize{$[0.070, 0.074]$}} &  \makecell{$0.49$ \\ \scriptsize{$[0.44, 0.54]$}} &  \makecell{$0.84$ \\ \scriptsize{$[0.81, 0.87]$}} \\ \midrule
0.08  &  \makecell{$0.80$ \\ \scriptsize{$[0.78, 0.82]$}} &  \makecell{$0.073$ \\ \scriptsize{$[0.071, 0.074]$}} &  \makecell{$0.51$ \\ \scriptsize{$[0.46, 0.56]$}} &  \makecell{$0.81$ \\ \scriptsize{$[0.77, 0.84]$}} \\ \midrule
0.10  &  \makecell{$0.80$ \\ \scriptsize{$[0.78, 0.82]$}} &  \makecell{$0.073$ \\ \scriptsize{$[0.071, 0.075]$}} &  \makecell{$0.51$ \\ \scriptsize{$[0.46, 0.56]$}} &  \makecell{$0.76$ \\ \scriptsize{$[0.73, 0.79]$}} \\ \midrule
0.12  &  \makecell{$0.79$ \\ \scriptsize{$[0.77, 0.81]$}} &  \makecell{$0.073$ \\ \scriptsize{$[0.071, 0.075]$}} &  \makecell{$0.51$ \\ \scriptsize{$[0.46, 0.56]$}} &  \makecell{$0.74$ \\ \scriptsize{$[0.71, 0.78]$}} \\ \midrule
0.14  &  \makecell{$0.76$ \\ \scriptsize{$[0.74, 0.78]$}} &  \makecell{$0.070$ \\ \scriptsize{$[0.068, 0.072]$}} &  \makecell{$0.46$ \\ \scriptsize{$[0.41, 0.51]$}} &  \makecell{$0.65$ \\ \scriptsize{$[0.62, 0.68]$}} \\ \midrule
0.16  &  \makecell{$0.76$ \\ \scriptsize{$[0.74, 0.77]$}} &  \makecell{$0.069$ \\ \scriptsize{$[0.067, 0.071]$}} &  \makecell{$0.46$ \\ \scriptsize{$[0.42, 0.51]$}} &  \makecell{$0.59$ \\ \scriptsize{$[0.56, 0.63]$}} \\ \midrule
0.18  &  \makecell{$0.74$ \\ \scriptsize{$[0.73, 0.76]$}} &  \makecell{$0.069$ \\ \scriptsize{$[0.067, 0.071]$}} &  \makecell{$0.45$ \\ \scriptsize{$[0.41, 0.49]$}} &  \makecell{$0.52$ \\ \scriptsize{$[0.49, 0.56]$}} \\ \midrule
0.20  &  \makecell{$0.74$ \\ \scriptsize{$[0.73, 0.76]$}} &  \makecell{$0.069$ \\ \scriptsize{$[0.067, 0.070]$}} &  \makecell{$0.45$ \\ \scriptsize{$[0.40, 0.49]$}} &  \makecell{$0.49$ \\ \scriptsize{$[0.46, 0.53]$}} \\ \midrule
0.22  &  \makecell{$0.73$ \\ \scriptsize{$[0.72, 0.74]$}} &  \makecell{$0.066$ \\ \scriptsize{$[0.064, 0.068]$}} &  \makecell{$0.45$ \\ \scriptsize{$[0.40, 0.50]$}} &  \makecell{$0.44$ \\ \scriptsize{$[0.41, 0.47]$}} \\ \midrule
0.24  &  \makecell{$0.72$ \\ \scriptsize{$[0.71, 0.73]$}} &  \makecell{$0.065$ \\ \scriptsize{$[0.063, 0.067]$}} &  \makecell{$0.45$ \\ \scriptsize{$[0.40, 0.49]$}} &  \makecell{$0.41$ \\ \scriptsize{$[0.38, 0.43]$}} \\ \midrule
0.26  &  \makecell{$0.72$ \\ \scriptsize{$[0.71, 0.73]$}} &  \makecell{$0.065$ \\ \scriptsize{$[0.064, 0.067]$}} &  \makecell{$0.45$ \\ \scriptsize{$[0.41, 0.50]$}} &  \makecell{$0.37$ \\ \scriptsize{$[0.35, 0.40]$}} \\ \midrule
0.28  &  \makecell{$0.71$ \\ \scriptsize{$[0.70, 0.72]$}} &  \makecell{$0.065$ \\ \scriptsize{$[0.064, 0.067]$}} &  \makecell{$0.46$ \\ \scriptsize{$[0.42, 0.51]$}} &  \makecell{$0.35$ \\ \scriptsize{$[0.33, 0.37]$}} \\ \midrule
0.30  &  \makecell{$0.71$ \\ \scriptsize{$[0.70, 0.72]$}} &  \makecell{$0.064$ \\ \scriptsize{$[0.063, 0.066]$}} &  \makecell{$0.46$ \\ \scriptsize{$[0.42, 0.51]$}} &  \makecell{$0.33$ \\ \scriptsize{$[0.31, 0.36]$}} \\ 
\bottomrule
\end{tabular}

    \caption{Message length equals 3. Results for the metrics for selected noise levels. Shown in square brackets are bootstrapped $95\%$-confidence intervals.
    Here \emph{topo} stands for topographic similarity, \emph{conf} for conflict count, \emph{cont} for context independence, \emph{pos} for positional disentanglement and \emph{acc} for accuracy.
    }
    \label{app:tab:message_len_3}
\end{table}

\begin{table}
    \centering
    \begin{tabular}{l|cccc}
\toprule
noise &                                              topo &                                                 cont &                                               pos &                                               acc \\ \midrule
0.00  &  \makecell{$0.68$ \\ \scriptsize{$[0.65, 0.71]$}} &  \makecell{$0.038$ \\ \scriptsize{$[0.036, 0.039]$}} &  \makecell{$0.33$ \\ \scriptsize{$[0.29, 0.38]$}} &  \makecell{$0.71$ \\ \scriptsize{$[0.66, 0.76]$}} \\ \midrule
0.02  &  \makecell{$0.74$ \\ \scriptsize{$[0.72, 0.76]$}} &  \makecell{$0.040$ \\ \scriptsize{$[0.039, 0.040]$}} &  \makecell{$0.34$ \\ \scriptsize{$[0.30, 0.39]$}} &  \makecell{$0.83$ \\ \scriptsize{$[0.79, 0.87]$}} \\ \midrule
0.04  &  \makecell{$0.73$ \\ \scriptsize{$[0.71, 0.75]$}} &  \makecell{$0.040$ \\ \scriptsize{$[0.039, 0.040]$}} &  \makecell{$0.34$ \\ \scriptsize{$[0.30, 0.39]$}} &  \makecell{$0.78$ \\ \scriptsize{$[0.74, 0.82]$}} \\ \midrule
0.06  &  \makecell{$0.74$ \\ \scriptsize{$[0.73, 0.76]$}} &  \makecell{$0.040$ \\ \scriptsize{$[0.039, 0.041]$}} &  \makecell{$0.34$ \\ \scriptsize{$[0.30, 0.39]$}} &  \makecell{$0.72$ \\ \scriptsize{$[0.69, 0.76]$}} \\ \midrule
0.08  &  \makecell{$0.73$ \\ \scriptsize{$[0.72, 0.75]$}} &  \makecell{$0.040$ \\ \scriptsize{$[0.039, 0.040]$}} &  \makecell{$0.34$ \\ \scriptsize{$[0.30, 0.39]$}} &  \makecell{$0.65$ \\ \scriptsize{$[0.61, 0.68]$}} \\ \midrule
0.10  &  \makecell{$0.72$ \\ \scriptsize{$[0.70, 0.73]$}} &  \makecell{$0.039$ \\ \scriptsize{$[0.039, 0.040]$}} &  \makecell{$0.34$ \\ \scriptsize{$[0.30, 0.38]$}} &  \makecell{$0.57$ \\ \scriptsize{$[0.54, 0.60]$}} \\ \midrule
0.12  &  \makecell{$0.71$ \\ \scriptsize{$[0.69, 0.72]$}} &  \makecell{$0.039$ \\ \scriptsize{$[0.038, 0.039]$}} &  \makecell{$0.33$ \\ \scriptsize{$[0.29, 0.37]$}} &  \makecell{$0.49$ \\ \scriptsize{$[0.47, 0.52]$}} \\ \midrule
0.14  &  \makecell{$0.70$ \\ \scriptsize{$[0.69, 0.72]$}} &  \makecell{$0.039$ \\ \scriptsize{$[0.038, 0.039]$}} &  \makecell{$0.32$ \\ \scriptsize{$[0.28, 0.36]$}} &  \makecell{$0.41$ \\ \scriptsize{$[0.38, 0.45]$}} \\ \midrule
0.16  &  \makecell{$0.69$ \\ \scriptsize{$[0.68, 0.71]$}} &  \makecell{$0.038$ \\ \scriptsize{$[0.038, 0.039]$}} &  \makecell{$0.31$ \\ \scriptsize{$[0.27, 0.35]$}} &  \makecell{$0.37$ \\ \scriptsize{$[0.34, 0.40]$}} \\ \midrule
0.18  &  \makecell{$0.70$ \\ \scriptsize{$[0.69, 0.71]$}} &  \makecell{$0.039$ \\ \scriptsize{$[0.038, 0.039]$}} &  \makecell{$0.30$ \\ \scriptsize{$[0.27, 0.34]$}} &  \makecell{$0.34$ \\ \scriptsize{$[0.31, 0.37]$}} \\ \midrule
0.20  &  \makecell{$0.69$ \\ \scriptsize{$[0.67, 0.70]$}} &  \makecell{$0.038$ \\ \scriptsize{$[0.037, 0.039]$}} &  \makecell{$0.30$ \\ \scriptsize{$[0.26, 0.34]$}} &  \makecell{$0.30$ \\ \scriptsize{$[0.27, 0.32]$}} \\ \midrule
0.22  &  \makecell{$0.68$ \\ \scriptsize{$[0.67, 0.69]$}} &  \makecell{$0.038$ \\ \scriptsize{$[0.037, 0.039]$}} &  \makecell{$0.31$ \\ \scriptsize{$[0.28, 0.35]$}} &  \makecell{$0.27$ \\ \scriptsize{$[0.25, 0.29]$}} \\ \midrule
0.24  &  \makecell{$0.68$ \\ \scriptsize{$[0.67, 0.69]$}} &  \makecell{$0.038$ \\ \scriptsize{$[0.037, 0.038]$}} &  \makecell{$0.32$ \\ \scriptsize{$[0.28, 0.36]$}} &  \makecell{$0.24$ \\ \scriptsize{$[0.22, 0.26]$}} \\ \midrule
0.26  &  \makecell{$0.67$ \\ \scriptsize{$[0.66, 0.68]$}} &  \makecell{$0.038$ \\ \scriptsize{$[0.037, 0.038]$}} &  \makecell{$0.32$ \\ \scriptsize{$[0.28, 0.36]$}} &  \makecell{$0.22$ \\ \scriptsize{$[0.21, 0.24]$}} \\ \midrule
0.28  &  \makecell{$0.66$ \\ \scriptsize{$[0.66, 0.67]$}} &  \makecell{$0.037$ \\ \scriptsize{$[0.037, 0.038]$}} &  \makecell{$0.32$ \\ \scriptsize{$[0.28, 0.35]$}} &  \makecell{$0.21$ \\ \scriptsize{$[0.20, 0.23]$}} \\ \midrule
0.30  &  \makecell{$0.67$ \\ \scriptsize{$[0.66, 0.68]$}} &  \makecell{$0.037$ \\ \scriptsize{$[0.036, 0.038]$}} &  \makecell{$0.33$ \\ \scriptsize{$[0.29, 0.37]$}} &  \makecell{$0.20$ \\ \scriptsize{$[0.18, 0.21]$}} \\ 
\bottomrule
\end{tabular}

    \caption{Message length equals 4. Results for the metrics for selected noise levels. Shown in square brackets are bootstrapped $95\%$-confidence intervals.
    Here \emph{topo} stands for topographic similarity, \emph{conf} for conflict count, \emph{cont} for context independence, \emph{pos} for positional disentanglement and \emph{acc} for accuracy.
    }
    \label{app:tab:message_len_4}
\end{table}

\clearpage
\newpage
\subsection{Visual priors}\label{sec:scrambled_images_appendix}

In this section we give details for visual priors experiments, see Figure \ref{fig:ridge_scramble_32}, Table \ref{app:tab:scramble_32} (tile 32), Figure \ref{fig:ridge_scramble_16}, Table \ref{app:tab:scramble_16} (tile 16), and Figure \ref{fig:ridge_scramble_8}, Table \ref{app:tab:scramble_8} (tile 8).
The transition from coarser to finer tiles has a significant impact both on the metric's profiles and on their overall levels. The characteristic peak for some positive noise levels is still present.

We complement the picture with an analysis of the interplay between variable noise and accuracy, see Figure \ref{fig:scramble_32_filter} (tile $32$) and Figure \ref{fig:scramble_16_filter} (tile $16$). It shows that the overall metrics level, conditioned on seeds with high accuracy, increases significantly. 
Notice, however, that the number of experiments with high accuracy decrease as the threshold increases. 
In particular, we did not include visualization for a tile size 8, since there were too few experiments exceeding the accuracy threshold of $0.80$.

\begin{figure}[H]
    \centering
    \includegraphics[width=\textwidth]{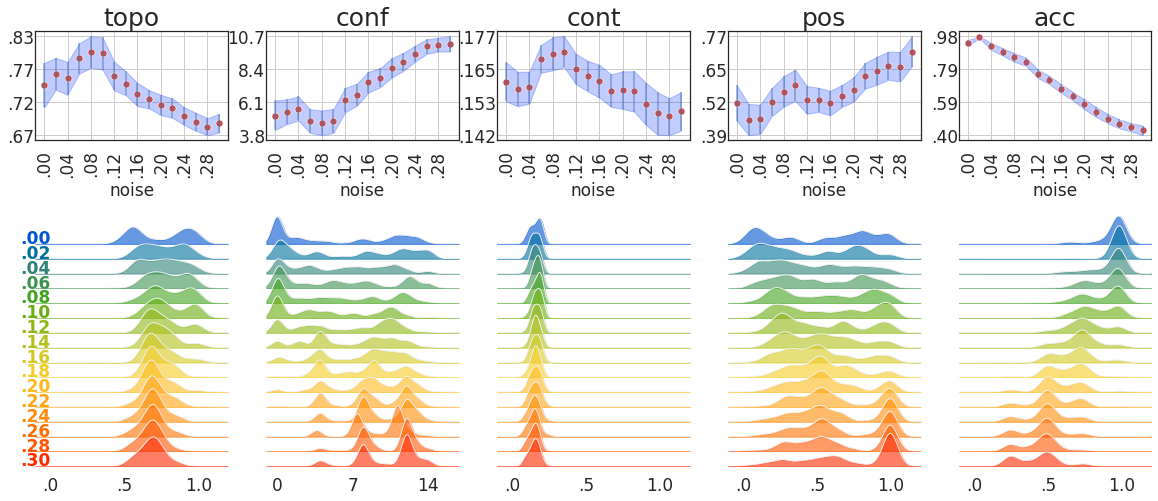}
    \caption{Scramble with tile 32. Top panel: average value of metrics for various noise levels. The shaded area corresponds to bootstrapped $95\%$-confidence intervals for this estimator. Bottom panel: kernel density estimators for metrics and noise levels across seeds.
    Here \emph{topo} stands for topographic similarity, \emph{conf} for conflict count, \emph{cont} for context independence, \emph{pos} for positional disentanglement and \emph{acc} for accuracy.
    }
    \label{fig:ridge_scramble_32}
\end{figure}

\begin{figure}[H]
    \centering
    \includegraphics[width=\textwidth]{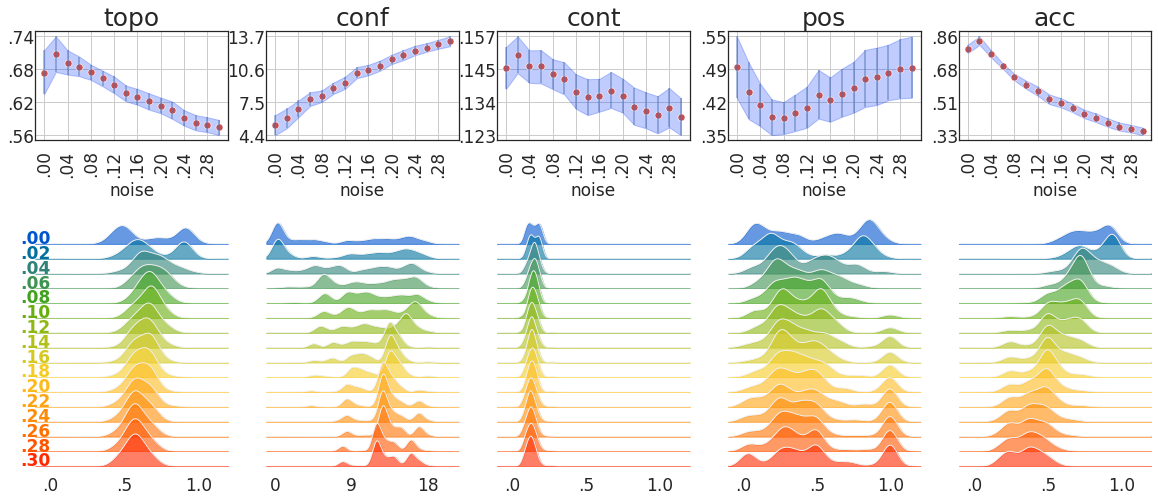}
    \caption{Scramble with tile 16. Top panel: average value of metrics for various noise levels. The shaded area corresponds to bootstrapped $95\%$-confidence intervals for this estimator. Bottom panel: kernel density estimators for metrics and noise levels across seeds.
    Here \emph{topo} stands for topographic similarity, \emph{conf} for conflict count, \emph{cont} for context independence, \emph{pos} for positional disentanglement and \emph{acc} for accuracy.
    }
    \label{fig:ridge_scramble_16}
\end{figure}

\begin{figure}[H]
    \centering
    \includegraphics[width=\textwidth]{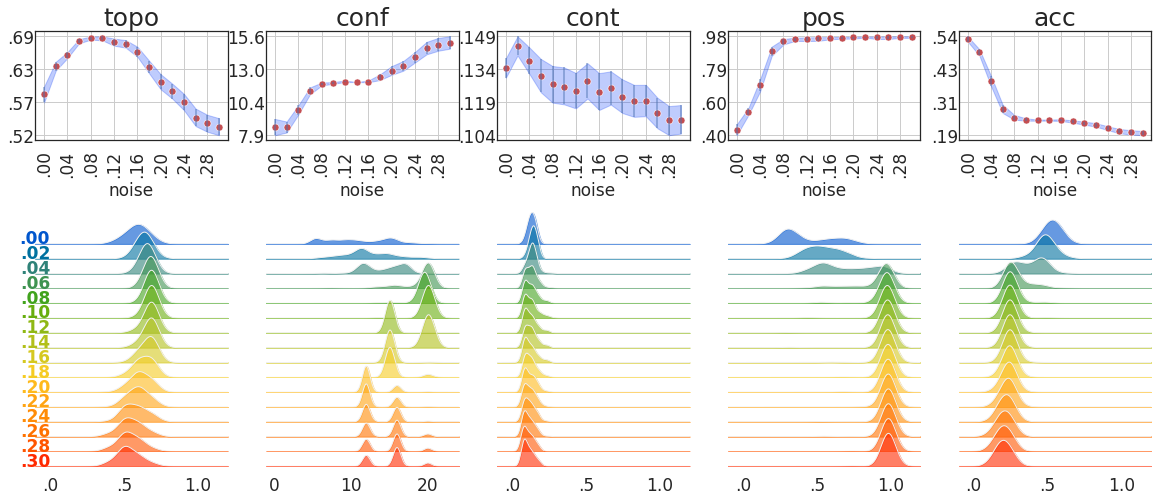}
    \caption{Scramble with tile 8. Top panel: average value of metrics for various noise levels. The shaded area corresponds to bootstrapped $95\%$-confidence intervals for this estimator. Bottom panel: kernel density estimators for metrics and noise levels across seeds.
    Here \emph{topo} stands for topographic similarity, \emph{conf} for conflict count, \emph{cont} for context independence, \emph{pos} for positional disentanglement and \emph{acc} for accuracy.
    }
    \label{fig:ridge_scramble_8}
\end{figure}

\begin{figure}[H]
    \centering
    \includegraphics[width=\textwidth]{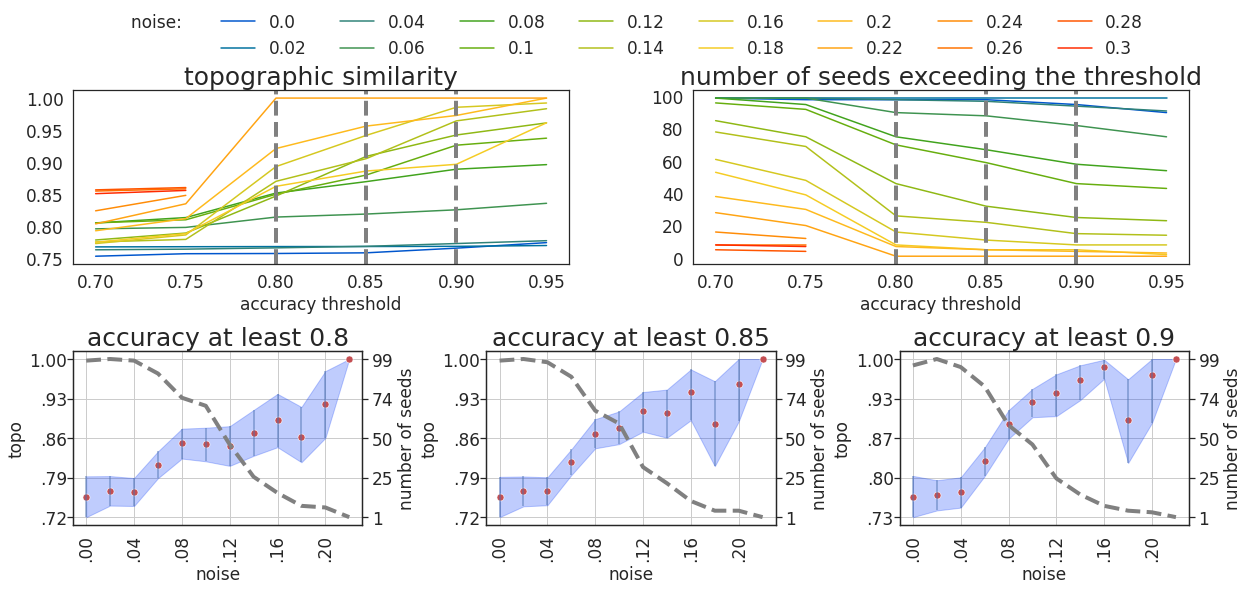}
    \caption{\small Tile size 32. Top left: The values of topo computed for each noise level (hues) and on seeds exceeding a certain accuracy threshold ($x$-axis). Vertical dashed lines represent three cross-sections, visualized in the bottom panel.
    Top right: Similar to the left panel, but instead of topo we visualize the number of seeds with accuracy at least as a given threshold ($x$-axis). Vertical dashed lines represent three cross-sections, visualized in the bottom panel.
    Bottom: Each of the plots represents a cross-section of the plots in the top panel, taken at points $0.80$, $0.85$, and $0.90$, respectively. 
    On the left axis of each figure is the range of topo, whereas on the right axis is the number of seeds with accuracy exceed the corresponding level. On the $x$-axis are the noise levels. The scatter plot with 95\%-confidence intervals represents the values of topo. The gray dashed line represents the number of seeds with accuracy exceeding a given threshold, for each of the noise levels.}
    \label{fig:scramble_32_filter}
\end{figure}

\begin{figure}[H]
    \centering
    \includegraphics[width=\textwidth]{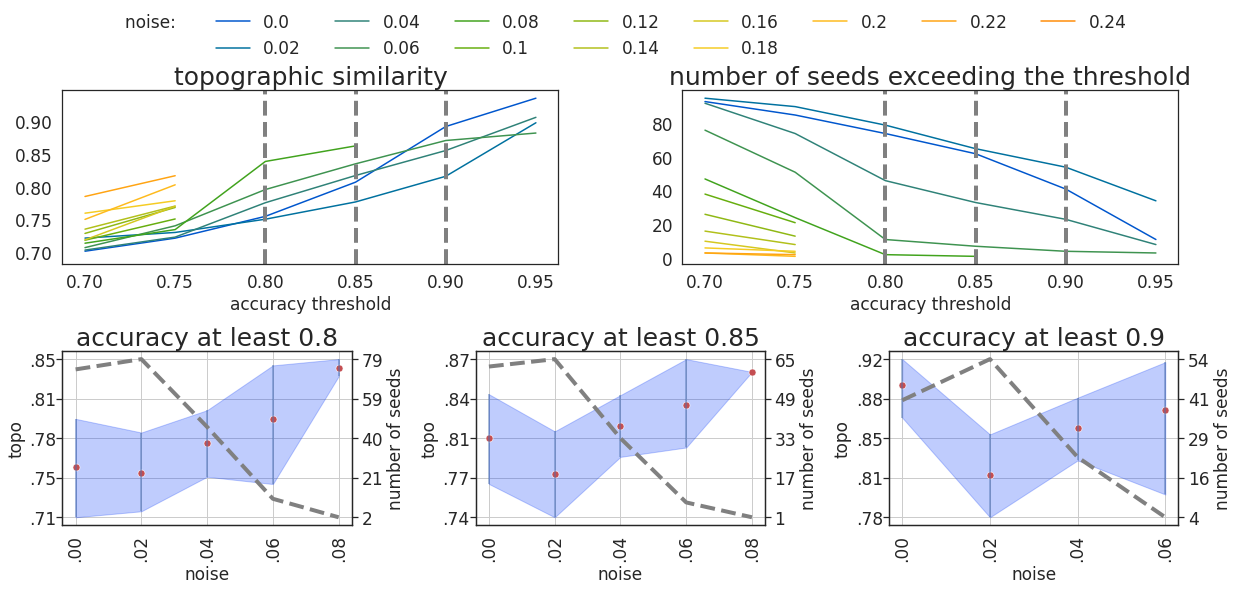}
    \caption{\small Tile size 16. Top left: The values of topo computed for each noise level (hues) and on seeds exceeding a certain accuracy threshold ($x$-axis). Vertical dashed lines represent three cross-sections, visualized in the bottom panel.
    Top right: Similar to the left panel, but instead of topo we visualize the number of seeds with accuracy at least as a given threshold ($x$-axis). Vertical dashed lines represent three cross-sections, visualized in the bottom panel.
    Bottom: Each of the plots represents a cross-section of the plots in the top panel, taken at points $0.80$, $0.85$, and $0.90$, respectively. 
    On the left axis of each figure is the range of topo, whereas on the right axis is the number of seeds with accuracy exceed the corresponding level. On the $x$-axis are the noise levels. The scatter plot with 95\%-confidence intervals represents the values of topo. The gray dashed line represents the number of seeds with accuracy exceeding a given threshold, for each of the noise levels.}
    \label{fig:scramble_16_filter}
\end{figure}

\begin{table}
    \centering
    \begin{tabular}{l|ccccc}
\toprule
noise &                                              topo &                                                conf &                                                 cont &                                               pos &                                               acc \\ \midrule
0.00  &  \makecell{$0.75$ \\ \scriptsize{$[0.71, 0.78]$}} &    \makecell{$5.20$ \\ \scriptsize{$[4.21, 6.27]$}} &  \makecell{$0.160$ \\ \scriptsize{$[0.154, 0.168]$}} &  \makecell{$0.52$ \\ \scriptsize{$[0.45, 0.59]$}} &  \makecell{$0.95$ \\ \scriptsize{$[0.93, 0.96]$}} \\ \midrule
0.02  &  \makecell{$0.77$ \\ \scriptsize{$[0.74, 0.79]$}} &    \makecell{$5.47$ \\ \scriptsize{$[4.61, 6.35]$}} &  \makecell{$0.158$ \\ \scriptsize{$[0.152, 0.164]$}} &  \makecell{$0.45$ \\ \scriptsize{$[0.39, 0.51]$}} &  \makecell{$0.98$ \\ \scriptsize{$[0.97, 0.98]$}} \\ \midrule
0.04  &  \makecell{$0.76$ \\ \scriptsize{$[0.73, 0.79]$}} &    \makecell{$5.69$ \\ \scriptsize{$[4.87, 6.55]$}} &  \makecell{$0.158$ \\ \scriptsize{$[0.153, 0.164]$}} &  \makecell{$0.45$ \\ \scriptsize{$[0.40, 0.51]$}} &  \makecell{$0.92$ \\ \scriptsize{$[0.90, 0.94]$}} \\ \midrule
0.06  &  \makecell{$0.79$ \\ \scriptsize{$[0.77, 0.82]$}} &    \makecell{$4.81$ \\ \scriptsize{$[4.01, 5.66]$}} &  \makecell{$0.169$ \\ \scriptsize{$[0.164, 0.174]$}} &  \makecell{$0.52$ \\ \scriptsize{$[0.47, 0.58]$}} &  \makecell{$0.89$ \\ \scriptsize{$[0.87, 0.91]$}} \\ \midrule
0.08  &  \makecell{$0.80$ \\ \scriptsize{$[0.78, 0.83]$}} &    \makecell{$4.68$ \\ \scriptsize{$[3.84, 5.57]$}} &  \makecell{$0.170$ \\ \scriptsize{$[0.165, 0.176]$}} &  \makecell{$0.56$ \\ \scriptsize{$[0.50, 0.62]$}} &  \makecell{$0.86$ \\ \scriptsize{$[0.83, 0.88]$}} \\ \midrule
0.10  &  \makecell{$0.80$ \\ \scriptsize{$[0.77, 0.83]$}} &    \makecell{$4.82$ \\ \scriptsize{$[4.00, 5.68]$}} &  \makecell{$0.171$ \\ \scriptsize{$[0.166, 0.177]$}} &  \makecell{$0.59$ \\ \scriptsize{$[0.53, 0.65]$}} &  \makecell{$0.83$ \\ \scriptsize{$[0.81, 0.86]$}} \\ \midrule
0.12  &  \makecell{$0.76$ \\ \scriptsize{$[0.74, 0.79]$}} &    \makecell{$6.29$ \\ \scriptsize{$[5.44, 7.14]$}} &  \makecell{$0.165$ \\ \scriptsize{$[0.160, 0.170]$}} &  \makecell{$0.53$ \\ \scriptsize{$[0.48, 0.58]$}} &  \makecell{$0.76$ \\ \scriptsize{$[0.74, 0.79]$}} \\ \midrule
0.14  &  \makecell{$0.75$ \\ \scriptsize{$[0.73, 0.77]$}} &    \makecell{$6.68$ \\ \scriptsize{$[5.90, 7.44]$}} &  \makecell{$0.163$ \\ \scriptsize{$[0.157, 0.168]$}} &  \makecell{$0.53$ \\ \scriptsize{$[0.48, 0.58]$}} &  \makecell{$0.72$ \\ \scriptsize{$[0.70, 0.75]$}} \\ \midrule
0.16  &  \makecell{$0.73$ \\ \scriptsize{$[0.72, 0.75]$}} &    \makecell{$7.57$ \\ \scriptsize{$[6.82, 8.27]$}} &  \makecell{$0.161$ \\ \scriptsize{$[0.155, 0.167]$}} &  \makecell{$0.52$ \\ \scriptsize{$[0.47, 0.56]$}} &  \makecell{$0.67$ \\ \scriptsize{$[0.65, 0.70]$}} \\ \midrule
0.18  &  \makecell{$0.73$ \\ \scriptsize{$[0.71, 0.74]$}} &    \makecell{$7.86$ \\ \scriptsize{$[7.19, 8.51]$}} &  \makecell{$0.157$ \\ \scriptsize{$[0.152, 0.163]$}} &  \makecell{$0.54$ \\ \scriptsize{$[0.50, 0.59]$}} &  \makecell{$0.63$ \\ \scriptsize{$[0.60, 0.65]$}} \\ \midrule
0.20  &  \makecell{$0.72$ \\ \scriptsize{$[0.70, 0.73]$}} &    \makecell{$8.57$ \\ \scriptsize{$[7.87, 9.22]$}} &  \makecell{$0.157$ \\ \scriptsize{$[0.151, 0.164]$}} &  \makecell{$0.57$ \\ \scriptsize{$[0.52, 0.62]$}} &  \makecell{$0.58$ \\ \scriptsize{$[0.55, 0.61]$}} \\ \midrule
0.22  &  \makecell{$0.71$ \\ \scriptsize{$[0.70, 0.73]$}} &    \makecell{$8.98$ \\ \scriptsize{$[8.34, 9.59]$}} &  \makecell{$0.157$ \\ \scriptsize{$[0.150, 0.164]$}} &  \makecell{$0.62$ \\ \scriptsize{$[0.57, 0.67]$}} &  \makecell{$0.54$ \\ \scriptsize{$[0.51, 0.57]$}} \\ \midrule
0.24  &  \makecell{$0.70$ \\ \scriptsize{$[0.69, 0.71]$}} &   \makecell{$9.54$ \\ \scriptsize{$[8.98, 10.09]$}} &  \makecell{$0.153$ \\ \scriptsize{$[0.145, 0.160]$}} &  \makecell{$0.64$ \\ \scriptsize{$[0.59, 0.70]$}} &  \makecell{$0.50$ \\ \scriptsize{$[0.47, 0.52]$}} \\ \midrule
0.26  &  \makecell{$0.69$ \\ \scriptsize{$[0.68, 0.70]$}} &  \makecell{$10.04$ \\ \scriptsize{$[9.51, 10.54]$}} &  \makecell{$0.149$ \\ \scriptsize{$[0.142, 0.157]$}} &  \makecell{$0.66$ \\ \scriptsize{$[0.61, 0.72]$}} &  \makecell{$0.47$ \\ \scriptsize{$[0.44, 0.49]$}} \\ \midrule
0.28  &  \makecell{$0.68$ \\ \scriptsize{$[0.67, 0.70]$}} &  \makecell{$10.18$ \\ \scriptsize{$[9.69, 10.65]$}} &  \makecell{$0.148$ \\ \scriptsize{$[0.142, 0.155]$}} &  \makecell{$0.66$ \\ \scriptsize{$[0.60, 0.71]$}} &  \makecell{$0.45$ \\ \scriptsize{$[0.42, 0.47]$}} \\ \midrule
0.30  &  \makecell{$0.69$ \\ \scriptsize{$[0.67, 0.70]$}} &  \makecell{$10.23$ \\ \scriptsize{$[9.68, 10.75]$}} &  \makecell{$0.150$ \\ \scriptsize{$[0.143, 0.157]$}} &  \makecell{$0.72$ \\ \scriptsize{$[0.66, 0.77]$}} &  \makecell{$0.43$ \\ \scriptsize{$[0.40, 0.46]$}} \\ 
\bottomrule
\end{tabular}

    \caption{Scramble with tile 32. Results for the metrics for selected noise levels. Shown in square brackets are bootstrapped $95\%$-confidence intervals.
    Here \emph{topo} stands for topographic similarity, \emph{conf} for conflict count, \emph{cont} for context independence, \emph{pos} for positional disentanglement and \emph{acc} for accuracy.
    }
    \label{app:tab:scramble_32}
\end{table}

\begin{table}
    \centering
    \begin{tabular}{l|ccccc}
\toprule
noise &                                              topo &                                                 conf &                                                 cont &                                               pos &                                               acc \\ \midrule
0.00  &  \makecell{$0.67$ \\ \scriptsize{$[0.64, 0.71]$}} &     \makecell{$5.32$ \\ \scriptsize{$[4.36, 6.26]$}} &  \makecell{$0.146$ \\ \scriptsize{$[0.139, 0.153]$}} &  \makecell{$0.49$ \\ \scriptsize{$[0.43, 0.55]$}} &  \makecell{$0.79$ \\ \scriptsize{$[0.77, 0.82]$}} \\ \midrule
0.02  &  \makecell{$0.71$ \\ \scriptsize{$[0.67, 0.74]$}} &     \makecell{$5.97$ \\ \scriptsize{$[5.01, 6.89]$}} &  \makecell{$0.150$ \\ \scriptsize{$[0.144, 0.157]$}} &  \makecell{$0.44$ \\ \scriptsize{$[0.39, 0.50]$}} &  \makecell{$0.84$ \\ \scriptsize{$[0.81, 0.86]$}} \\ \midrule
0.04  &  \makecell{$0.69$ \\ \scriptsize{$[0.67, 0.71]$}} &     \makecell{$6.87$ \\ \scriptsize{$[6.07, 7.66]$}} &  \makecell{$0.147$ \\ \scriptsize{$[0.141, 0.152]$}} &  \makecell{$0.41$ \\ \scriptsize{$[0.37, 0.46]$}} &  \makecell{$0.77$ \\ \scriptsize{$[0.75, 0.79]$}} \\ \midrule
0.06  &  \makecell{$0.68$ \\ \scriptsize{$[0.67, 0.70]$}} &     \makecell{$7.79$ \\ \scriptsize{$[7.13, 8.43]$}} &  \makecell{$0.147$ \\ \scriptsize{$[0.141, 0.152]$}} &  \makecell{$0.39$ \\ \scriptsize{$[0.35, 0.43]$}} &  \makecell{$0.70$ \\ \scriptsize{$[0.69, 0.72]$}} \\ \midrule
0.08  &  \makecell{$0.67$ \\ \scriptsize{$[0.66, 0.69]$}} &     \makecell{$8.05$ \\ \scriptsize{$[7.50, 8.60]$}} &  \makecell{$0.144$ \\ \scriptsize{$[0.139, 0.149]$}} &  \makecell{$0.39$ \\ \scriptsize{$[0.36, 0.42]$}} &  \makecell{$0.64$ \\ \scriptsize{$[0.62, 0.66]$}} \\ \midrule
0.10  &  \makecell{$0.66$ \\ \scriptsize{$[0.65, 0.68]$}} &     \makecell{$8.81$ \\ \scriptsize{$[8.27, 9.35]$}} &  \makecell{$0.142$ \\ \scriptsize{$[0.137, 0.148]$}} &  \makecell{$0.40$ \\ \scriptsize{$[0.36, 0.44]$}} &  \makecell{$0.60$ \\ \scriptsize{$[0.58, 0.62]$}} \\ \midrule
0.12  &  \makecell{$0.65$ \\ \scriptsize{$[0.64, 0.67]$}} &     \makecell{$9.34$ \\ \scriptsize{$[8.75, 9.91]$}} &  \makecell{$0.138$ \\ \scriptsize{$[0.132, 0.144]$}} &  \makecell{$0.41$ \\ \scriptsize{$[0.37, 0.45]$}} &  \makecell{$0.57$ \\ \scriptsize{$[0.54, 0.59]$}} \\ \midrule
0.14  &  \makecell{$0.64$ \\ \scriptsize{$[0.62, 0.65]$}} &   \makecell{$10.30$ \\ \scriptsize{$[9.76, 10.82]$}} &  \makecell{$0.136$ \\ \scriptsize{$[0.130, 0.142]$}} &  \makecell{$0.43$ \\ \scriptsize{$[0.39, 0.49]$}} &  \makecell{$0.52$ \\ \scriptsize{$[0.50, 0.55]$}} \\ \midrule
0.16  &  \makecell{$0.63$ \\ \scriptsize{$[0.62, 0.64]$}} &  \makecell{$10.54$ \\ \scriptsize{$[10.03, 11.04]$}} &  \makecell{$0.137$ \\ \scriptsize{$[0.131, 0.142]$}} &  \makecell{$0.42$ \\ \scriptsize{$[0.38, 0.47]$}} &  \makecell{$0.50$ \\ \scriptsize{$[0.48, 0.53]$}} \\ \midrule
0.18  &  \makecell{$0.62$ \\ \scriptsize{$[0.61, 0.64]$}} &  \makecell{$10.95$ \\ \scriptsize{$[10.47, 11.43]$}} &  \makecell{$0.138$ \\ \scriptsize{$[0.132, 0.145]$}} &  \makecell{$0.44$ \\ \scriptsize{$[0.39, 0.49]$}} &  \makecell{$0.48$ \\ \scriptsize{$[0.45, 0.50]$}} \\ \midrule
0.20  &  \makecell{$0.62$ \\ \scriptsize{$[0.60, 0.63]$}} &  \makecell{$11.55$ \\ \scriptsize{$[11.10, 11.99]$}} &  \makecell{$0.136$ \\ \scriptsize{$[0.131, 0.142]$}} &  \makecell{$0.45$ \\ \scriptsize{$[0.40, 0.50]$}} &  \makecell{$0.44$ \\ \scriptsize{$[0.42, 0.47]$}} \\ \midrule
0.22  &  \makecell{$0.61$ \\ \scriptsize{$[0.59, 0.62]$}} &  \makecell{$11.95$ \\ \scriptsize{$[11.47, 12.43]$}} &  \makecell{$0.133$ \\ \scriptsize{$[0.126, 0.139]$}} &  \makecell{$0.47$ \\ \scriptsize{$[0.41, 0.53]$}} &  \makecell{$0.42$ \\ \scriptsize{$[0.40, 0.45]$}} \\ \midrule
0.24  &  \makecell{$0.59$ \\ \scriptsize{$[0.58, 0.61]$}} &  \makecell{$12.38$ \\ \scriptsize{$[11.94, 12.84]$}} &  \makecell{$0.131$ \\ \scriptsize{$[0.125, 0.138]$}} &  \makecell{$0.47$ \\ \scriptsize{$[0.41, 0.53]$}} &  \makecell{$0.40$ \\ \scriptsize{$[0.38, 0.42]$}} \\ \midrule
0.26  &  \makecell{$0.58$ \\ \scriptsize{$[0.57, 0.60]$}} &  \makecell{$12.67$ \\ \scriptsize{$[12.26, 13.09]$}} &  \makecell{$0.130$ \\ \scriptsize{$[0.124, 0.136]$}} &  \makecell{$0.48$ \\ \scriptsize{$[0.42, 0.54]$}} &  \makecell{$0.38$ \\ \scriptsize{$[0.36, 0.40]$}} \\ \midrule
0.28  &  \makecell{$0.58$ \\ \scriptsize{$[0.57, 0.59]$}} &  \makecell{$12.97$ \\ \scriptsize{$[12.55, 13.41]$}} &  \makecell{$0.132$ \\ \scriptsize{$[0.126, 0.139]$}} &  \makecell{$0.49$ \\ \scriptsize{$[0.43, 0.55]$}} &  \makecell{$0.36$ \\ \scriptsize{$[0.34, 0.39]$}} \\ \midrule
0.30  &  \makecell{$0.58$ \\ \scriptsize{$[0.56, 0.59]$}} &  \makecell{$13.25$ \\ \scriptsize{$[12.79, 13.73]$}} &  \makecell{$0.129$ \\ \scriptsize{$[0.123, 0.136]$}} &  \makecell{$0.49$ \\ \scriptsize{$[0.43, 0.55]$}} &  \makecell{$0.35$ \\ \scriptsize{$[0.33, 0.37]$}} \\ 
\bottomrule
\end{tabular}

    \caption{Scramble with tile 16. Results for the metrics for selected noise levels. Shown in square brackets are bootstrapped $95\%$-confidence intervals.
    Here \emph{topo} stands for topographic similarity, \emph{conf} for conflict count, \emph{cont} for context independence, \emph{pos} for positional disentanglement and \emph{acc} for accuracy.
    }
    \label{app:tab:scramble_16}
\end{table}

\begin{table}
    \centering
    \begin{tabular}{l|ccccc}
\toprule
noise &                                              topo &                                                 conf &                                                 cont &                                               pos &                                               acc \\ \midrule
0.00  &  \makecell{$0.59$ \\ \scriptsize{$[0.57, 0.60]$}} &     \makecell{$8.51$ \\ \scriptsize{$[7.86, 9.13]$}} &  \makecell{$0.134$ \\ \scriptsize{$[0.130, 0.139]$}} &  \makecell{$0.44$ \\ \scriptsize{$[0.40, 0.47]$}} &  \makecell{$0.53$ \\ \scriptsize{$[0.52, 0.54]$}} \\ \midrule
0.02  &  \makecell{$0.64$ \\ \scriptsize{$[0.63, 0.64]$}} &     \makecell{$8.48$ \\ \scriptsize{$[8.07, 8.92]$}} &  \makecell{$0.144$ \\ \scriptsize{$[0.140, 0.149]$}} &  \makecell{$0.54$ \\ \scriptsize{$[0.52, 0.56]$}} &  \makecell{$0.49$ \\ \scriptsize{$[0.48, 0.50]$}} \\ \midrule
0.04  &  \makecell{$0.65$ \\ \scriptsize{$[0.65, 0.66]$}} &    \makecell{$9.82$ \\ \scriptsize{$[9.46, 10.18]$}} &  \makecell{$0.138$ \\ \scriptsize{$[0.132, 0.144]$}} &  \makecell{$0.70$ \\ \scriptsize{$[0.66, 0.73]$}} &  \makecell{$0.38$ \\ \scriptsize{$[0.37, 0.40]$}} \\ \midrule
0.06  &  \makecell{$0.68$ \\ \scriptsize{$[0.67, 0.68]$}} &  \makecell{$11.33$ \\ \scriptsize{$[11.04, 11.58]$}} &  \makecell{$0.130$ \\ \scriptsize{$[0.123, 0.138]$}} &  \makecell{$0.90$ \\ \scriptsize{$[0.87, 0.92]$}} &  \makecell{$0.29$ \\ \scriptsize{$[0.27, 0.30]$}} \\ \midrule
0.08  &  \makecell{$0.68$ \\ \scriptsize{$[0.68, 0.69]$}} &  \makecell{$11.86$ \\ \scriptsize{$[11.73, 11.96]$}} &  \makecell{$0.127$ \\ \scriptsize{$[0.119, 0.135]$}} &  \makecell{$0.96$ \\ \scriptsize{$[0.94, 0.97]$}} &  \makecell{$0.26$ \\ \scriptsize{$[0.25, 0.27]$}} \\ \midrule
0.10  &  \makecell{$0.68$ \\ \scriptsize{$[0.68, 0.69]$}} &  \makecell{$11.95$ \\ \scriptsize{$[11.87, 12.00]$}} &  \makecell{$0.126$ \\ \scriptsize{$[0.118, 0.135]$}} &  \makecell{$0.97$ \\ \scriptsize{$[0.96, 0.98]$}} &  \makecell{$0.25$ \\ \scriptsize{$[0.24, 0.25]$}} \\ \midrule
0.12  &  \makecell{$0.68$ \\ \scriptsize{$[0.67, 0.68]$}} &  \makecell{$12.04$ \\ \scriptsize{$[12.00, 12.12]$}} &  \makecell{$0.124$ \\ \scriptsize{$[0.116, 0.132]$}} &  \makecell{$0.97$ \\ \scriptsize{$[0.96, 0.98]$}} &  \makecell{$0.25$ \\ \scriptsize{$[0.24, 0.25]$}} \\ \midrule
0.14  &  \makecell{$0.67$ \\ \scriptsize{$[0.67, 0.68]$}} &  \makecell{$12.00$ \\ \scriptsize{$[12.00, 12.00]$}} &  \makecell{$0.128$ \\ \scriptsize{$[0.121, 0.136]$}} &  \makecell{$0.97$ \\ \scriptsize{$[0.96, 0.98]$}} &  \makecell{$0.25$ \\ \scriptsize{$[0.24, 0.25]$}} \\ \midrule
0.16  &  \makecell{$0.66$ \\ \scriptsize{$[0.65, 0.67]$}} &  \makecell{$12.03$ \\ \scriptsize{$[11.98, 12.12]$}} &  \makecell{$0.123$ \\ \scriptsize{$[0.115, 0.132]$}} &  \makecell{$0.98$ \\ \scriptsize{$[0.97, 0.98]$}} &  \makecell{$0.25$ \\ \scriptsize{$[0.24, 0.25]$}} \\ \midrule
0.18  &  \makecell{$0.63$ \\ \scriptsize{$[0.62, 0.65]$}} &  \makecell{$12.36$ \\ \scriptsize{$[12.16, 12.60]$}} &  \makecell{$0.125$ \\ \scriptsize{$[0.118, 0.133]$}} &  \makecell{$0.98$ \\ \scriptsize{$[0.97, 0.98]$}} &  \makecell{$0.24$ \\ \scriptsize{$[0.24, 0.25]$}} \\ \midrule
0.20  &  \makecell{$0.61$ \\ \scriptsize{$[0.60, 0.62]$}} &  \makecell{$12.88$ \\ \scriptsize{$[12.56, 13.24]$}} &  \makecell{$0.121$ \\ \scriptsize{$[0.114, 0.129]$}} &  \makecell{$0.98$ \\ \scriptsize{$[0.97, 0.98]$}} &  \makecell{$0.24$ \\ \scriptsize{$[0.23, 0.24]$}} \\ \midrule
0.22  &  \makecell{$0.59$ \\ \scriptsize{$[0.58, 0.61]$}} &  \makecell{$13.24$ \\ \scriptsize{$[12.88, 13.64]$}} &  \makecell{$0.119$ \\ \scriptsize{$[0.112, 0.127]$}} &  \makecell{$0.98$ \\ \scriptsize{$[0.98, 0.98]$}} &  \makecell{$0.23$ \\ \scriptsize{$[0.22, 0.24]$}} \\ \midrule
0.24  &  \makecell{$0.57$ \\ \scriptsize{$[0.56, 0.59]$}} &  \makecell{$13.93$ \\ \scriptsize{$[13.52, 14.34]$}} &  \makecell{$0.120$ \\ \scriptsize{$[0.113, 0.127]$}} &  \makecell{$0.98$ \\ \scriptsize{$[0.97, 0.98]$}} &  \makecell{$0.22$ \\ \scriptsize{$[0.21, 0.23]$}} \\ \midrule
0.26  &  \makecell{$0.55$ \\ \scriptsize{$[0.53, 0.56]$}} &  \makecell{$14.65$ \\ \scriptsize{$[14.16, 15.13]$}} &  \makecell{$0.114$ \\ \scriptsize{$[0.107, 0.121]$}} &  \makecell{$0.98$ \\ \scriptsize{$[0.97, 0.98]$}} &  \makecell{$0.21$ \\ \scriptsize{$[0.20, 0.22]$}} \\ \midrule
0.28  &  \makecell{$0.54$ \\ \scriptsize{$[0.52, 0.55]$}} &  \makecell{$14.92$ \\ \scriptsize{$[14.44, 15.44]$}} &  \makecell{$0.111$ \\ \scriptsize{$[0.104, 0.117]$}} &  \makecell{$0.98$ \\ \scriptsize{$[0.98, 0.98]$}} &  \makecell{$0.20$ \\ \scriptsize{$[0.20, 0.21]$}} \\ \midrule
0.30  &  \makecell{$0.53$ \\ \scriptsize{$[0.52, 0.55]$}} &  \makecell{$15.08$ \\ \scriptsize{$[14.60, 15.56]$}} &  \makecell{$0.111$ \\ \scriptsize{$[0.105, 0.118]$}} &  \makecell{$0.98$ \\ \scriptsize{$[0.98, 0.98]$}} &  \makecell{$0.20$ \\ \scriptsize{$[0.19, 0.21]$}} \\ 
\bottomrule
\end{tabular}

    \caption{Scramble with tile 8. Results for the metrics for selected noise levels. Shown in square brackets are bootstrapped $95\%$-confidence intervals.
    Here \emph{topo} stands for topographic similarity, \emph{conf} for conflict count, \emph{cont} for context independence, \emph{pos} for positional disentanglement and \emph{acc} for accuracy.
    }
    \label{app:tab:scramble_8}
\end{table}

\clearpage
\newpage
\subsection{Scrambled features}\label{sec:scrambled_features_appendix}
In these experiments, we aimed to understand if the architecture of the output is important. Our architecture has a separate output for 'color' ($\mathcal F_1$)  and for 'shape' ($\mathcal F_2$) (for the architecture details, see Figure \ref{fig:nn_arch}). We permute these features as follows. Let $k:\mathcal F_1\times\mathcal F_2 \mapsto \{0, \ldots, |\mathcal F_1\times\mathcal F_2| - 1\}=:\tilde{\mathcal F}$ be any bijection from the color features and the shape features. 
Let $\pi:\tilde{\mathcal F}\mapsto \tilde{\mathcal F}$ be a random permutation, sampled at the beginning of the experiment. We assign new features via mapping:
$k^{-1}(\pi(k((f_c, f_s))))$.
Clearly, these are no longer colors and shapes (unless $\pi$ is an identity). However, they are still factorized along two dimensions and we still use the factorized output. We can see that behavior is similar to the 'standard' features (high levels of compositionality with the characteristic extremum point). This poses strong evidence that output architecture is a strong inductive bias for compositionality.

\begin{figure}[H]
    \centering
    \includegraphics[width=\textwidth]{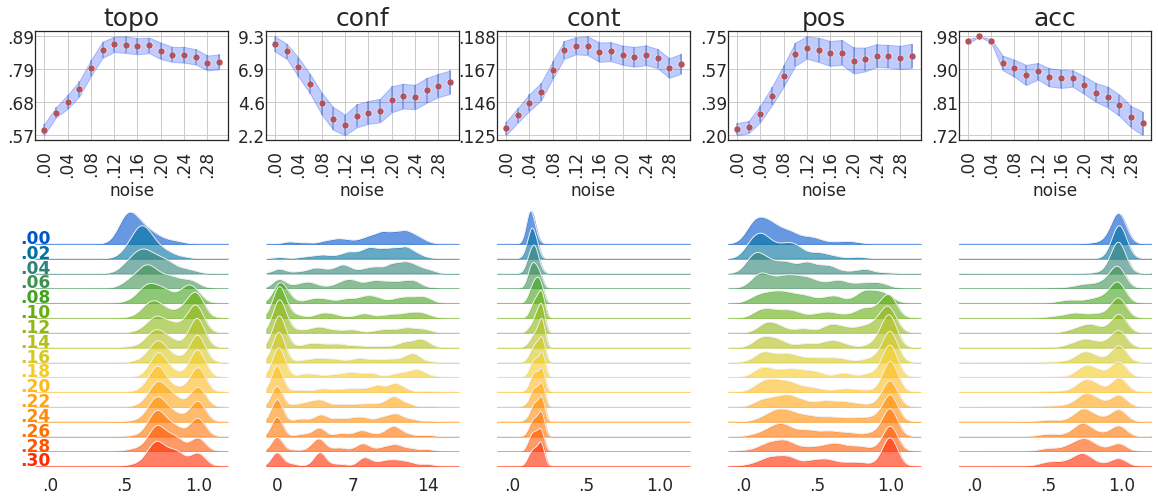}
    \caption{Scrambled features. Top panel: average value of metrics for various noise levels. The shaded area corresponds to bootstrapped $95\%$-confidence intervals for this estimator. Bottom panel: kernel density estimators for metrics and noise levels across seeds.
    Here \emph{topo} stands for topographic similarity, \emph{conf} for conflict count, \emph{cont} for context independence, \emph{pos} for positional disentanglement and \emph{acc} for accuracy.}
    \label{fig:app:scrambled_labels}
\end{figure}

\begin{table}
    \centering
    \begin{tabular}{l|ccccc}
\toprule
noise &                                              topo &                                              conf &                                                 cont &                                               pos &                                               acc \\ \midrule
0.00  &  \makecell{$0.59$ \\ \scriptsize{$[0.57, 0.61]$}} &  \makecell{$8.72$ \\ \scriptsize{$[8.17, 9.26]$}} &  \makecell{$0.129$ \\ \scriptsize{$[0.125, 0.133]$}} &  \makecell{$0.24$ \\ \scriptsize{$[0.20, 0.27]$}} &  \makecell{$0.97$ \\ \scriptsize{$[0.96, 0.98]$}} \\ \midrule
0.02  &  \makecell{$0.64$ \\ \scriptsize{$[0.63, 0.66]$}} &  \makecell{$8.21$ \\ \scriptsize{$[7.65, 8.73]$}} &  \makecell{$0.138$ \\ \scriptsize{$[0.133, 0.142]$}} &  \makecell{$0.25$ \\ \scriptsize{$[0.21, 0.28]$}} &  \makecell{$0.98$ \\ \scriptsize{$[0.98, 0.98]$}} \\ \midrule
0.04  &  \makecell{$0.68$ \\ \scriptsize{$[0.66, 0.70]$}} &  \makecell{$7.07$ \\ \scriptsize{$[6.39, 7.75]$}} &  \makecell{$0.145$ \\ \scriptsize{$[0.140, 0.151]$}} &  \makecell{$0.32$ \\ \scriptsize{$[0.28, 0.37]$}} &  \makecell{$0.97$ \\ \scriptsize{$[0.97, 0.98]$}} \\ \midrule
0.06  &  \makecell{$0.72$ \\ \scriptsize{$[0.70, 0.75]$}} &  \makecell{$5.88$ \\ \scriptsize{$[5.18, 6.58]$}} &  \makecell{$0.153$ \\ \scriptsize{$[0.147, 0.158]$}} &  \makecell{$0.42$ \\ \scriptsize{$[0.37, 0.47]$}} &  \makecell{$0.91$ \\ \scriptsize{$[0.89, 0.93]$}} \\ \midrule
0.08  &  \makecell{$0.79$ \\ \scriptsize{$[0.76, 0.82]$}} &  \makecell{$4.48$ \\ \scriptsize{$[3.69, 5.24]$}} &  \makecell{$0.166$ \\ \scriptsize{$[0.161, 0.172]$}} &  \makecell{$0.53$ \\ \scriptsize{$[0.47, 0.59]$}} &  \makecell{$0.90$ \\ \scriptsize{$[0.88, 0.92]$}} \\ \midrule
0.10  &  \makecell{$0.85$ \\ \scriptsize{$[0.82, 0.88]$}} &  \makecell{$3.40$ \\ \scriptsize{$[2.58, 4.21]$}} &  \makecell{$0.179$ \\ \scriptsize{$[0.173, 0.185]$}} &  \makecell{$0.65$ \\ \scriptsize{$[0.59, 0.72]$}} &  \makecell{$0.88$ \\ \scriptsize{$[0.85, 0.91]$}} \\ \midrule
0.12  &  \makecell{$0.87$ \\ \scriptsize{$[0.84, 0.89]$}} &  \makecell{$2.93$ \\ \scriptsize{$[2.21, 3.69]$}} &  \makecell{$0.182$ \\ \scriptsize{$[0.176, 0.187]$}} &  \makecell{$0.69$ \\ \scriptsize{$[0.63, 0.75]$}} &  \makecell{$0.89$ \\ \scriptsize{$[0.87, 0.91]$}} \\ \midrule
0.14  &  \makecell{$0.87$ \\ \scriptsize{$[0.84, 0.89]$}} &  \makecell{$3.57$ \\ \scriptsize{$[2.77, 4.42]$}} &  \makecell{$0.182$ \\ \scriptsize{$[0.176, 0.188]$}} &  \makecell{$0.68$ \\ \scriptsize{$[0.61, 0.74]$}} &  \makecell{$0.88$ \\ \scriptsize{$[0.85, 0.90]$}} \\ \midrule
0.16  &  \makecell{$0.86$ \\ \scriptsize{$[0.83, 0.89]$}} &  \makecell{$3.81$ \\ \scriptsize{$[3.02, 4.65]$}} &  \makecell{$0.178$ \\ \scriptsize{$[0.171, 0.184]$}} &  \makecell{$0.66$ \\ \scriptsize{$[0.59, 0.73]$}} &  \makecell{$0.87$ \\ \scriptsize{$[0.85, 0.90]$}} \\ \midrule
0.18  &  \makecell{$0.86$ \\ \scriptsize{$[0.84, 0.89]$}} &  \makecell{$3.95$ \\ \scriptsize{$[3.13, 4.82]$}} &  \makecell{$0.178$ \\ \scriptsize{$[0.172, 0.184]$}} &  \makecell{$0.66$ \\ \scriptsize{$[0.59, 0.73]$}} &  \makecell{$0.87$ \\ \scriptsize{$[0.85, 0.89]$}} \\ \midrule
0.20  &  \makecell{$0.85$ \\ \scriptsize{$[0.82, 0.87]$}} &  \makecell{$4.75$ \\ \scriptsize{$[3.84, 5.71]$}} &  \makecell{$0.176$ \\ \scriptsize{$[0.170, 0.182]$}} &  \makecell{$0.62$ \\ \scriptsize{$[0.55, 0.69]$}} &  \makecell{$0.85$ \\ \scriptsize{$[0.83, 0.88]$}} \\ \midrule
0.22  &  \makecell{$0.83$ \\ \scriptsize{$[0.81, 0.86]$}} &  \makecell{$4.99$ \\ \scriptsize{$[4.12, 5.88]$}} &  \makecell{$0.175$ \\ \scriptsize{$[0.168, 0.181]$}} &  \makecell{$0.62$ \\ \scriptsize{$[0.56, 0.69]$}} &  \makecell{$0.83$ \\ \scriptsize{$[0.81, 0.86]$}} \\ \midrule
0.24  &  \makecell{$0.83$ \\ \scriptsize{$[0.81, 0.86]$}} &  \makecell{$4.92$ \\ \scriptsize{$[4.06, 5.81]$}} &  \makecell{$0.176$ \\ \scriptsize{$[0.169, 0.182]$}} &  \makecell{$0.64$ \\ \scriptsize{$[0.58, 0.71]$}} &  \makecell{$0.82$ \\ \scriptsize{$[0.80, 0.85]$}} \\ \midrule
0.26  &  \makecell{$0.83$ \\ \scriptsize{$[0.80, 0.85]$}} &  \makecell{$5.41$ \\ \scriptsize{$[4.53, 6.30]$}} &  \makecell{$0.174$ \\ \scriptsize{$[0.167, 0.180]$}} &  \makecell{$0.64$ \\ \scriptsize{$[0.58, 0.70]$}} &  \makecell{$0.80$ \\ \scriptsize{$[0.77, 0.83]$}} \\ \midrule
0.28  &  \makecell{$0.81$ \\ \scriptsize{$[0.78, 0.83]$}} &  \makecell{$5.73$ \\ \scriptsize{$[4.93, 6.58]$}} &  \makecell{$0.168$ \\ \scriptsize{$[0.161, 0.174]$}} &  \makecell{$0.63$ \\ \scriptsize{$[0.57, 0.70]$}} &  \makecell{$0.77$ \\ \scriptsize{$[0.74, 0.80]$}} \\ \midrule
0.30  &  \makecell{$0.81$ \\ \scriptsize{$[0.79, 0.83]$}} &  \makecell{$5.98$ \\ \scriptsize{$[5.15, 6.83]$}} &  \makecell{$0.170$ \\ \scriptsize{$[0.164, 0.176]$}} &  \makecell{$0.64$ \\ \scriptsize{$[0.58, 0.71]$}} &  \makecell{$0.75$ \\ \scriptsize{$[0.72, 0.78]$}} \\ 
\bottomrule
\end{tabular}

    \caption{Scramble with tile 32. Results for the metrics for selected noise levels. Shown in square brackets are bootstrapped $95\%$-confidence intervals.
    Here \emph{topo} stands for topographic similarity, \emph{conf} for conflict count, \emph{cont} for context independence, \emph{pos} for positional disentanglement and \emph{acc} for accuracy.
    }
    \label{app:tab:scrambles_labels}
\end{table}

\clearpage
\newpage
\section{Average topographical similarity for random languages}\label{app:sec:topo}
In this section, we compute the average performance of the topographic similarity metric for a random language when a message is of length $K=2$. 
For simplicity, we assume that the feature space and the alphabet space are the same, and equal $\mathcal F=\{1, \ldots, n\}^2$. 
Then the topographic similarity for language $\ell\colon \mathcal F\to\mathcal F$ is defined as 
\[
\text{topo}(\ell) = corr(R(\rho(F_0, F_1)), R(\rho(\ell(F_0), \ell(F_1))), 
\]
where $F_0, F_1$ are uniform random variables on $\mathcal F$ and $R$ is the rank function. The random variable $\rho(F_0, F_1)$ takes values $\{0, 1, 2\}$ with probabilities $p_0, p_1, p_2$. Since $\rho(F_0, F_1)$ is discrete, there are different conventions for defining function $R$. 
Typical choices for ranks are: (i) "min-ranks" $R(0)=0, R(1)=p_0, R(2)=p_0+p_1$;
(ii) "max-ranks" $R(0)=p_0, R(1)=p_0+p_1, R(2)=1$; (iii) "average-ranks" $R(0)=p_0/2, R(1)=(p_0+p_1)/2, R(2)=(p_0+p_1+1)/2$. 
Lemma \ref{lem:app:topo} gives the formula for $\mathbb E_{\ell\sim U}[\text{topo}(\ell)]$. 

\begin{lem}\label{lem:app:topo}
Let $R$ be a rank function. Then 
\begin{align*}
\mathbb E_{\ell\sim U}\left[\text{topo}(\ell)\right] 
= 
\frac{p_0 R(0)^2 + \frac{2p_1}{n+1}R(1)^2 + \frac{p_2(n-1)}{n+1}R(2)^2 + \frac{4p_2}{n+1}R(1)R(2)- (\sum_{i=0}^2 p_iR(i))^2}{\sum_{i=0}^2p_iR(i)^2 - (\sum_{i=0}^2 p_iR(i))^2},
\end{align*}
where $U$ is a uniform distribution among all bijective $\ell\colon \mathcal F\to\mathcal F$, 
$n_0 = n^2$, $n_1=2n^2(n-1)$, $n_2=n^2(n-1)^2$, and $p_i = n_i/n^4$. 
\end{lem}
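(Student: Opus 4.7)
The plan is to reduce the calculation of $\mathbb{E}_{\ell \sim U}[\operatorname{corr}(X,Y)]$, with $X = R(\rho(F_0,F_1))$ and $Y = R(\rho(\ell(F_0),\ell(F_1)))$, to a single cross-term expectation, and then to evaluate that term by conditioning on $\rho(F_0,F_1)$.

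First, I would observe that the denominator of $\operatorname{corr}(X,Y)$ is already \emph{deterministic} in $\ell$. Indeed, for every bijection $\ell$ of $\mathcal F$ the pair $(\ell(F_0),\ell(F_1))$ has the same joint distribution as $(F_0,F_1)$ (both are i.i.d.\ uniform on $\mathcal F$), so the marginal laws of $X$ and $Y$ coincide. Hence $\mathbb{E}[Y]=\mathbb{E}[X]=\sum_i p_i R(i)$ and $\operatorname{Var}(Y)=\operatorname{Var}(X) = \sum_i p_i R(i)^2 - (\sum_i p_i R(i))^2$, independently of $\ell$. Consequently
\[
\mathbb{E}_{\ell\sim U}\!\bigl[\operatorname{corr}(X,Y)\bigr] = \frac{\mathbb{E}_{\ell\sim U}\mathbb{E}[XY] - (\sum_i p_i R(i))^2}{\sum_i p_i R(i)^2 - (\sum_i p_i R(i))^2},
\]
and the task reduces to computing the cross term $\mathbb{E}_{\ell\sim U}\mathbb{E}_{F_0,F_1}[XY]$.

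For this, I would swap the order of expectations and condition on $(F_0,F_1)$, exploiting the key symmetry: for any fixed distinct $f_0 \ne f_1$ in $\mathcal F$, the image $(\ell(f_0),\ell(f_1))$ under a uniform random bijection $\ell$ is uniformly distributed on ordered pairs of distinct elements of $\mathcal F$. Among the $n^2(n^2-1)$ such pairs, $n_1 = 2n^2(n-1)$ are at Hamming distance $1$ and $n_2 = n^2(n-1)^2$ are at distance $2$, giving the conditional probabilities $2/(n+1)$ and $(n-1)/(n+1)$. Hence $\mathbb{E}_\ell[Y\mid F_0 \ne F_1] = \tfrac{2}{n+1}R(1)+\tfrac{n-1}{n+1}R(2)$, while on $\{F_0=F_1\}$ the value is $R(0)$.

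Splitting the outer expectation into the three cases $\rho(F_0,F_1)\in\{0,1,2\}$, with probabilities $p_0,p_1,p_2$, yields
\[
\mathbb{E}_{\ell\sim U}\mathbb{E}[XY] = p_0 R(0)^2 + \tfrac{2p_1}{n+1}R(1)^2 + \tfrac{p_2(n-1)}{n+1}R(2)^2 + \tfrac{p_1(n-1)+2p_2}{n+1}R(1)R(2).
\]
The only step left is the bookkeeping identity $p_1(n-1)=2p_2$, which follows immediately from $p_1=2(n-1)/n^2$ and $p_2=(n-1)^2/n^2$, so the coefficient of $R(1)R(2)$ simplifies to $4p_2/(n+1)$. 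Substituting into the displayed ratio above gives exactly the claimed formula. The conceptual step (recognising that the denominator is $\ell$-independent so that one does not have to take the expectation of a ratio) is the only real obstacle; the rest is a case analysis that runs smoothly.
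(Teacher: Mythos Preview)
Your proposal is correct and follows essentially the same approach as the paper: observe that the marginal of $Y$ equals that of $X$ so the denominator is $\ell$-independent, reduce to the cross term $\mathbb{E}_{\ell}\mathbb{E}[XY]$, swap the expectations, and use that for distinct $f_0\neq f_1$ a uniform bijection sends them to a uniform ordered distinct pair (the paper phrases this as counting the $n_i(n^2-2)!$ bijections). Your explicit bookkeeping identity $p_1(n-1)=2p_2$ is the one step the paper leaves implicit when passing from its product form $(R(1)p_1+R(2)p_2)\bigl(\tfrac{2}{n+1}R(1)+\tfrac{n-1}{n+1}R(2)\bigr)$ to the stated coefficient $4p_2/(n+1)$.
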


\begin{proof}
Notice that $\rho(F_0, F_1)$ and $\rho(\ell(F_0), \ell(F_1))$ have the same distribution described by $p_i$. 
Define $\alpha=R(\rho(F_0, F_1))$ and $\alpha_\ell=R(\rho(\ell(F_0), \ell(F_1)))$. 
Consequently, 
\[
\mathbb E_{\ell\sim U}\left[\text{topo}(\ell)\right] 
= \frac{\mathbb E_{\ell\sim U}\mathbb E[\alpha\alpha_\ell]-(\mathbb E[\alpha])^2}{Var(\alpha)}.
\]
Since, $\mathbb E[\alpha] = \sum_{i=0}^2p_iR(i)$ and  
$Var(\alpha) =\sum_{i=0}^2p_iR(i)^2 - (\mathbb E[\alpha])^2$, 
it remains to compute $\mathbb E_{\ell\sim U}\mathbb E[\alpha\alpha_\ell]$:
\begin{align*}
&\mathbb E_{\ell\sim U}\mathbb E[\alpha\alpha_\ell] =
\mathbb E \mathbb E_{\ell\sim U}[\alpha\alpha_\ell]
=\frac{1}{n^4}\sum_{f_0, f_1\in \mathcal F}R(\rho(f_0, f_1)) E_{\ell\sim U}[R(\rho(\ell(f_0), \ell(f_1)))]\\
&=\frac{1}{n^4}\sum_{i=0}^2\sum_{f_0, f_1, \rho(f_0, f_1)=i}R(i) \mathbb E_{\ell\sim U}[R(\rho(\ell(f_0), \ell(f_1)))]\\
&=p_0R(0)^2 + \frac{1}{n^4}\sum_{i=1}^2\sum_{f_0, f_1, \rho(f_0, f_1)=i}R(i) E_{\ell\sim U}[R(\rho(\ell(f_0), \ell(f_1)))]\\
&=p_0R(0)^2 + \frac{1}{n^4}\sum_{i=1}^2\sum_{f_0, f_1, \rho(f_0, f_1)=i}R(i) \left(R(1)\frac{2}{n+1} + R(2)\frac{n-1}{n+1}\right)\\
&=p_0R(0)^2 + \left(R(1)p_1 + R(2)p_2\right) \left(R(1)\frac{2}{n+1} + R(2)\frac{n-1}{n+1}\right).
\end{align*}
The second to last equality follows from the fact that there $n_i(n^2-2)!$ bijections $\ell$ that map $f_0\ne f_1$ to $\ell(f_0), \ell(f_1)$ such that $\rho(\ell(f_0), \ell(f_1))=i$, for $i=1,2$. 
\end{proof}

\section{Optimality of compositional communication}\label{app:sec:optimality}

When features are explicitly stated, we can write $\mathcal F=\prod_{i=1}^K \mathcal F_i$, where $\mathcal F_i$ is the space of values of $i$-th feature and $K$ is the number of features and a message length. We assume that $|\mathcal F_i|=|\mathcal A_s|$ and that $|\mathcal A_s|\ge 2$. We will assume that a language is a mapping $\ell : \mathcal F\to \mathcal A_s^K$. 
We said that the language is compositional if a change in one feature only impacts a corresponding index of the message.
Formally, we say that $\ell$ is compositional if and only if for every $k=0,\ldots, K$, and $\ff_0, \ff_1\in \mathcal F$,  $\rho(\ff_0, \ff_1) = k\iff \rho(\ell(\ff_0),\ell(\ff_1))=k$. Here $\rho$ stands for the Hamming distance.

Recall that the corrupted message corresponding to $\mathbf{f}\in\mathcal F$ is denoted by $\mathbf{f}' := \ell^{-1}(\ell(\mathbf{f})')$, and
the two loss function are defined as:
\begin{equation*}
\begin{split}
	J_1(\ell, \mathbf{f}) &= F(e_1, \ldots, e_K), \quad e_j := \mathbb P(\mathbf{f}'_j\ne \mathbf{f}_j),\\
    J_2(\ell, \mathbf{f}) &= \mathbb E[H(\rho(\ff', \ff))],
\end{split}
\end{equation*}
where $F$ is a non-negative function such that $F(\varepsilon, \ldots, \varepsilon)=0$, and $H$ is a non-negative,  increasing function. 

\begin{figure}
    \centering
\begin{tikzpicture}[scale=2]
    \draw (-1.5,0) circle (1cm);
    \draw (1.5,0) circle (1cm);

    \node[label=$\ff_0$] (f0) at (-1.5, 0) {};
    \node[label=$\ff_1$] (f1) at (-1, .866) {};
    \node[label=$\ff_2$] (f2) at (-.8, .1) {};
    \node[label=$\ss_0$] (s0) at (1.5, 0) {};
    \node[label=$\ss_2$] (s2) at (1, .866) {};
    \node[label=$\ss_1$] (s1) at (.8, .1) {};

    \fill[black] (f0) circle (1pt);
    \fill[black] (f1) circle (1pt);
    \fill[black] (f2) circle (1pt);
    \fill[black] (s0) circle (1pt);
    \fill[black] (s1) circle (1pt);
    \fill[black] (s2) circle (1pt);
    
    \node at (-1.5, -1.2) {$\{\ff\in \mathcal F:\rho(\ff, \ff_0) = k^*\}$};
    \node at (1.5, -1.2) {$\{\ss\in \mathcal A_s^K:\rho(\ss, \ss_0) = k^*\}$};

    \draw[-{>[scale=2, width=3]}] (f0) to [bend right=45]  node[midway, above] {$\ell$} (s0);
    \draw[-{>[scale=2, width=3]}] (f1) to [bend right=-25] node[near start, above] {$\ell$} (s1);
    \draw[-{>[scale=2, width=3]}] (s2) to [bend right=25] node[near start, above] {$\ell^{-1}$} (f2);
    \draw[-{>[scale=2, width=3]}, dashed] (f1) to [bend right=-45] node[midway, above] {$\tilde{\ell}$} (s2);
    \draw[-{>[scale=2, width=3]}, dashed] (f2) to [bend right=25] node[midway, above] {$\tilde{\ell}$}  (s1);
    \draw (s0) -- node[near end, right] {$k^*$} (s2);
    \draw (f0) -- node[near end, left] {$k^*$} (f1);
\end{tikzpicture}
    \caption{Illustration of the construction given in the proof of Theorem \ref{app:thm:compostionality}. 
    Two circles represent the ball of radius $k^*$ given by $\rho$ (in $\mathcal F$ and $\mathcal A_s^K$, respectively). 
    The solid arrow represents the language $\ell$, and the dashed arrows show the swap that is performed when defining $\tilde{\ell}$. $\tilde{\ell}$ improves upon $\ell$ by reducing the number of compositionality violations.  
    }
    \label{fig:app:tikz}
\end{figure}
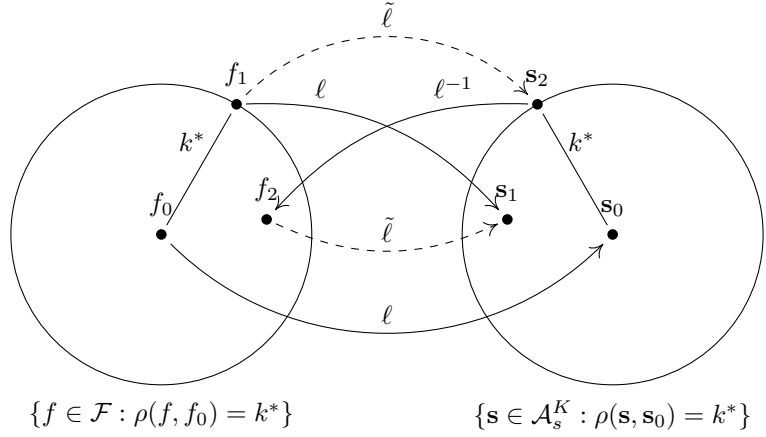

The noisy channel transforms a message $\mathbf{s}\in\mathcal A^K_s$, into a corrupted message  
$\mathbf{s'}\in\mathcal A_s^K$, by replacing each symbol with a different symbol, independently and with probability $\varepsilon \in (0,1)$.
More formally, the conditional distribution of $\ssp$, given $\mathbf{s}$, is expressed by the following formula: 
\begin{equation}\label{eq:app:prob}
\mathbb P(\mathbf{s'}=\hat{\mathbf{s}}|\mathbf{s})=(1-\varepsilon)^{K-\rho(\hat{\mathbf{s}}, \mathbf{s})}\varepsilon^{\rho(\mathbf{\hat{s}}, \mathbf{s})}\left(\frac{1}{|\mathcal A_s|-1}\right)^{\rho(\mathbf{\hat{s}}, \mathbf{s})}, \qquad \text{ for any } \hat{\ss}\in \mathcal A_s^K.
\end{equation}
Indeed, there has to be $\rho(\hat{\mathbf{s}}, \mathbf{s})$ noise flips (hence the first two terms on the right-hand side \eqref{eq:app:prob}), and each flip changes one coordinate of $\mathbf{s}$ to the corresponding coordinate of $\hat{\mathbf{s}}$ with probability $1/(|\mathcal A_s|-1)$. 

The following result is a more detailed version of Theorem \ref{thm:optimality}.

\begin{thm}\label{app:thm:compostionality}
Assume $\mathcal F=\mathcal X$, and 
$\varepsilon < (|\mathcal A_s|-1)/|\mathcal A_s|$. A compositional language minimizes $J_1$ and $J_2$ over all one-to-one languages. Furthermore, for arbitrary $\ff\in\mathcal F$, $\min J_2(\ff, \ell) = \mathbb E[H(B_\varepsilon)]$, where $B_\varepsilon$ is a Binomial distribution with success probability $\varepsilon$.
Moreover, $\ell$ is optimal for $J_2$ if and only if $\ell$ is compositional.
\end{thm}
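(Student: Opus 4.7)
I would exploit the radial symmetry of the noisy channel. Conditional on the sender transmitting $\ss_0 = \ell(\ff)$, the probability of receiving $\ss$ depends only on the Hamming distance to $\ss_0$:
\begin{equation*}
\mathbb P(\ss' = \ss \mid \ss_0) = q\bigl(\rho(\ss, \ss_0)\bigr), \qquad q(d) := (1-\varepsilon)^{K-d}\bigl(\tfrac{\varepsilon}{|\mathcal A_s|-1}\bigr)^{d}.
\end{equation*}
The assumption $\varepsilon < (|\mathcal A_s|-1)/|\mathcal A_s|$ is precisely what makes $q$ strictly decreasing. Changing variables from $\ss$ to $\ff' = \ell^{-1}(\ss)$ yields the clean form
\begin{equation*}
J_2(\ell, \ff) \;=\; \sum_{\ff' \in \mathcal F} q\bigl(\rho(\ell(\ff'), \ell(\ff))\bigr) \cdot H\bigl(\rho(\ff', \ff)\bigr).
\end{equation*}

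\textbf{The minimum value and optimality of compositional languages.} For a compositional $\ell$, Hamming distance is preserved, so $\rho(\ell(\ff'), \ell(\ff)) = \rho(\ff', \ff)$; grouping terms by distance with $N_k := \binom{K}{k}(|\mathcal A_s|-1)^k$ gives
\begin{equation*}
J_2(\ell, \ff) \;=\; \sum_{k=0}^{K} N_k\, q(k)\, H(k) \;=\; \mathbb E[H(B_\varepsilon)],
\end{equation*}
identifying the candidate minimum. To show this is indeed the minimum over all one-to-one $\ell$, I would invoke the rearrangement inequality on the index set $\mathcal F$. The multiset $\{H(\rho(\ff',\ff)) : \ff' \in \mathcal F\}$ equals $\{H(k) \text{ with multiplicity } N_k\}$, and because $\ell$ is bijective the multiset $\{q(\rho(\ell(\ff'),\ell(\ff))) : \ff' \in \mathcal F\}$ equals $\{q(k) \text{ with multiplicity } N_k\}$. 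Since $H$ is strictly increasing and $q$ strictly decreasing in their argument, the sum is minimised by the anti-sorted pairing $k \leftrightarrow k$, which is exactly what compositionality realises. The case of $J_1$ is immediate: a compositional $\ell$ yields $e_j = \varepsilon$ for every $j$, so $J_1 = F(\varepsilon,\ldots,\varepsilon) = 0$, which is optimal by $F \ge 0$.

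\textbf{Uniqueness.} For the converse direction of $J_2$, strict monotonicity of $H$ and $q$ upgrades the rearrangement conclusion to a unique pairing of value-classes: any optimal $\ell$ must send each Hamming sphere around $\ff$ to the corresponding Hamming sphere around $\ell(\ff)$, otherwise a swap of two points lying in distinct layers strictly reduces the sum. This forces $\rho(\ell(\ff'),\ell(\ff)) = \rho(\ff',\ff)$ for all $\ff,\ff'$, i.e.\ $\ell$ is an isometry of the Hamming cube $\mathcal A_s^K$. To finish I would invoke the classical fact that the isometry group of the Hamming cube is the wreath product $S_{|\mathcal A_s|} \wr S_K$, whose elements act by permuting the $K$ coordinates and permuting the symbols separately within each coordinate --- i.e.\ exactly the class of compositional languages.

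\textbf{Main obstacle.} The step I expect to require the most care is the classification of Hamming-cube isometries. A self-contained argument proceeds by induction on $K$: fix any $\ff$ and observe that the set of its neighbours naturally partitions into $K$ ``axis lines'', which are characterised intrinsically by the property that two neighbours lie on the same axis line iff they are themselves at mutual Hamming distance $1$ (all other pairs of neighbours are at distance $2$). An isometry sends neighbours to neighbours and preserves this relation, hence sends axis lines at $\ff$ to axis lines at $\ell(\ff)$; this pins down the coordinate permutation $\sigma$ and reduces the problem to per-coordinate bijections, after which induction on $K$ completes the classification.
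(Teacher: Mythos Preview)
Your argument is correct and, at its core, the same as the paper's: the paper proves optimality by an explicit two-point swap that strictly lowers $J_2$ whenever $\ell$ fails to be Hamming-distance preserving, which is precisely the mechanism behind the rearrangement inequality you invoke. Packaging it as rearrangement is cleaner, and your identification of the equality case (each sphere around $\ff$ must map to the corresponding sphere around $\ell(\ff)$, for every $\ff$) is exactly the paper's uniqueness conclusion.

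Where you diverge is in the last block. The paper's \emph{formal} definition of ``compositional'' in the appendix is literally
\[
\rho(\ff_0,\ff_1)=k \iff \rho(\ell(\ff_0),\ell(\ff_1))=k \quad\text{for all }\ff_0,\ff_1,k,
\]
i.e.\ ``$\ell$ is an isometry''. So once you have established that any $J_2$-optimal $\ell$ is an isometry, you are already done; the wreath-product classification $S_{|\mathcal A_s|}\wr S_K$ and the axis-line induction are interesting but unnecessary for the theorem as stated. (They do bridge the appendix definition with the informal coordinate-wise one in Section~3.1, which is a worthwhile remark, but not part of the proof burden.) Incidentally, your axis-line criterion ``two neighbours lie on the same axis iff they are at mutual distance~$1$'' is vacuous when $|\mathcal A_s|=2$, since then each axis contributes a single neighbour; the binary case would need a separate (standard) argument, though again none of this is actually required here. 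Your treatment of $J_1$ matches the paper's.
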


Notice that, since the assertion holds for arbitrary $\ff\in\mathcal F$, the language $\ell$ is compositional if and only if it is optimal for $\mathbb E_{\ff\sim\nu}[J_2(\ff, \ell)]$, where $\nu\in\mathcal P(\mathcal F)$ is any distribution such that $supp(\nu)=\mathcal F$.

\begin{proof}
We start by proving the claim for $J_2$. 
Fix $\ff_0\in\mathcal F$ and denote $\ss_0=\ell(\ff_0)$. 
Suppose that $\ell$ is not compositional. 
Then, there exists $k>0, \ff_1\in\mathcal F, \ss_2\in\mathcal A_s^K$ such that $\rho(\ff_0, \ff_1)=k$, $\rho(\ss_0, \ss_2)=k$, $\rho(\ff_0, \ell^{-1}(\ss_2))\ne k$, and $\rho(\ss_0, \ell(\ff_1))\ne k$.
Let $k^*$ be the biggest among mentioned $k$'s and denote $\ss_1=\ell(\ff_1), \ff_2=\ell^{-1}(\ss_2)$. %
By the definition of $k^*$, we have
\[
\rho(\ss_0, \ss_1)< k^*, \qquad \rho(\ff_0, \ff_2)<k^*.
\]
Since $\varepsilon<(|\mathcal A_s|-1)/\mathcal |A_s|$, the probability $\mathbb P(\ss'=\hat{\ss}|\ss)$  is a decreasing function of $\rho(\hat{\ss},\ss)$ (see equation \eqref{eq:app:prob}). It follows that 
\[
\mathbb P(\ss'=\ss_1|\ss_0) > \mathbb P(\ss'=\ss_2|\ss_0).
\]
We construct a new language $\tilde{\ell}$ (see Figure \ref{fig:app:tikz})
\[
\tilde{\ell}(\ff)=\begin{cases}
\ss_{1} & \ff=\ff_2,\\
\ss_{2} & \ff=\ff_{1},\\
\ell(\ff) & \text{otherwise}.
\end{cases}
\]
As a result, 
\begin{align*}
\rho\left(\ff_{0},\tilde{\ell}^{-1}(\ss_{1})\right)-\rho\left(\ff_{0},\ell^{-1}(\ss_{1})\right)
&=\rho\left(\ff_{0},\ff_2\right)-\rho\left(\ff_{0},\ff_1\right)<0,\\
\rho\left(\ff_{0},\tilde{\ell}^{-1}(\ss_{2})\right)-\rho\left(\ff_{0},\ell^{-1}(\ss_{2})\right)
&=\rho\left(\ff_{0},\ff_1\right)-\rho\left(\ff_{0},\ff_2\right)>0.
\end{align*}
Putting things together and using the fact that $H$ is increasing, we get
\begin{align*}
&J_2(\ff_0, \tilde{\ell}) - J_2(\ff_0, \ell) 
= \mathbb E\left[H\left(\rho(\tilde{\ell}^{-1}(\tilde{\ell}(\ff_0)'), \ff_0)\right)\right] - 
 \mathbb E\left[H\left(\rho(\ell^{-1}(\ell(\ff_0)'), \ff_0)\right)\right]\\
&=
\mathbb P(\ssp=\ss_1|\ss_0))\left\{H\left(\rho\left(\ff_{0},\tilde{\ell}^{-1}(\ss_{1})\right)\right)-H\left(\rho\left(\ff_{0},\ell^{-1}(\ss_{1})\right)\right)\right\}\\
&+\mathbb P(\ssp=\ss_2|\ss_0)\left\{H\left(\rho\left(\ff_{0},\tilde{\ell}^{-1}(\ss_{2})\right)\right)-H\left(\rho\left(\ff_{0},\ell^{-1}(\ss_{2})\right)\right)\right\}\\
&=\left\{H\left(\rho(\ff_0,\ff_1)\right)-H\left(\rho(\ff_0,\ff_2)\right)\right\}
\left(\mathbb P(\ssp=\ss_2|\ss_0)-\mathbb P(\ssp=\ss_1|\ss_0)\right)<0.
\end{align*}
Consequently, for any non-compositional $\ell$ we can strictly decrease its loss value $J_2(\ff_0, \ell)$. Since the loss is nonnegative and there is a finite number of languages, optimal $\ell$ has to be compositional. 
For compositional language $\ell$, 
\begin{equation}\label{eq:pequalsepsilon}
\begin{split}
J_2(\ff_0, \ell) &= \mathbb E\left[H(\rho(\ss_0', \ss_0))\right]=
\sum_{\hat{\ss}\in\mathcal A_s^K} H(\rho(\hat{\ss},\ss_0)) \mathbb P(\ss_0'=\hat{\ss}|\ss_0)\\
&=\sum_{k=0}^{K}\sum_{\stackrel{\hat{\ss}\in\mathcal A_s^K}{\rho(\hat{\ss}, \ss_0)=k}} H(k)\mathbb{P}\left(\ss'=\hat{\ss}|\ss_{0}\right)
=\sum_{k=0}^K H(k){K \choose k}\varepsilon^k (1-\varepsilon)^{K-k}= \mathbb E[H(B_\varepsilon)],
\end{split}
\end{equation}
where $B_\varepsilon$ is a binomial random variable with success probability $\varepsilon$. 
This ends the proof for $J_2$.

To prove the assertion for $J_1$ it is enough to notice that a compositional language satisfies 
$\mathbb P(\mathbf{f}'_j\ne \mathbf{f}_j)=\varepsilon$. This follows by \eqref{eq:pequalsepsilon} and the fact that, by symmetry, 
$\mathbb P(\mathbf{f}'_j\ne \mathbf{f}_j)$ does not depend on $j$ (for compositional $\ell$).
\end{proof}

As a corollary, we see that if $H(x)=x/K$, $J_2(\ff_0, \ell)=\varepsilon$ for any compositional language $\ell$.

Notice that Theorem \ref{thm:all_is_good} follows from the fact that any permutation $\pi:\mathcal F\to\mathcal F$ is a bijection. 

\end{document}